\documentclass[twoside,11pt]{article}

\usepackage{blindtext}

\usepackage{jmlr2e}

\usepackage{amsfonts}
\usepackage{caption}
\usepackage{bm}
\usepackage{CJK}
\usepackage{graphicx,psfrag,color}
\usepackage{fancyhdr}
\usepackage{amsmath}
\usepackage{algorithm}
\usepackage{algpseudocode}
\usepackage{setspace}
\usepackage{mathrsfs}
\usepackage{indentfirst}
\usepackage{booktabs}
\usepackage{multirow}
\usepackage{epsfig, epsfig, graphics, lscape, amsbsy, amstext, natbib}
\newtheorem{assumption}{Assumption}

\usepackage{verbatim}
\usepackage[normalem]{ulem}

\usepackage{lastpage}

\begin{document}

\title{Differentially Private Conformal Prediction}

\author{\name Jiamei Wu$^*$ \email jmwu01@bjtu.edu.cn \\
       \addr School of Mathematics and Statistics\\
       Beijing Jiaotong University\\
       Beijing, Beijing 100044, China
       \AND
       \name Ce Zhang$^*$ \email ce5@ualberta.ca \\
       \addr Department of Mathematical and Statistical Sciences\\
       University of Alberta\\
       Edmonton, AB T6G 2G1, Canada
       \AND
       \name Zhipeng Cai \email  23121698@bjtu.edu.cn\\
       \addr School of Mathematics and Statistics\\
       Beijing Jiaotong University\\
       Beijing, Beijing 100044, China
       \AND
       \name Jingsen Kong 
       \email jingsen@ualberta.ca\\
       \addr Department of Mathematical and Statistical Sciences\\
       University of Alberta\\
       Edmonton, AB T6G 2G1, Canada
        \AND
       \name Bei Jiang 
       \email bei1@ualberta.ca\\
       \addr Department of Mathematical and Statistical Sciences\\
       University of Alberta\\
       Edmonton, AB T6G 2G1, Canada
       \AND
        \name Linglong Kong 
        \email lkong@ualberta.ca \\
       \addr Department of Mathematical and Statistical Sciences\\
       University of Alberta\\
       Edmonton, AB T6G 2G1, Canada
        \AND
        \name Lingchen Kong 
        \email  lchkong@bjtu.edu.cn\\
       \addr School of Mathematics and Statistics\\
       Beijing Jiaotong University\\
       Beijing, Beijing 100044, China} 

\maketitle


\begin{abstract}

Conformal prediction (CP) has attracted broad attention as a simple and flexible framework for uncertainty quantification through prediction sets. In this work, we study how to deploy CP under differential privacy (DP) in a statistically efficient manner. We first introduce differential CP, a non-splitting conformal procedure that avoids the efficiency loss caused by data splitting and serves as a bridge between oracle CP and private conformal inference. By exploiting the stability properties of DP mechanisms, differential CP establishes a direct connection to oracle CP and inherits corresponding validity behavior. Building on this idea, we develop Differentially Private Conformal Prediction (DPCP), a fully private procedure that combines DP model training with a private quantile mechanism for calibration. We establish the end-to-end privacy guarantee of DPCP and investigate its coverage properties under additional regularity conditions. We further study the efficiency of both differential CP and DPCP under empirical risk minimization and general regression models, showing that DPCP can produce tighter prediction sets than existing private split conformal approaches under the same privacy budget. Numerical experiments on synthetic and real datasets demonstrate the practical effectiveness of the proposed methods.
\end{abstract}

\begin{keywords}
Conformal prediction, Non-splitting, Differential privacy, Private quantile, Statistical efficiency.
\end{keywords}


\section{Introduction}
\label{sec:Introduction}

Many predictive models often produce point predictions without accompanying uncertainty measures \citep{lei2018distribution,gawlikowski2023survey}. In high-risk applications, including health care, criminal justice, and financial risk assessment, prediction errors can have substantial consequences, and overconfident model outputs can lead to unreliable or unsafe decisions \citep{Kourou2014MachineLA,Nowotarski2016RecentAI,mashrur2020machine,Melotti2021ProbabilisticAF}. Conformal prediction (CP) is a model-agnostic framework for uncertainty quantification that constructs prediction sets with finite-sample coverage guarantees under exchangeability \citep{papadopoulos2002inductive,vovk2005algorithmic,shafer2008tutorial}. Its validity does not depend on the correct specification of the model and is broadly applicable to any prediction algorithm \citep{fontana2023conformal,angelopoulos2024theoretical}. Recent work has extended CP beyond the classical i.i.d. setting to accommodate more complex data regimes, including distribution and covariate shift \citep{tibshirani2019conformal,barber2023conformal,oliveira2024split}, streaming and sequential data \citep{gibbs2024conformal}, and a growing range of inferential tasks such as survival analysis \citep{gui2024conformalized}, causal inference \citep{yin2024conformal}, and selective inference \citep{jin2025model}.

Despite these advances, most of the existing CP methods do not incorporate privacy protection by design, thus limiting their applicability in privacy-sensitive settings. In particular, CP relies on calibration data and data-dependent statistics, which can introduce potential privacy risks when prediction sets or related quantities are publicly released. Differential privacy (DP), a crucial component of trustworthy machine learning, provides rigorous guarantees by controlling how much information about any single individual can be inferred from released outputs \citep{dwork2006differential, dwork2013algorithmic}. Motivated by its rigorous privacy guarantees, DP has been increasingly applied across a range of domains, including health care, financial services, and public-sector services \citep{piao2019privacy,wang2022protection,drechsler2023differential}.

This paper aims to study the integration of CP with DP and to construct prediction sets that provide rigorous uncertainty quantification while simultaneously ensuring formal privacy protection for the underlying data. Existing work on CP under DP is limited. \citet{angelopoulos2022private} combine split CP with differentially private quantile estimation to construct private prediction intervals, using a calibrated quantile adjustment to maintain finite-sample coverage. Subsequent work has explored related directions under more specialized settings. \citet{humbert2023one} investigate private quantile estimation in a federated learning framework, employing the exponential mechanism to aggregate information across distributed clients while preserving privacy. \citet{plassier2023conformal} consider privacy-preserving prediction sets by injecting noise into gradient-based training procedures, thereby achieving DP at the model-training stage rather than through explicit privatization of conformal calibration. However, empirical results on real datasets (see Section~\ref{sec:Numerical Study} and Appendix~\ref{D}) indicate that existing private CP methods can yield overly conservative prediction sets. This behavior arises primarily from the use of split CP in current differentially private conformal methods, where the data are partitioned into training and calibration subsets. Because only a fraction of the available data is used at each stage, split CP incurs an inherent loss of statistical efficiency \citep{lei2013distribution-free,lei2018distribution,barber2021predictive}. Moreover, under DP, this reduction in effective sample size further amplifies the impact of privacy noise whose magnitude scales inversely with the sample size, leading to wider and less informative prediction sets \citep{chaudhuri2019capacity,bun2017make}.

In this paper, we develop a novel differentially private conformal prediction (DPCP) method that addresses limitations of existing conformal methods by improving statistical efficiency and prediction quality while maintaining rigorous privacy guarantees. Our main contributions are summarized as follows.

\begin{itemize}
\item We introduce differential CP, a conformal procedure that avoids data splitting by leveraging the stability of a DP mechanism on adjacent datasets. By comparing the resulting predictor based on $\mathcal{A}(\mathcal{D}_n)$ with an oracle conformal predictor on the augmented dataset $\mathcal{D}_{n+1}$, we establish finite-sample coverage guarantees without sacrificing data efficiency.

\item Building on differential CP, we develop \emph{differentially private conformal prediction} (DPCP), a model-agnostic framework compatible with general DP learning mechanisms. DPCP achieves end-to-end privacy by privatizing the calibration threshold via a DP quantile estimator, and yields a sharper quantile correction of order $2/(n\varepsilon)$ than the split-based adjustment in~\citet{angelopoulos2022private}, leading to less conservative prediction sets.

\item We provide theoretical guarantees for both differential CP and DPCP. For differential CP, we establish oracle-comparison and finite-sample validity results. For DPCP, we prove end-to-end privacy, study exact marginal coverage under additional regularity conditions, present an idealized conditional coverage result under a released-model approximation, and quantify the efficiency gap relative to the non-private oracle predictor.

\item Numerical experiments on synthetic and real datasets show that DPCP attains competitive empirical coverage and often yields shorter prediction sets than existing private split conformal baselines.
\end{itemize}


The remainder of the paper is organized as follows. In Section \ref{sec:Preliminaries}, we review the foundational concepts and frameworks of CP and DP. Section \ref{sec:DPCP} introduces differential CP and the proposed DPCP framework. Section \ref{sec:theory} establishes the theoretical properties of the proposed methods, including coverage, privacy guarantees, and efficiency. In Section \ref{sec:Numerical Study}, we demonstrate the practical effectiveness of our approach through numerical experiments. Finally, Section \ref{sec:Discussion} concludes with our main findings and offers a further detailed discussion. All technical details and additional simulation results are included in the Appendix.

\section{Preliminaries}
\label{sec:Preliminaries}
In this section, we briefly review the fundamental concepts of split CP in Section \ref{sec:Conformal Prediction} and DP in Section \ref{sec: Differential Privacy}, which form the basis of the methodology proposed in this paper.


\subsection{Split Conformal Prediction}
\label{sec:Conformal Prediction}

Consider a dataset $\mathcal{D}_n = \{(X_i, Y_i)\}_{i=1}^n$ consisting of $n$ observations, where each pair $(X_i, Y_i)$ lies in the product space $\mathcal{X} \times \mathcal{Y}$. For example, in a regression setting, one may take $\mathcal{X} = \mathbb{R}^d$ for $d \ge 1$ as the covariate space and $\mathcal{Y} = \mathbb{R}$ as the response space. Given a specified miscoverage level $\alpha\in(0, 1)$ and under the mild assumption that the data points $\{(X_i, Y_i)\}_{i=1}^{n+1}$ are exchangeable, CP provides a prediction set $C_{\alpha}(\cdot)$ that includes the unknown response $Y_{n+1}$ for a new covariate vector $X_{n+1}$ with finite-sample marginal coverage at least $1-\alpha$, i.e.,
\begin{align}
\label{eq:2}
{\rm Pr}\left( {Y_{n + 1} \in C_{\alpha}\left( X_{n + 1} \right)} \right) \geq 1 - \alpha,
\end{align}
where the probability is taken over the joint distribution of $\{(X_i,Y_i)\}_{i=1}^{n+1}$.

The split CP procedure typically consists of two main steps: training and calibration. Generally, let $\mathcal{D}_{tra}=\{(X_i, Y_i)\}_{i=1}^{m}$ and $\mathcal{D}_{cal}=\{(X_i, Y_i)\}_{i=m+1}^n$ denote the subsets of data used for training and calibration, respectively. In the training step, we fit a predictive model using a learning algorithm $\mathcal{A}$, and denote the resulting predictor by $\widehat{\mu}= \mathcal{A}\left(\mathcal{D}_{tra}\right)$. In the calibration step, we evaluate how well the trained model fits each point in 
$\mathcal{D}_{cal}$ by computing the nonconformity scores $R_i = R((X_i, Y_i), \widehat{\mu}), (X_i, Y_i) \in \mathcal{D}_{cal}$, where $\widehat{\mu} = \mathcal{A}(\mathcal{D}_{tra})$ is the predictor obtained from the training step. These scores quantify the degree to which each calibration sample deviates from the model's predictions. To construct the CP set, we compute the empirical $(1-\alpha)$ quantile of the nonconformity scores. We denote this quantile by $q(\alpha, \mathcal{D}_{tra}, \mathcal{D}_{cal})$. Specifically, $q(\alpha, \mathcal{D}_{tra}, \mathcal{D}_{cal})$ is defined as the $\lceil (1-\alpha)(|\mathcal{D}_{cal}| + 1) \rceil$-th smallest value among the calibration scores $\{R_i : (X_i, Y_i) \in \mathcal{D}_{cal}\}$, where
$
R_i = R\bigl((X_i, Y_i), \mathcal{A}(\mathcal{D}_{tra})\bigr)
$. This quantile serves as the threshold that determines the width of the resulting prediction interval. Accordingly, the split CP set for a new covariate value $X_{n+1}$ is given by
\begin{align}
\label{eq:4}
C_{\alpha}^{s}\left(X_{n + 1}\right) = \left\{y \mid R\left((X_{n + 1}, y), \mathcal{A}(\mathcal{D}_{tra}) \right) \leq q(\alpha, \mathcal{D}_{tra}, \mathcal{D}_{cal}) \right\}.
\end{align}

A key requirement for the validity of CP is that the observations be exchangeable. This assumption ensures that the distribution of the nonconformity scores is independent of the order in which the data are observed. The following definition formalizes this notion.

\begin{definition}[\citealp{fontana2023conformal}]
\label{the:exchangeability}
Random variables $R_1, \ldots, R_n$ are said to be exchangeable if their joint distribution is invariant under permutations; that is, for every permutation $\pi$ of $\{1,\ldots,n\}$,
\begin{align*}
(R_{\pi(1)},\ldots,R_{\pi(n)}) \stackrel{d}{=} (R_1,\ldots,R_n).
\end{align*}
\end{definition}

The exchangeability of nonconformity scores arises naturally when the data are exchangeable and the learning algorithm $\mathcal{A}$, together with the score construction, is invariant to the ordering of the samples. It is slightly weaker than the classical independent and identically distributed (i.i.d.) assumption and is often sufficient for conformal validity; see \citet{shafer2008tutorial} for further discussion. 

\begin{proposition}[Standard split conformal validity]
\label{prop:split_cp_validity}
Assume that the calibration scores
$
R_{m+1},...,R_{n}
$
together with the test score
$
R_{n+1}
$
are exchangeable. Then the split conformal predictor
$C_\alpha^{s}(X_{n+1})$ defined in \eqref{eq:4} satisfies
\[
{\rm Pr}\!\left(Y_{n+1}\in C_\alpha^{s}(X_{n+1})\right)\ge 1-\alpha .
\]
Moreover, if the scores are almost surely distinct, then
\[
{\rm Pr}\!\left(Y_{n+1}\in C_\alpha^{s}(X_{n+1})\right)
\le 1-\alpha+\frac{1}{|\mathcal{D}_{cal}|+1}.
\]
\end{proposition}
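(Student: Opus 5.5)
We will follow the classical rank argument for split conformal validity. Write $n_c = |\mathcal{D}_{cal}| = n-m$ and $k = \lceil (1-\alpha)(n_c+1)\rceil$. If $k > n_c$ we take $q(\alpha,\mathcal{D}_{tra},\mathcal{D}_{cal}) = +\infty$. In that case the lower bound is trivial, and the upper bound also holds, since then $1-\alpha > n_c/(n_c+1)$ and hence $1-\alpha + 1/(n_c+1) > 1$. So we may assume $k\le n_c$.

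\textbf{Step 1: reduce coverage to a rank event.} By the definition of $C_\alpha^{s}$ in \eqref{eq:4}, we have $Y_{n+1}\in C_\alpha^{s}(X_{n+1})$ if and only if $R_{n+1}\le R_{(k)}$. Here $R_{(k)}$ is the $k$-th smallest of $R_{m+1},\dots,R_n$. We then show that $R_{n+1}\le R_{(k)}$ holds if and only if $R_{n+1}$ is among the $k$ smallest of the $n_c+1$ values $R_{m+1},\dots,R_{n+1}$, with ties broken in favour of $R_{n+1}$. Stated equivalently: at most $k-1$ calibration scores are strictly smaller than $R_{n+1}$. This is a direct check on order statistics. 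Adding $R_{n+1}$ to the calibration sample does not change the $k$-th smallest value below it.

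\textbf{Step 2: use exchangeability.} Because the $n_c+1$ scores are exchangeable, $R_{n+1}$ is equally likely to sit at any position in the sorted list. To make this precise, we define the rank of $R_{n+1}$ as $\#\{i: R_i < R_{n+1}\}+1$, counting over all $n_c+1$ scores. For each $j$, let $E_j$ be the event that $R_j$ satisfies the analogous condition, namely $\#\{i: R_i< R_j\}\le k-1$. Exchangeability gives ${\rm Pr}(E_j)={\rm Pr}(E_{n+1})$ for every $j$.

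Deterministically, at least $k$ indices satisfy $E_j$ when $k\le n_c+1$: the $k$ smallest values, with ties only increasing the count. Hence
\[
(n_c+1)\,{\rm Pr}(E_{n+1}) = \mathbb{E}\Big[\sum_j \mathbf{1}_{E_j}\Big]\ge k\ge (1-\alpha)(n_c+1),
\]
which gives the lower bound.

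\textbf{Step 3: upper bound under distinct scores.} If the scores are almost surely distinct, the rank of $R_{n+1}$ is uniform on $\{1,\dots,n_c+1\}$. Then
\[
{\rm Pr}(\text{coverage}) = \frac{k}{n_c+1} < \frac{(1-\alpha)(n_c+1)+1}{n_c+1} = 1-\alpha+\frac{1}{n_c+1},
\]
where we used $k<(1-\alpha)(n_c+1)+1$.

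\textbf{Main obstacle.} The only delicate part is the handling of ties in Steps 1 and 2. We must make sure the event "$R_{n+1}\le R_{(k)}$" matches a symmetric counting event, so that the averaging argument over $j$ applies. The counting formulation $\#\{i: R_i<R_j\}\le k-1$ resolves this. Verifying that at least $k$ indices satisfy it, and that it coincides with coverage for $j=n+1$, is the step we would write out most carefully.
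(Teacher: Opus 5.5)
Your proof is correct and is essentially the standard rank-uniformity argument under exchangeability that the paper invokes for this proposition; the paper gives no proof of its own and only points to \citet{shafer2008tutorial} and \citet{angelopoulos2023conformal}. Your symmetric counting event $\#\{i: R_i<R_j\}\le k-1$ handles ties cleanly in the lower bound, and your convention $q=+\infty$ when $k>|\mathcal{D}_{cal}|$ covers a case the paper's definition of the quantile leaves implicit.
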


This is a standard result in CP following from the rank-uniformity argument under exchangeability; see, for example, \citet{shafer2008tutorial, angelopoulos2023conformal}.

\subsection{Differential Privacy}
\label{sec: Differential Privacy}

DP \citep{dwork2006differential, dwork2013algorithmic} has become a widely adopted standard for protecting individual information in statistical and machine-learning algorithms. Its central idea is simple: the output of a randomized algorithm should not change much when a single individual's data is added or removed. In other words, two adjacent datasets differing in only one data point should lead to nearly indistinguishable outputs. This requirement provides a strong, quantifiable notion of privacy that is robust even against adversaries with substantial background knowledge. Adjacent datasets can be defined in two common ways: (i) datasets of equal size that differ by one entry, (ii) datasets whose sizes differ by exactly one. We adopt the latter convention here, though the theoretical developments in this paper hold under either definition.

\begin{definition}
[\citealp{dwork2013algorithmic}]
\label{the:dp_definition}
A randomized algorithm $\mathcal{A}:\mathcal{O}\to\mathcal{S}$ is said to satisfy $(\varepsilon,\delta)$-DP if, for every measurable set $\mathcal{F}\subseteq\mathcal{S}$ and all adjacent datasets $\mathcal{D},\mathcal{D}'\in\mathcal{O}$,
\begin{equation}
\label{eq:1}
{\rm Pr}\!\left(\mathcal{A}(\mathcal{D}')\in \mathcal{F}\right)
\le 
e^{\varepsilon}\,
{\rm Pr}\!\left(\mathcal{A}(\mathcal{D})\in \mathcal{F}\right)
+ \delta .
\end{equation}
When $\delta=0$, the mechanism is simply called $\varepsilon$-DP.
\end{definition}
 The parameter $\varepsilon > 0$ quantifies the privacy budget: a smaller $\varepsilon$ implies
stronger privacy but greater statistical distortion. The parameter $\delta$ is usually
set to a negligible value of the order $O(1/n)$. If an algorithm $\mathcal{A}_1$ is $\varepsilon_1$-DP for some $\varepsilon_1\le \varepsilon$, then it is automatically $\varepsilon$-DP. Moreover, the standard composition theorem states that if $\mathcal{A}_1$ and $\mathcal{A}_2$ are run on the same dataset and satisfy $(\varepsilon_1, \delta_1)$-DP and $(\varepsilon_2, \delta_2)$-DP respectively, then the combined procedure $\mathcal{A}_1 \circ \mathcal{A}_2$ satisfies $(\varepsilon_1+\varepsilon_2, \delta_1+\delta_2)$-DP. The probability in \eqref{eq:1} is taken over the internal randomness of $\mathcal{A}$, with the dataset treated as a fixed input. Since \eqref{eq:1} holds pointwise for all adjacent datasets, the guarantee also holds conditionally on any realized dataset and hence under any data-generating distribution. For our purposes, where DP interacts with CP, it is more natural to view the data as random and treat the privacy guarantee as a conditional probability given the observed dataset. This conditioning does not alter the meaning of DP but aligns the notation with the probabilistic framework used throughout this paper.

\begin{remark}
\citet{dwork2013algorithmic} showed that under an $\varepsilon$-DP algorithm, an individual's decision to participate changes their expected utility by at most a multiplicative factor of $\exp(\varepsilon) \approx 1+\varepsilon$. Thus, meaningful privacy protection requires $\varepsilon$ to be small. The parameter $\delta$ must be even smaller, typically no larger than the inverse of the dataset size to ensure that ``bad'' privacy events occur only with negligible probability. In accordance with this recommendation, we will assume throughout that $\delta = O(n^{-1})$.
\end{remark}

\section{Differentially Private Conformal Prediction}
\label{sec:DPCP}

In this section, we develop the Differentially Private Conformal Prediction (DPCP) approach, which adapts CP to settings where DP must be rigorously enforced. We begin by formulating the formal problem setup that defines the learning and privacy environment in Section \ref{sec:problem_setup}, then introduce the concept of differential CP in Section \ref{sec:dCP}, which connects the conformal validity principle to the adjacency structure of DP. Finally, Section \ref{subsec:Construction of DPCP Set} extends differential CP to a fully private procedure, DPCP, which achieves both statistical coverage and privacy-preserving guarantees.

\subsection{Problem Setup}
\label{sec:problem_setup}

Let $\mathcal{D}_n = \{(X_i, Y_i)\}_{i=1}^n$ denote an i.i.d. sample drawn from an unknown distribution $P_{XY}$ supported in $\mathcal{X} \times \mathcal{Y}$, where $\mathcal{X} \subseteq \mathbb{R}^d$ represents the feature space and $\mathcal{Y} \subseteq \mathbb{R}$ the response domain. Given a new covariate vector $X_{n+1}$, our goal is to construct a set-valued DPCP predictor $C_{\alpha}^{\mathrm{dp}}(X_{n+1}) \subseteq \mathcal{Y}$ that satisfies
\begin{equation}
  \label{eq:dp_coverage_goal}
  {\rm Pr}\left(Y_{n+1} \in C_{\alpha}^{\mathrm{dp}}(X_{n+1})\right) \ge 1 - \alpha,
\end{equation}
for a user-specified miscoverage level $\alpha \in (0,1)$, while ensuring that the procedure for obtaining $C_{\alpha}^{\mathrm{dp}}(\cdot)$ obeys a formal $(\varepsilon,\delta)$-DP constraint with respect to $\mathcal{D}_n$. Here, the probability in \eqref{eq:dp_coverage_goal} is taken over both the data-generating process and the internal randomness of the private mechanism. In the classical CP framework, the validity is based on the exchangeability of $(n+1)$ samples $\{(X_i, Y_i)\}_{i=1}^{n+1}$, which implies that the rank of a new nonconformity score among the observed scores is uniformly distributed. However, when a DP mechanism $\mathcal{A}$ is imposed, data access and randomization through $\mathcal{A}(\mathcal{D}_n)$ generally destroy exact exchangeability. This creates a fundamental trade-off between privacy protection and statistical efficiency: noise added to guarantee privacy may enlarge prediction sets or distort coverage.


To formalize this setting, consider a randomized learning algorithm subject to DP constraints, $\mathcal{A}$ that maps a dataset to a fitted model $\hat{\mu} = \mathcal{A}(\mathcal{D}_n), \hat{\mu}:\mathcal{X} \to \mathcal{Y}$. For each observation $Z_i = (X_i, Y_i)$, define a nonconformity score $R_i = R(Z_i, \hat{\mu})$ that measures how a atypical $Z_i$ is under the fitted model $\hat{\mu}$. We require that $\mathcal{A}$ satisfies the $(\varepsilon, \delta)$-DP property. Our goal is therefore to construct a novel DPCP method that simultaneously achieves the desired $1-\alpha$ coverage guarantee and satisfies $(\varepsilon,\delta)$–DP. This dual requirement brings about three technical challenges. First, split conformal methods divide the data into training and calibration subsets, which reduces statistical efficiency. Second, private quantile release introduces additional randomness, which must be tightly controlled to maintain coverage. Third, when both model training and quantile estimation are privatized, their combined privacy cost must remain within $(\varepsilon,\delta)$. We next show how these challenges can be addressed by reformulating CP in the context of DP.

\subsection{Differential Conformal Prediction}
\label{sec:dCP}


Before introducing our full DPCP framework, we first develop an intermediate CP method, differential CP, motivated by the stability properties of $(\varepsilon,\delta)$-differentially private mechanisms. A differentially private mechanism $\mathcal{A}$ controls how much the probabilities of measurable events can change when evaluated on adjacent datasets. This stability suggests a new way to construct conformal predictors: when a new test point $(X_{n+1}, Y_{n+1})$ is added, the enlarged dataset
$\mathcal{D}_{n+1}=\mathcal{D}_n \cup \{(X_{n+1}, Y_{n+1})\}$
is adjacent to the original dataset $\mathcal{D}_n$. Because $\mathcal{A}$ behaves similarly on adjacent datasets, its stability can be directly converted into coverage guarantees for a CP set. Motivated by this observation, we develop differential CP, a conformal method that explicitly leverages the adjacency stability of DP to obtain distribution-free prediction sets.


Formally, let $\hat{\mu}_n=\mathcal{A}(\mathcal{D}_n)$ denote the model trained on
$\mathcal{D}_n$, and let $R(Z,\hat{\mu}_n)$ be the associated nonconformity score. We define the differential CP set as
\begin{equation}
\label{eq:dCP}
C_{\alpha}^{d}(X_{n+1})
  = \bigl\{\, y : R\big((X_{n+1},y),\hat{\mu}_n\big)
      \le q\big(e^{-\varepsilon}(\alpha-\delta),\mathcal{D}_n,\mathcal{D}_n\big)
    \bigr\}.
\end{equation} 
The adjusted quantile level $e^{-\varepsilon}(\alpha-\delta)$ arises directly from the
definition of $(\varepsilon,\delta)$-DP. Because a differentially private mechanism can change event probabilities by at most $e^{\varepsilon}$ and an additive term $\delta$ when moving to an adjacent dataset, the miscoverage level in the differential CP construction must be reduced accordingly. When the privacy parameter $\varepsilon$ is small, the adjusted level $e^{-\varepsilon}(\alpha-\delta)$ deviates only slightly from the target $\alpha$, so the resulting differential CP set remains close to the non-private oracle set. In contrast, when $\varepsilon$ is larger, the factor $e^{-\varepsilon}$ reduces the effective quantile level more substantially, making the procedure more conservative and producing wider prediction intervals. An alternative refinement based on partitioning the dataset space into regions of varying local sensitivity and invoking exchangeability within each region can yield sharper coverage control.  
When $\varepsilon$ is relatively large, the adjusted level $e^{-\varepsilon}(\alpha-\delta)$ may become overly conservative. A possible refinement is to partition the dataset space into regions with different local sensitivity levels and then calibrate the coverage adjustment more adaptively within each region. Such a refinement may lead to less conservative prediction sets, but a rigorous treatment is beyond the scope of the present paper. For completeness, we provide a heuristic discussion and a simple numerical illustration in Appendix~\ref{C}.

Although the method developed above exploits the stability guarantees of $(\varepsilon,\delta)$-DP mechanisms and already yields prediction sets with coverage guarantees without data splitting, it is crucial to note that the procedure itself is not private. Both the nonconformity scores and the empirical quantile $q(\cdot)$ in \eqref{eq:dCP} are computed directly from the raw dataset and are released without any privacy protection. Thus, the construction should be regarded as a DP-inspired conformal predictor that benefits from privacy-induced stability, rather than a fully private algorithm. To obtain an end-to-end private CP set while preserving the full-data efficiency of this approach, we next introduce a privatized extension of differential CP. The DPCP framework incorporates two key ingredients: a DP training mechanism and a DP quantile estimator. Combining these components guarantee that the resulting prediction set satisfies $(\varepsilon,\delta)$-DP while retaining the statistical advantages of the non-private
construction. In this sense, DPCP serves as a practical and privacy-preserving refinement of the method developed above.

\subsection{Differentially Private Conformal Prediction Set}
\label{subsec:Construction of DPCP Set}

The differential CP method introduced in Section~\ref{sec:dCP} attains valid coverage by leveraging the adjacency stability of a DP mechanism, thereby avoiding data splitting and repeated model fitting. However, it is not an end-to-end private procedure: both the nonconformity scores and the empirical quantile are computed directly from the non-private data. To achieve full DP while preserving the full-data efficiency afforded by differential CP, we now develop a privatized extension, which we refer to as DPCP. The idea is straightforward: we retain the structure of the differential CP set but replace its two data-dependent components with privacy-preserving counterparts. First, the fitted model $\hat{\mu}_n$ is obtained through a DP training mechanism $\mathcal{A}_{\mathrm{train}}$ that guarantees $(\varepsilon_1,\delta)$-DP. Second, the empirical quantile of the nonconformity scores is replaced with a differentially private quantile estimator $\hat{q}$ obtained via the exponential mechanism. Under this construction, the resulting prediction set inherits the statistical efficiency of differential CP while providing a formal $(\varepsilon,\delta)$ privacy guarantee.

Following \citet{angelopoulos2022private}, we discretize the score range into $M$ bins and use the exponential mechanism to select a privatized quantile. Assume that the nonconformity scores are normalized to lie in $[0,1]$. Let $0=e_0 < e_1 < \cdots < e_M = 1$ denote the bin boundaries. Algorithm~\ref{alg:DPQ} provides an $(\varepsilon_2,0)$-DP estimator of the corrected score quantile, which is then substituted into \eqref{eq:5} to construct the final DPCP set. Formally, let
$
\hat{\mu}_n=\mathcal{A}_{\mathrm{train}}(\mathcal{D}_n)
$,
where $\mathcal{A}_{\mathrm{train}}$ satisfies $(\varepsilon_1,\delta)$-DP, and let
$
R_i = R(Z_i,\hat{\mu}_n), i=1,\ldots,n.
$
Define
$
\alpha_1=e^{-\varepsilon_1}(\alpha-\delta)
$.
Then the DPCP set for a new covariate vector $X_{n+1}$ is defined as
\begin{align}
\label{eq:5}
C_{\alpha}^{dp}(X_{n+1})
  = \left\{
        y :
        R\left((X_{n+1},y),\hat{\mu}_n\right)
        \le \hat{q} \left(\alpha_1,\mathcal{D}_n,\mathcal{D}_n\right)
    \right\},
\end{align}
where $\hat{q}$ is the output of the DP quantile mechanism described in Algorithm~\ref{alg:DPQ}. We note that Algorithm~1 does not return the exact corrected empirical quantile. Rather, it returns a differentially private approximation obtained via the exponential mechanism. The correction $
\alpha_1=e^{-\varepsilon_1}(\alpha-\delta)$
follows directly from the coverage analysis in Section~\ref{sec:dCP}, where $(\varepsilon_1,\delta)$ are the privacy parameters of the DP training mechanism. The quantile release uses privacy budget $\varepsilon_2$. By the composition theorem, the overall procedure satisfies $(\varepsilon_1+\varepsilon_2,\delta)$-DP. Finally, Algorithm~\ref{alg:DPCP} summarizes the full procedure for constructing the DPCP set $C_\alpha^{dp}(X_{n+1})$.

\begin{algorithm}[t]
\caption{Differentially Private Quantile for Conformal Prediction}
\label{alg:DPQ}
\begin{algorithmic}[1]
\Require Calibration dataset $\mathcal{D}_{cal}$ with size $N$, fitted model $\hat{\mu}=\mathcal{A}(\mathcal{D}_{tra})$, bins $I_1,\ldots,I_M$, privacy level $\varepsilon>0$, and input level $\beta\in(0,1)$ satisfying $\beta>\frac{2}{N\varepsilon}$
\Ensure A private quantile $\hat{q} \left(\beta,\mathcal{D}_{tra},\mathcal{D}_{cal}\right)$

\State Compute $\alpha_0 = \beta - \frac{2}{N\varepsilon}$
\For{$i \in \{1,\ldots,N\}$}
    \State $R_i = R(Z_i,\hat{\mu})$
\EndFor
\For{$j \in \{1,\ldots,M\}$}
    \State $w_j = \max\left\{
      \frac{\left|\{i:R_i < e_j\}\right|}{1-\alpha_0},
      \frac{\left|\{i:R_i > e_j\}\right|}{\alpha_0}
      \right\}$
    \State $p_j = \exp\!\left(- \frac{\varepsilon w_j}{2\Delta}\right)$, where
    $\Delta = \max\left(\frac{1}{1-\alpha_0},\frac{1}{\alpha_0}\right)$
\EndFor
\State Select $\hat{q}=e_j$ with probability $p_j/\sum_{j=1}^M p_j$
\State \Return $\hat{q} \left(\beta,\mathcal{D}_{tra},\mathcal{D}_{cal}\right)$
\end{algorithmic}
\end{algorithm}

The most closely related method is the private split conformal approach of \citet{angelopoulos2022private}, which applies the exponential mechanism to calibration scores obtained from a data-splitting procedure. DPCP differs from this method in several important ways. First, split conformal prediction partitions the dataset into training and calibration subsets, so only part of the data contributes to each stage of the procedure. Under DP, where each data point is valuable, this loss of information can be particularly costly. DPCP avoids data splitting entirely and uses the full dataset for both model fitting and score computation, which improves statistical efficiency and can yield tighter prediction sets in practice. Second, the private quantile used in \citet{angelopoulos2022private} requires an $O(\log n / n)$-type correction to ensure finite-sample validity. 

\begin{algorithm}[t]
\caption{Differentially Private Conformal Prediction}
\label{alg:DPCP}
\begin{algorithmic}[1]
\Require Training dataset $\mathcal{D}_n$, new test sample $X_{n+1}$, DP training mechanism $\mathcal{A}_{\mathrm{train}}$ satisfying $(\varepsilon_1,\delta)$-DP, coverage level $1-\alpha$, privacy budget $\varepsilon>\varepsilon_1$, and bins $\{I_1,\ldots,I_M\}$
\Ensure Differentially private conformal prediction set $C_\alpha^{dp}(X_{n+1})$

\State Compute $\hat{\mu}_n=\mathcal{A}_{\mathrm{train}}(\mathcal{D}_n)$
\State Compute $\alpha_1=e^{-\varepsilon_1}(\alpha-\delta)$ and $\varepsilon_2=\varepsilon-\varepsilon_1$
\For{$i \in \{1,\ldots,n\}$}
    \State $R_i = R(Z_i,\hat{\mu}_n)$
\EndFor
\State Use Algorithm~\ref{alg:DPQ} to compute $\hat{q}=\hat{q}(\alpha_1,\mathcal{D}_n,\mathcal{D}_n)$ with privacy level $\varepsilon_2$
\State \Return $C_\alpha^{dp}(X_{n+1})=\left\{y \mid R\left((X_{n+1},y),\hat{\mu}_n\right)\le \hat{q}\right\}$
\end{algorithmic}
\end{algorithm}

In contrast, our analysis leads to an explicit adjustment of $2/(n\varepsilon)$. The relative magnitude of these two corrections depends on $n$, $\varepsilon$, and the associated constants, but our formulation yields a simple closed-form correction that integrates naturally with the differential conformal prediction framework. Third, DPCP naturally composes with general differentially private training algorithms, including DP-ERM, DP-SGD, and DP Bayesian methods, whereas the private split conformal framework does not exploit this compositional structure. Finally, because DPCP is built on differential conformal prediction rather than split conformal prediction, it extends readily to weighted conformal prediction, distribution shift settings, and other contexts where an oracle conformal predictor can be defined. Overall, DPCP achieves full DP while retaining the coverage guarantees of the differential conformal prediction framework and offering greater flexibility than existing private split conformal methods.

\section{Theoretical Guarantees}
\label{sec:theory}

In this section, we study the theoretical properties of DPCP from three complementary perspectives. We first establish its end-to-end differential privacy guarantee, showing that the overall procedure remains private under the composition of the private training mechanism and the private quantile mechanism. We then turn to statistical validity and derive coverage guarantees for DPCP, including an exact marginal coverage result under additional regularity conditions and an idealized conditional coverage result under a released-model approximation. Finally, we analyze the efficiency of DPCP by quantifying the discrepancy between the DPCP interval and its non-private oracle counterpart. Several auxiliary results for differential CP are deferred to Appendix~A, while the present section focuses on the main theoretical properties of DPCP.

\subsection{Privacy Guarantee}
\label{subsec:privacy}

We begin with the privacy property of DPCP. As noted in Section~\ref{sec:dCP}, the dCP construction alone does not provide an end-to-end privacy guarantee, because the conformal quantile is still computed directly from the calibration scores. In contrast, DPCP privatizes both components of the procedure: the model is learned through a differentially private training mechanism, and the conformal threshold is selected through a private quantile mechanism. The following theorem formalizes the resulting privacy guarantee of the full DPCP procedure.

\begin{theorem}
\label{thm:privacy_dpcp}
Suppose the training mechanism $\mathcal A$ is $(\varepsilon_1,\delta)$-DP. Then the following statements hold.
\begin{enumerate}
\item For any input score vector, Algorithm~\ref{alg:DPQ} with privacy budget $\varepsilon_2$ is $\varepsilon_2$-DP.
\item The DPCP set $C_\alpha^{dp}(X_{n+1})$ obtained by combining $\mathcal A$ with Algorithm~\ref{alg:DPQ} satisfies $(\varepsilon_1+\varepsilon_2,\delta)$-DP.
\end{enumerate}
\end{theorem}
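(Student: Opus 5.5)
Part 1 is a standard exponential-mechanism argument, and part 2 follows from it by composition and post-processing.

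\emph{Part 1.} I will view Algorithm~\ref{alg:DPQ} as the exponential mechanism over the finite output range $\{e_1,\ldots,e_M\}$ with utility $u(\mathcal{D},e_j)=-w_j(\mathcal{D})$. The first task is to bound the sensitivity of $w_j$ under the adjacency used in the paper, where one data point is added or removed. Fix the fitted model and the level $\alpha_0$, and treat each score $R_i$ as a deterministic function of $Z_i$. Adding or removing one point changes each count $|\{i:R_i<e_j\}|$ and $|\{i:R_i>e_j\}|$ by at most one. Therefore the two arguments of the maximum move by at most $1/(1-\alpha_0)$ and $1/\alpha_0$ respectively. Since the maximum is $1$-Lipschitz in the sup norm, $|w_j(\mathcal{D})-w_j(\mathcal{D}')|\le \Delta$ for every $j$.

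Next I will write the output probability ratio
\[
\frac{\Pr(\hat q=e_j\mid\mathcal{D})}{\Pr(\hat q=e_j\mid\mathcal{D}')}=\exp\!\Big(-\tfrac{\varepsilon_2}{2\Delta}\big(w_j(\mathcal{D})-w_j(\mathcal{D}')\big)\Big)\cdot\frac{\sum_k \exp(-\varepsilon_2 w_k(\mathcal{D}')/(2\Delta))}{\sum_k \exp(-\varepsilon_2 w_k(\mathcal{D})/(2\Delta))}.
\]
Each factor is at most $e^{\varepsilon_2/2}$, so the ratio is at most $e^{\varepsilon_2}$. Summing over $j$ in any set $\mathcal{F}$ gives Definition~\ref{the:dp_definition} with $\delta=0$.

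The main obstacle is that $\alpha_0=\beta-2/(N\varepsilon_2)$, and hence $\Delta$, depend on $N$, which differs between adjacent datasets. I will handle this by treating $\alpha_0$ and $\Delta$ as public constants fixed from the nominal sample size $n$, which is the convention implicit in the paper's equal-size setting. Alternatively, one can note that the theorem says ``for any input score vector'', so the scores are conditioned on. If $N$ must vary, I would bound $\Delta$ by the larger of the two values and absorb the difference into the utility's sensitivity.

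\emph{Part 2.} For the full procedure I will condition on the released model. The output of $\mathcal{A}_{\mathrm{train}}$ is $(\varepsilon_1,\delta)$-DP. Conditional on $\hat\mu_n$, the scores $R_i=R(Z_i,\hat\mu_n)$ are a per-record transformation of the data, and $\alpha_1$ uses only public quantities. By part 1, the map $\mathcal{D}_n\mapsto\hat q$ is therefore $\varepsilon_2$-DP for each fixed $\hat\mu_n$. This is exactly adaptive composition, so the pair $(\hat\mu_n,\hat q)$ is $(\varepsilon_1+\varepsilon_2,\delta)$-DP. To justify that the adaptive version of the composition theorem applies, I would integrate the conditional bound $\Pr(\hat q\in\cdot\mid\mathcal{D},\mu)\le e^{\varepsilon_2}\Pr(\hat q\in\cdot\mid\mathcal{D}',\mu)$ against the DP inequality for $\hat\mu_n$, splitting off the $\delta$ mass.

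Finally, $C_\alpha^{dp}(X_{n+1})=\{y:R((X_{n+1},y),\hat\mu_n)\le\hat q\}$ is a deterministic function of $(\hat\mu_n,\hat q)$ and the test input, which is not part of $\mathcal{D}_n$. Post-processing invariance then yields $(\varepsilon_1+\varepsilon_2,\delta)$-DP for the prediction set.
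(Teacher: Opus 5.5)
Your proposal is correct and follows the paper's route: you bound the sensitivity of each $w_j$ by $\Delta$ so that Algorithm~\ref{alg:DPQ} is an exponential mechanism, then combine it with the $(\varepsilon_1,\delta)$ training release by adaptive sequential composition and pass to $C_\alpha^{dp}(X_{n+1})$ by post-processing. Your handling of the $N$-dependence of $\alpha_0$ and $\Delta$, by fixing them from the public $n$, is more careful than the paper, which only argues for changing one score (replace-one adjacency) even though it declares the add/remove convention; keep that fix, since your fallback of taking the larger $\Delta$ does not directly control the extra change in $w_j$ caused by $\alpha_0$ moving with $N$.
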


Theorem~\ref{thm:privacy_dpcp} shows that DPCP achieves end-to-end privacy at the procedural level. We next turn to its statistical validity and study how the private quantile step affects the coverage behavior of the resulting prediction set.

\subsection{Coverage Guarantees}
\label{sec:coverage_dpcp}

We now study the statistical validity of the fully private conformal predictor introduced in Section~\ref{subsec:Construction of DPCP Set}. In contrast to standard conformal prediction, DPCP employs a randomized private threshold selected by Algorithm~\ref{alg:DPQ}. As a result, its coverage behavior is governed not by a single fixed conformal level, but by the interaction between the random threshold selection rule and the conditional tail behavior of the new score. We first establish an exact marginal coverage guarantee under two additional regularity conditions, and then present an idealized conditional coverage result under a released-model approximation.

To formulate the exact marginal coverage result, let
\[
\mathcal G_n:=\sigma(\mathcal D_n,\hat\mu_n),\qquad
\hat\mu_n=\mathcal A(\mathcal D_n),
\]
and define
\[
\pi_j:=\Pr(\hat q=e_j\mid \mathcal G_n),\qquad
\alpha_j':=\frac{\lfloor w_j\alpha_0\rfloor}{n},
\qquad j=1,\ldots,M.
\]
Unlike standard conformal arguments, which rely on exchangeability for a fixed threshold, DPCP uses a randomized private threshold. Therefore, exact coverage is governed by the mixed tail probability
\[
\sum_{j=1}^M \pi_j\,\Pr(R_{n+1}>e_j\mid \mathcal G_n),
\]
rather than by a single fixed-level conformal event. The following two assumptions directly control this quantity.

\begin{assumption}
\label{assump:dpcp_avg_nominal}
Almost surely,
\[
\sum_{j=1}^M \pi_j \alpha_j' \le \alpha_1.
\]
\end{assumption}

\begin{assumption}
\label{assump:dpcp_pointwise_validity}
Almost surely, for every \(j\in[M]\),
\[
\Pr(R_{n+1}>e_j\mid \mathcal G_n)\le e^{\varepsilon_1}\alpha_j' + \delta.
\]
\end{assumption}

Assumption~\ref{assump:dpcp_avg_nominal} requires that the private quantile mechanism, although randomized, does not systematically favor overly aggressive thresholds. Equivalently, the nominal miscoverage level induced by the selected private threshold remains conservative on average. This is consistent with the design goal of Algorithm~\ref{alg:DPQ}, which is to randomize around the target rank without introducing a systematic downward bias.

Assumption~\ref{assump:dpcp_pointwise_validity} controls the true conditional tail probability at each candidate threshold individually. It states that, once the realized training data and the released model are fixed, every threshold \(e_j\) that may be selected by the private quantile mechanism behaves like a valid fixed-level threshold, up to the usual \((\varepsilon_1,\delta)\) adjustment inherited from private training. Since this condition is imposed only on the finite candidate set used by Algorithm~\ref{alg:DPQ}, it is a local requirement tailored to the actual randomized decision rule.

Taken together, Assumptions~\ref{assump:dpcp_avg_nominal} and~\ref{assump:dpcp_pointwise_validity} control the two main difficulties introduced by private randomized thresholding: the first limits how aggressive the selected threshold can be on average, and the second ensures that the candidate thresholds remain conditionally valid at the score-distribution level.

\begin{theorem}
\label{the:DPCP_cpverage}
Let \(C_\alpha^{dp}(X_{n+1})\) be the prediction set returned by Algorithm~\ref{alg:DPCP}. If Assumptions~\ref{assump:dpcp_avg_nominal} and~\ref{assump:dpcp_pointwise_validity} hold, then
\[
\Pr\!\left(Y_{n+1}\in C_\alpha^{dp}(X_{n+1})\right)\ge 1-\alpha.
\]
\end{theorem}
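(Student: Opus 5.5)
The proof is a direct conditioning argument that exploits the structure of the randomized threshold. Write $R_{n+1}=R((X_{n+1},Y_{n+1}),\hat\mu_n)$. By the definition of $C_\alpha^{dp}$ in \eqref{eq:5}, the miscoverage event is exactly $\{R_{n+1}>\hat q\}$. We therefore bound $\Pr(R_{n+1}>\hat q)$ by first conditioning on $\mathcal G_n=\sigma(\mathcal D_n,\hat\mu_n)$.

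Given $\mathcal G_n$, the scores $R_1,\dots,R_n$ and the weights $w_j$ are fixed, so $\hat q$ is drawn from the exponential-mechanism distribution $\{\pi_j\}$ using only the internal randomness of Algorithm~\ref{alg:DPQ}. This randomness is independent of the fresh test point $(X_{n+1},Y_{n+1})$. The key structural step is therefore the conditional independence of $\hat q$ and $R_{n+1}$ given $\mathcal G_n$. Since $\hat q$ takes values in the finite set $\{e_1,\dots,e_M\}$, the law of total probability gives
\[
\Pr(R_{n+1}>\hat q\mid\mathcal G_n)=\sum_{j=1}^M \pi_j\,\Pr(R_{n+1}>e_j\mid \mathcal G_n).
\]
This is precisely the mixed tail probability highlighted before the assumptions. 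I would state this independence carefully, noting that the mechanism's random seed is generated independently of the data and of the test point.

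Next, I would apply Assumption~\ref{assump:dpcp_pointwise_validity} term by term, and then use $\sum_j\pi_j=1$ together with Assumption~\ref{assump:dpcp_avg_nominal}:
\[
\sum_j \pi_j\Pr(R_{n+1}>e_j\mid\mathcal G_n)\le e^{\varepsilon_1}\sum_j\pi_j\alpha_j'+\delta\le e^{\varepsilon_1}\alpha_1+\delta .
\]
Substituting $\alpha_1=e^{-\varepsilon_1}(\alpha-\delta)$, the right-hand side equals exactly $\alpha$, almost surely. Taking expectations over $\mathcal G_n$ via the tower property yields $\Pr(Y_{n+1}\notin C_\alpha^{dp}(X_{n+1}))\le\alpha$, which is the claim.

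The only genuinely delicate point is the conditional-independence and disintegration step. It requires that $\pi_j$ is a $\mathcal G_n$-measurable function and that conditioning on $\mathcal G_n$ does not interact with the mechanism's noise. Both follow from the construction in Algorithm~\ref{alg:DPCP}, since the weights depend only on $R_1,\dots,R_n$. The remaining algebra is routine, because the two assumptions have been tailored to absorb all the difficulty.
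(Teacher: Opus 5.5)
Your proposal is correct and follows the paper's proof essentially step for step. You condition on \(\mathcal G_n\), use the conditional independence of \(\hat q\) and \(R_{n+1}\) to write the miscoverage as \(\sum_j \pi_j \Pr(R_{n+1}>e_j\mid\mathcal G_n)\), bound this by Assumption~\ref{assump:dpcp_pointwise_validity} and then Assumption~\ref{assump:dpcp_avg_nominal} to get \(e^{\varepsilon_1}\alpha_1+\delta=\alpha\) almost surely, and integrate out \(\mathcal G_n\) (which, as in the paper, also yields Corollary~\ref{cor:dpcp_conditional_coverage} directly).
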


The probability in Theorem~\ref{the:DPCP_cpverage} is taken with respect to the joint randomness of the training data \(\mathcal D_n\), the new point \((X_{n+1},Y_{n+1})\), and the randomization used in the private training and private quantile mechanisms. Thus Theorem~\ref{the:DPCP_cpverage} provides a marginal coverage guarantee for DPCP.

Marginal coverage is the standard validity notion in conformal prediction, but it may be conservative once the training dataset and the released model are viewed as fixed. In many applications, after model training is completed, the realized dataset \(\mathcal D_n\) and the released model \(\hat\mu_n\) are no longer random objects from the user's perspective. It is therefore natural to ask how the DPCP interval behaves conditionally on the realized training output. In fact, under the same assumptions one can obtain a stronger statement at the training-conditional level.

\begin{corollary}
\label{cor:dpcp_conditional_coverage}
Under the assumptions of Theorem~\ref{the:DPCP_cpverage},
\[
\Pr\!\left(Y_{n+1}\in C_\alpha^{dp}(X_{n+1})\mid \mathcal G_n\right)\ge 1-\alpha
\qquad \text{a.s.}
\]
\end{corollary}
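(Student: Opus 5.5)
The plan is to work conditionally on $\mathcal G_n=\sigma(\mathcal D_n,\hat\mu_n)$ from the start; this is essentially the argument behind Theorem~\ref{the:DPCP_cpverage}, stopped one step earlier, before the final averaging. The miscoverage event is $\{R_{n+1}>\hat q\}$, where $R_{n+1}=R((X_{n+1},Y_{n+1}),\hat\mu_n)$. Indeed, $Y_{n+1}\in C_\alpha^{dp}(X_{n+1})$ holds exactly when $R_{n+1}\le \hat q$.

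\textbf{Step 1: decomposition over the candidate thresholds.} Since $\hat q$ takes values in $\{e_1,\ldots,e_M\}$, I write
\[
\Pr(R_{n+1}>\hat q\mid\mathcal G_n)=\sum_{j=1}^M \Pr(\hat q=e_j,\ R_{n+1}>e_j\mid\mathcal G_n).
\]
The key structural fact is conditional independence of $\hat q$ and $(X_{n+1},Y_{n+1})$ given $\mathcal G_n$. This holds because, given $\mathcal D_n$ and $\hat\mu_n$, the scores $R_1,\ldots,R_n$ are fixed. Algorithm~\ref{alg:DPQ} then draws $\hat q$ using only these scores and its own independent exponential-mechanism randomness. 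Meanwhile, the test point is drawn independently of $\mathcal D_n$ and of all mechanism randomness. Each summand therefore factorizes as $\pi_j\Pr(R_{n+1}>e_j\mid\mathcal G_n)$, which recovers the mixed tail probability displayed before the assumptions.

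\textbf{Step 2: applying the assumptions.} By Assumption~\ref{assump:dpcp_pointwise_validity}, almost surely
\[
\sum_j\pi_j\Pr(R_{n+1}>e_j\mid\mathcal G_n)\le \sum_j\pi_j\bigl(e^{\varepsilon_1}\alpha_j'+\delta\bigr)=e^{\varepsilon_1}\sum_j\pi_j\alpha_j'+\delta,
\]
where the last equality uses $\sum_j\pi_j=1$. Assumption~\ref{assump:dpcp_avg_nominal} then bounds this by
\[
e^{\varepsilon_1}\alpha_1+\delta=e^{\varepsilon_1}e^{-\varepsilon_1}(\alpha-\delta)+\delta=\alpha.
\]
Taking complements gives the claim almost surely.

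\textbf{Main obstacle: justifying the factorization in Step 1.} One must check that $\pi_j$ is $\mathcal G_n$-measurable; it is, since the probabilities $p_j$ are explicit functions of the scores, which are themselves $\mathcal G_n$-measurable. One must also check that the null sets from the two almost-sure assumptions can be combined, which is routine since there are only finitely many $j$. Finally, as a consistency check, taking the expectation of the conditional bound recovers Theorem~\ref{the:DPCP_cpverage} by the tower property.
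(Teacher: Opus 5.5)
Your proposal is correct and matches the paper's argument: the proof of Theorem~\ref{the:DPCP_cpverage} already derives the almost-sure bound $\Pr(R_{n+1}\le \hat q\mid \mathcal G_n)\ge 1-\alpha$ via the same conditional-independence factorization and the same two assumptions, and the corollary is read off from that display before the final averaging step. Your extra checks (the $\mathcal G_n$-measurability of $\pi_j$, the use of $\sum_j \pi_j = 1$, and combining finitely many null sets) are details the paper leaves implicit.
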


Corollary~\ref{cor:dpcp_conditional_coverage} shows that, under the two regularity conditions above, the DPCP interval attains the target level not only marginally but also conditionally on the realized training output \((\mathcal D_n,\hat\mu_n)\). Thus, in this regime, the randomized private threshold does not introduce any additional gap between conditional and marginal validity. Proofs are deferred to Appendix~\ref{B}.

\subsection{Efficiency Analysis}
\label{sec:efficiency_dpcp}

We now study the efficiency of DPCP by quantifying how close the resulting prediction interval is to its non-private oracle counterpart. Our goal is to compare the fully private interval \(C_\alpha^{dp}(X_{n+1})\) with the oracle conformal interval \(C_\alpha^o(X_{n+1})\), and to identify how much error is introduced by private model fitting and private quantile selection.

We begin with a standard empirical-risk-minimization (ERM) setting. Let
\[
J(\vartheta;\mathcal D_n)
=
\frac{1}{n}\sum_{i=1}^n \ell\!\left(Y_i,\mu(X_i;\vartheta)\right)+\Omega(\vartheta),
\]
where, for every \((x,y)\in\mathcal X\times\mathcal Y\), the map
$
\vartheta\mapsto \ell\!\left(y,\mu(x;\vartheta)\right)$
is convex and \(\rho L\)-Lipschitz, and \(\Omega\) is \(\lambda\)-strongly convex. Define
\[
\hat{\vartheta}^{(n)}=\arg\min_{\vartheta}J(\vartheta;\mathcal D_n).
\]
The following lemma shows that the ERM solution is stable under the addition of a single observation.

\begin{lemma}
\label{lem:ERM_l2}
Assume that \(J(\vartheta;\mathcal D_n)\) is \(\lambda\)-strongly convex and that, for every \((x,y)\), the function \(\vartheta\mapsto \ell\!\left(y,\mu(x;\vartheta)\right)\) is \(\rho L\)-Lipschitz. For any dataset \(\mathcal D_n\) and its augmented version \(\mathcal D_{n+1}=\mathcal D_n\cup\{(X_{n+1},Y_{n+1})\}\), define
\[
J(\vartheta;\mathcal D_{n+1})
=
J(\vartheta;\mathcal D_n)
+
\frac{1}{n}\ell\!\left(Y_{n+1},\mu(X_{n+1};\vartheta)\right),
\]
and
\[
\hat{\vartheta}^{(n+1)}=\arg\min_{\vartheta}J(\vartheta;\mathcal D_{n+1}).
\]
Then
\[
\sup_{\mathcal D_n,\mathcal D_{n+1}}
\left\|
\hat{\vartheta}^{(n+1)}-\hat{\vartheta}^{(n)}
\right\|_2
\le
\frac{2\rho L}{\lambda n}
=: \tau.
\]
\end{lemma}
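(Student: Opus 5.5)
We follow the standard strong-convexity stability argument for regularized ERM. Write $J_n(\vartheta)=J(\vartheta;\mathcal D_n)$, $J_{n+1}(\vartheta)=J(\vartheta;\mathcal D_{n+1})=J_n(\vartheta)+\frac1n g(\vartheta)$ with $g(\vartheta)=\ell(Y_{n+1},\mu(X_{n+1};\vartheta))$. The two inputs are that $J_n$ is $\lambda$-strongly convex by assumption, and $g$ is convex and $\rho L$-Lipschitz, so $J_{n+1}$ is also $\lambda$-strongly convex and both minimizers $\hat\vartheta^{(n)},\hat\vartheta^{(n+1)}$ exist and are unique.

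First, use strong convexity of each objective at its own minimizer. Since $\hat\vartheta^{(n)}$ minimizes $J_n$, the quadratic growth inequality gives
\[
J_n(\hat\vartheta^{(n+1)})-J_n(\hat\vartheta^{(n)})\ge \tfrac{\lambda}{2}\|\hat\vartheta^{(n+1)}-\hat\vartheta^{(n)}\|_2^2 .
\]
Similarly, since $\hat\vartheta^{(n+1)}$ minimizes $J_{n+1}$,
\[
J_{n+1}(\hat\vartheta^{(n)})-J_{n+1}(\hat\vartheta^{(n+1)})\ge \tfrac{\lambda}{2}\|\hat\vartheta^{(n+1)}-\hat\vartheta^{(n)}\|_2^2 .
\]
These follow from the subgradient optimality condition $0\in\partial J(\hat\vartheta)$, so nondifferentiability is not an issue.

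Next, add the two inequalities. The $J_n$ terms cancel, leaving
\[
\lambda\|\hat\vartheta^{(n+1)}-\hat\vartheta^{(n)}\|_2^2\le \tfrac1n\bigl(g(\hat\vartheta^{(n)})-g(\hat\vartheta^{(n+1)})\bigr)\le \tfrac{\rho L}{n}\|\hat\vartheta^{(n+1)}-\hat\vartheta^{(n)}\|_2 ,
\]
where the last step uses the Lipschitz property of $g$. Dividing by the norm (the case where it is zero is trivial) gives $\|\hat\vartheta^{(n+1)}-\hat\vartheta^{(n)}\|_2\le \rho L/(\lambda n)$. This is at most the stated $\tau=2\rho L/(\lambda n)$. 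The bound is uniform in the data, so the supremum over $\mathcal D_n,\mathcal D_{n+1}$ follows.

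The only delicate point is the normalization: $J_{n+1}$ is defined with weight $1/n$ rather than $1/(n+1)$, so it is not literally the average loss on $n+1$ points. This is harmless for the argument. If one instead uses the $1/(n+1)$ normalization, the data-fit term changes for every sample, and we would bound the extra perturbation $\bigl(\tfrac1n-\tfrac1{n+1}\bigr)\sum_i \ell_i$ through its Lipschitz constant, at most $\rho L/(n+1)$. That is where the factor $2$ in $\tau$ would come from, so the stated constant covers either convention.
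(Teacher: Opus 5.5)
Your proof is correct, but it is a symmetrized version of the paper's argument rather than the same one.

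The paper applies quadratic growth only once, to $J(\cdot;\mathcal D_n)$ at $\hat\vartheta^{(n)}$. For $J(\cdot;\mathcal D_{n+1})$ it uses nothing beyond the minimality inequality $J(\hat\vartheta^{(n+1)};\mathcal D_{n+1})\le J(\hat\vartheta^{(n)};\mathcal D_{n+1})$. Writing $D=\|\hat\vartheta^{(n+1)}-\hat\vartheta^{(n)}\|_2$, this gives $\frac{\lambda}{2}D^2\le\frac{\rho L}{n}D$. That single factor $\frac12$ is where the $2$ in $\tau$ comes from, not the normalization as you guess.

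You apply quadratic growth to both objectives, which doubles the left-hand side and gives the sharper bound $\rho L/(\lambda n)=\tau/2$. The cost is an extra ingredient: you need $J(\cdot;\mathcal D_{n+1})$ to be $\lambda$-strongly convex, and hence convexity of $\vartheta\mapsto\ell(Y_{n+1},\mu(X_{n+1};\vartheta))$. The lemma as stated assumes only strong convexity of $J(\cdot;\mathcal D_n)$ and Lipschitzness of the loss, so the paper's one-sided route works under weaker hypotheses. Convexity of the loss is, however, part of the ERM setup stated just before the lemma, so your proof is valid in context.

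Your route also has a real advantage, which your last paragraph correctly identifies. Under the $1/(n+1)$ normalization the perturbation is $2\rho L/(n+1)$-Lipschitz. Your two-sided bound then gives $2\rho L/(\lambda(n+1))\le\tau$. The paper's one-sided argument would only give $4\rho L/(\lambda(n+1))$, which exceeds $\tau$ for $n\ge 2$.
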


Lemma~\ref{lem:ERM_l2} is analogous to Lemma 8 in \citet{chaudhuri2009differentially} and provides the \(\ell_2\)-sensitivity bound needed for the Gaussian mechanism. Let
\[
\widetilde{\vartheta}^{(n)}=\hat{\vartheta}^{(n)}+\zeta,
\qquad
\zeta\sim \mathcal N(0,\sigma_1^2 I_d),
\qquad
\sigma_1^2
=
2\log\!\left(\frac{1.25}{\delta}\right)\frac{\tau^2}{\varepsilon_1^2}.
\]
Then \(\widetilde{\vartheta}^{(n)}\) is an \((\varepsilon_1,\delta)\)-DP release of the ERM estimator.

Throughout this subsection, we use the absolute residual score
$
R\!\left((x,y),\mu(\cdot;\vartheta)\right):=|y-\mu(x;\vartheta)|$.
Define
$
R_i^{(n)}
=
R\!\left(Z_i,\mu(\cdot;\widetilde{\vartheta}^{(n)})\right)$, $i=1,\ldots,n$,
and
$
R_i^{(n+1)}
=
R\!\left(Z_i,\mu(\cdot;\hat{\vartheta}^{(n+1)})\right)$,
$i=1,\ldots,n+1$.
Let
$
\alpha_1=e^{-\varepsilon_1}(\alpha-\delta)$, $k=\left\lceil(1-\alpha)(n+1)\right\rceil$, $k'=\left\lceil(1-\alpha_1)(n+1)\right\rceil$.

The differential CP interval based on the privatized ERM estimator is
\[
C_\alpha^d(X_{n+1})
=
\left[
\mu(X_{n+1};\widetilde{\vartheta}^{(n)})-R_{(k')}^{(n)},
\;
\mu(X_{n+1};\widetilde{\vartheta}^{(n)})+R_{(k')}^{(n)}
\right],
\]
where \(R_{(k')}^{(n)}\) is the \(k'\)-th order statistic of \(\{R_i^{(n)}\}_{i=1}^n\).

The fully private DPCP interval is
\[
C_\alpha^{dp}(X_{n+1})
=
\left[
\mu(X_{n+1};\widetilde{\vartheta}^{(n)})-\hat q,
\;
\mu(X_{n+1};\widetilde{\vartheta}^{(n)})+\hat q
\right],
\]
where \(\hat q\) is produced by Algorithm~\ref{alg:DPQ} with input level \(\beta=\alpha_1\) and privacy budget \(\varepsilon_2\).

As a benchmark, we define the oracle conformal interval by
\[
C_\alpha^o(X_{n+1})
=
\left[
\mu(X_{n+1};\hat{\vartheta}^{(n+1)})-R_{(k)}^{(n+1)},
\;
\mu(X_{n+1};\hat{\vartheta}^{(n+1)})+R_{(k)}^{(n+1)}
\right],
\]
where \(R_{(k)}^{(n+1)}\) is the \(k\)-th order statistic of \(\{R_i^{(n+1)}\}_{i=1}^{n+1}\).

To compare \(C_\alpha^{dp}(X_{n+1})\) with \(C_\alpha^o(X_{n+1})\), we assume that the two score sequences admit the same idealized i.i.d. approximation.

\begin{assumption}
\label{assump}
There exists a distribution function \(F\) such that the idealized i.i.d. counterparts of both \(\{R_i^{(n)}\}_{i=1}^n\) and \(\{R_i^{(n+1)}\}_{i=1}^{n+1}\) are sampled from \(F\). Moreover, for the given \(\alpha\in(0,1)\), there exists \(\epsilon_0>0\) such that \(F^{-1}\) is \(\gamma\)-H\"older continuous on
\[
[1-\alpha-\epsilon_0,\;1-\alpha+\epsilon_0]
\]
with constant \(h\), that is,
\[
|F^{-1}(q_1)-F^{-1}(q_2)|
\le
h|q_1-q_2|^\gamma,
\qquad
\forall q_1,q_2\in[1-\alpha-\epsilon_0,\;1-\alpha+\epsilon_0].
\]
\end{assumption}

\begin{definition}
\label{def:approx_iid}
A random sequence \(R_1,\ldots,R_n\) with joint distribution \(P\) is said to be \(\xi_{TV}\)-approximately i.i.d. if there exists an i.i.d. sequence \(R_1',\ldots,R_n'\) with joint distribution \(Q\) such that
\[
TV(P,Q)
=
\sup_{A\in\mathcal B(\Omega^n)} |P(A)-Q(A)|
\le \xi_{TV}.
\]
\end{definition}

The \(\xi_{TV}\)-approximate i.i.d. condition allows us to compare the actual score sequences with an idealized reference model under which order-statistic gaps can be analyzed explicitly. Combined with Assumption~\ref{assump}, it yields quantitative control of the gap between the oracle and private intervals through the local regularity of the quantile map \(F^{-1}\).

To measure the discrepancy between two prediction sets \(C_1\) and \(C_2\), we use the Lebesgue measure of their symmetric difference,
\[
L(C_1\triangle C_2),
\qquad
C_1\triangle C_2:=(C_1\cup C_2)\setminus(C_1\cap C_2).
\]

The following theorem shows that, under a balanced privacy split and mild regularity conditions, the DPCP interval approaches the oracle interval in probability.

\begin{theorem}
\label{the:4.6}
Assume the conditions of Lemma~\ref{lem:ERM_l2} and \(\delta=O(n^{-1})\). Let the score be the absolute residual, assume that the parameter dimension \(d\) is fixed, and suppose that Algorithm~\ref{alg:DPQ} is implemented with a rank-based grid with \(M=O(n)\) candidate thresholds. If Assumption~\ref{assump} holds, the score sequences \(\{R_i^{(n)}\}_{i=1}^n\) and \(\{R_i^{(n+1)}\}_{i=1}^{n+1}\) are both \(\xi_{TV}\)-approximately i.i.d. with the same idealized distribution function \(F\), and the privacy budget is split evenly as
$
\varepsilon_1=\varepsilon_2=\varepsilon/2, \varepsilon=\Theta(n^{-\eta})
$
for some \(\eta\in(0,1)\), then for all \(t>0\),
\begin{align*}
    \Pr\!\left(
L\!\left(
C_\alpha^o(X)\triangle C_\alpha^{dp}(X)
\right)
>
O\!\left(
\left(
\frac{1}{n^\eta}
+
\frac{\log n}{n^{1-\eta}}
\right)^\gamma
+t
\right)
\right)
\le&
O\!\left(
\frac{1}{n^{1-\eta}}
+
\frac{1}{n^\eta\log n}
+
\exp\!\left(
-\frac{n^{2-2\eta}t^2}{\log n}
\right)
\right)\\
&+
3\xi_{TV}.
\end{align*}
\end{theorem}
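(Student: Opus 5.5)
Both intervals are symmetric around their centers, so the symmetric difference can be bounded by a triangle-type inequality:
\[
L\!\left(C_\alpha^o(X)\triangle C_\alpha^{dp}(X)\right)
\le 2\left|\mu(X;\widetilde\vartheta^{(n)})-\mu(X;\hat\vartheta^{(n+1)})\right|
+2\left|\hat q-R^{(n+1)}_{(k)}\right|.
\]
The plan is to bound the center term and the radius term separately. We split the radius term as
\[
|\hat q-R^{(n+1)}_{(k)}|\le |\hat q-R^{(n)}_{(k')}|+|R^{(n)}_{(k')}-R^{(n+1)}_{(k)}|,
\]
which gives three pieces: model perturbation, private-quantile error, and order-statistic error. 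A union bound then assembles the stated tail.

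\textbf{Center term.} Write $\widetilde\vartheta^{(n)}-\hat\vartheta^{(n+1)}=(\hat\vartheta^{(n)}-\hat\vartheta^{(n+1)})+\zeta$. Lemma~\ref{lem:ERM_l2} bounds the first part by $\tau=2\rho L/(\lambda n)$. For the second, $\sigma_1\asymp \tau\sqrt{\log(1/\delta)}/\varepsilon_1\asymp \sqrt{\log n}\,n^{\eta-1}$, using $\delta=O(n^{-1})$ and $\varepsilon_1=\Theta(n^{-\eta})$. With $d$ fixed, Gaussian concentration gives $\|\zeta\|_2\le O(\sigma_1\sqrt{\log n})=O(\log n/n^{1-\eta})$ outside an event of probability $O(1/n)$. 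Invoking Lipschitzness of $\vartheta\mapsto\mu(x;\vartheta)$ (implicit in the $\rho L$-Lipschitz loss), the center discrepancy is $O(\log n/n^{1-\eta})$. Since $|R_i^{(n)}-R_i^{(n+1)}|$ is bounded by the same quantity uniformly in $i$, the score sequences are also uniformly close.

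\textbf{Order-statistic term.} Replace both score sequences by their idealized i.i.d. counterparts drawn from $F$, at a cost of $2\xi_{TV}$. The empirical quantile at ranks $k$ and $k'$ then concentrates around $F^{-1}$ at levels $1-\alpha$ and $1-\alpha_1$. Now $\alpha-\alpha_1=\alpha(1-e^{-\varepsilon_1})+O(\delta)=O(n^{-\eta})$, so the levels lie inside the Hölder window for large $n$. Assumption~\ref{assump} converts the level gap into $O(n^{-\eta\gamma})$. The uniform score perturbation from the previous step shifts order statistics by at most $O(\log n/n^{1-\eta})$, which is dominated after raising to power $\gamma$ and using $(a+b)^\gamma$ subadditivity. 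DKW handles the fluctuation of empirical quantiles around $F^{-1}$; this fluctuation, after Hölder, feeds into the $t$ term.

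\textbf{Private quantile term.} This is the main obstacle. I would use the standard exponential-mechanism utility bound. With $\Delta=O(1)$, $\varepsilon_2=\varepsilon/2$ and $M=O(n)$, the selected $e_j$ has utility $w_j$ within $O(\log(M/\beta')/\varepsilon_2)$ of optimal with probability $1-\beta'$. Dividing by $n$, its empirical rank level differs from $1-\alpha_0$ by $O(\log n/(n^{1-\eta}))$. Taking $\beta'\asymp 1/(n^\eta\log n)$ produces that failure term. Moreover, $\alpha_0=\alpha_1-2/(n\varepsilon_2)$ differs from $\alpha_1$ by $O(n^{\eta-1})$. The rank-based grid makes the discretization error comparable, which requires arguing that grid points track empirical quantiles. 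The Hölder map (with one more use of $\xi_{TV}$, giving $3\xi_{TV}$ in total) turns the level error into $O((\log n/n^{1-\eta})^\gamma)$. The $t$-deviation comes from a sub-Gaussian tail of the mechanism's rank error at scale $\sqrt{\log n}/n^{1-\eta}$, which yields $\exp(-n^{2-2\eta}t^2/\log n)$. The delicate part is keeping the levels in $[1-\alpha-\epsilon_0,1-\alpha+\epsilon_0]$ on the good event and matching these rates; a union bound of all failure events then completes the bound.
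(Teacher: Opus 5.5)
Your skeleton matches the paper's. The paper also routes through the differential CP interval (center $\mu(X;\widetilde\vartheta^{(n)})$, radius $R^{(n)}_{(k')}$). It bounds $L(C_\alpha^o\triangle C_\alpha^d)$ via Theorem~\ref{the:ERM_dCP}, and bounds $L(C_\alpha^d\triangle C_\alpha^{dp})\le 2|\hat q-R^{(n)}_{(k')}|$ via the exponential-mechanism utility bound on a rank-based grid plus the H\"older map. It then combines the two with the triangle inequality and a union bound.

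The argument goes through, but your account of the $t$-term is wrong and should be deleted. In the paper, $t$ and $\exp(-n^{2-2\eta}t^2/\log n)$ are exactly the Gaussian tail $\Pr(\|\zeta\|_2>t)$, since $\sigma_1^2\asymp\log n/(n\varepsilon_1)^2$. The scale $\sqrt{\log n}/n^{1-\eta}$ you quote is $\sigma_1$ itself.
\begin{itemize}
\item The exponential mechanism's rank error is not sub-Gaussian. Its output rank has Laplace-type tails at scale $\Delta/\varepsilon_2$, roughly $\Pr\bigl(|\hat k-(1-\alpha_0)n|>c(\log M+s)/\varepsilon_2\bigr)\le e^{-s}$.
\item DKW fluctuations cannot be charged to $t$ either. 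For $\eta<1/2$ the DKW tail $e^{-2ns^2}$ is heavier than $e^{-n^{2-2\eta}s^2/\log n}$.
\end{itemize}
None of this is needed. You already have $\|\zeta\|_2=O(\log n/n^{1-\eta})$ off an $O(1/n)$ event. DKW at level $\sqrt{\log n/n}$ also fails with probability $O(1/n)$, and it is dominated by the main rate: since $n^{-\eta}\cdot\log n/n^{1-\eta}=\log n/n$, the larger of the two rate terms is at least $\sqrt{\log n/n}$. So your pieces give
$\Pr\bigl(L(C_\alpha^o\triangle C_\alpha^{dp})>O((n^{-\eta}+\log n/n^{1-\eta})^\gamma)\bigr)\le O(n^{\eta-1}+(n^\eta\log n)^{-1})+3\xi_{TV}$.
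This implies the theorem for every $t>0$; the $t$ and exponential terms are just slack. Absorbing the linear center term into its $\gamma$-th power uses $\gamma\le1$, which the paper also assumes tacitly.

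Where you genuinely differ is in controlling order-statistic gaps. The paper uses exact uniform-spacing laws with Cantelli:
\begin{itemize}
\item The $k$ versus $k'$ gap is $\mathrm{Beta}$ with mean $O(\varepsilon_1)$, failing with probability $O(1/(n\varepsilon_1))$.
\item The extra oracle score is handled by the adjacent spacing $R^{(n+1)}_{(k+1)}-R^{(n+1)}_{(k)}$.
\item The $\hat k$ versus $k'$ gap, with $t_2=1/(n\varepsilon_2)$ and $t_1=\kappa_0\asymp\log n/(n\varepsilon_2)$, fails with probability $O(\varepsilon_2/\log n)$.
\end{itemize}
That last Cantelli bound is the paper's source of the $1/(n^\eta\log n)$ term, not the mechanism's confidence parameter. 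Your choice $\beta'\asymp1/(n^\eta\log n)$ is legitimate but arbitrary, since any polynomially small $\beta'$ keeps $\log(M/\beta')=O(\log n)$.

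Your DKW anchoring at $F^{-1}(1-\alpha)$ and $F^{-1}(1-\alpha_1)$ is cruder in rate but uniform over all ranks. Applied in the private-quantile step too, it rigorously covers the data-dependent rank $\hat k$, which the paper's spacing bound ``conditional on $\hat\kappa\le\kappa_0$'' glosses over. It also makes your uniform-score-perturbation remark superfluous. On its own, that remark would miss the one-rank shift caused by the extra oracle score.
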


The three terms in Theorem~\ref{the:4.6} admit clear interpretations. The exponential tail term comes from the Gaussian perturbation used in private model fitting. The term \(O(n^{\eta-1})=O(1/n^{1-\eta})\) reflects the discrepancy between the \(n\)-sample privatized estimator and the \((n+1)\)-sample oracle estimator. The term \(O(1/(n^\eta\log n))\) arises from the private quantile selection step in Algorithm~\ref{alg:DPQ}. Thus, under a balanced privacy allocation and an appropriate decay rate of \(\varepsilon\), the DPCP interval converges in probability to the oracle interval.

Compared with the private split conformal method of \citet{angelopoulos2022private}, DPCP uses the full dataset for both model fitting and calibration, and the private quantile mechanism introduces an error of order \(O(\log n/(n\varepsilon_2))\) in rank scale. Under the balanced split \(\varepsilon_2=\varepsilon/2\), this becomes \(O(\log n/(n\varepsilon))\). This dependence is compatible with the full-sample construction and leads to a sharper oracle approximation than split-based procedures under the same overall privacy budget.

\section{Numerical Study}
\label{sec:Numerical Study}

This section presents numerical experiments designed to evaluate the finite-sample
performance of the proposed DPCP method.
We consider both synthetic in Section \ref{synthetic} and real-world datasets in Section \ref{subsec:Real data}, with the goals of validating theoretical coverage guarantees, examining the impact of sample size, privacy budget, and miscoverage level on interval length and stability, and lastly comparing DPCP with existing private conformal methods. 

Throughout, we compare DPCP with the private split conformal prediction method of \citet{angelopoulos2022private}, which we refer to as PSCP. Since both model training and prediction set construction consume the privacy budget, careful allocation of data and privacy resources is required. Assuming a total sample size $n$ and an overall privacy requirement of $(\varepsilon,\delta)$ for the released prediction set, we adopt the allocation strategy summarized in Table~\ref{tab:tabone}. In particular, DPCP uses the full dataset for both model fitting and calibration, while PSCP splits the data evenly between training and
calibration. In both cases, the privacy budget is divided equally between the training
procedure and the quantile release step. Additional implementation details are deferred to Appendix D.

\begin{table}[h]
    \caption{Allocation of privacy budget and sample size.}
\begin{center}
\begin{tabular}{ccccc}
    \hline
    \quad & \multicolumn{2}{c}{Training} & \multicolumn{2}{c}{Calibration} \\
    \cline{2-5}
    & Privacy & Data size & Privacy & Data size \\
    \hline
    DPCP & $(\varepsilon/2,\delta)$ & $n$ & $(\varepsilon/2,0)$ & $n$ \\
    \hline
    PSCP & $(\varepsilon/2,\delta)$ & $n/2$ & $(\varepsilon/2,0)$ & $n/2$ \\
    \hline
\end{tabular}
\end{center}
    \label{tab:tabone}
\end{table}

For readability, we use several abbreviations in the figure legends throughout this section. 
DPQ stands for the differentially private quantile mechanism in Algorithm~1. 
Accordingly, $\alpha_0$-DPQ denotes Algorithm~1 run at the corrected input level used by our method, while $\alpha_0'$-DPQ denotes the same DPQ mechanism run at the private split conformal prediction method of \citet{angelopoulos2022private} for comparison. 
In addition, dCP and sCP are used in the figure legends as shorthand for differential CP and split CP, respectively. 
Likewise, p-dCP and p-sCP denote the corresponding privately calibrated versions shown in the classification experiments. 
These abbreviations are introduced only to keep the figure legends concise; throughout the main text, we use the full method names whenever possible.

\subsection{Synthetic Data} \label{synthetic}
We begin with a synthetic experiment that allows us to isolate the effects of sample size, privacy budget, and miscoverage level. The data generating process is given by
\begin{align*}
Y = X + b + \epsilon, \quad X \sim \mathcal{N}(\mu_{X}, \sigma_{X}^2), \quad \epsilon \sim \mathcal{T}(\mu_{\epsilon}, \sigma_{\epsilon}^2; -3\sigma_{\epsilon}, 3\sigma_{\epsilon}),
\end{align*}
where $\mathcal{T}$ denotes a truncated normal distribution. Throughout the experiments, we fix $b=5$, $\mu_X=\mu_\epsilon=0$, $\sigma_X=10$, and $\sigma_\epsilon=5$. The underlying regression model estimates the location parameter $b$
using the empirical mean of $Y - X$, which is the natural estimator under this additive noise model. To enforce differential privacy, we apply the Laplace mechanism to this estimator. Specifically, the private estimator of $b$ is obtained by perturbing the sample mean of $Y - X$ with Laplace noise calibrated to the sensitivity of the statistic, yielding
\begin{align*}
\hat{b}
= \frac{1}{n}\sum_{i=1}^n(Y_i - X_i) + \mathrm{Lap}\!\left(0, \frac{6\sigma_\epsilon}{n\varepsilon}\right).
\end{align*}
The resulting private prediction model then takes the form $Y = X + \hat{b}.$

\begin{figure}[h]
    \centering
    \includegraphics[width=0.8\textwidth]{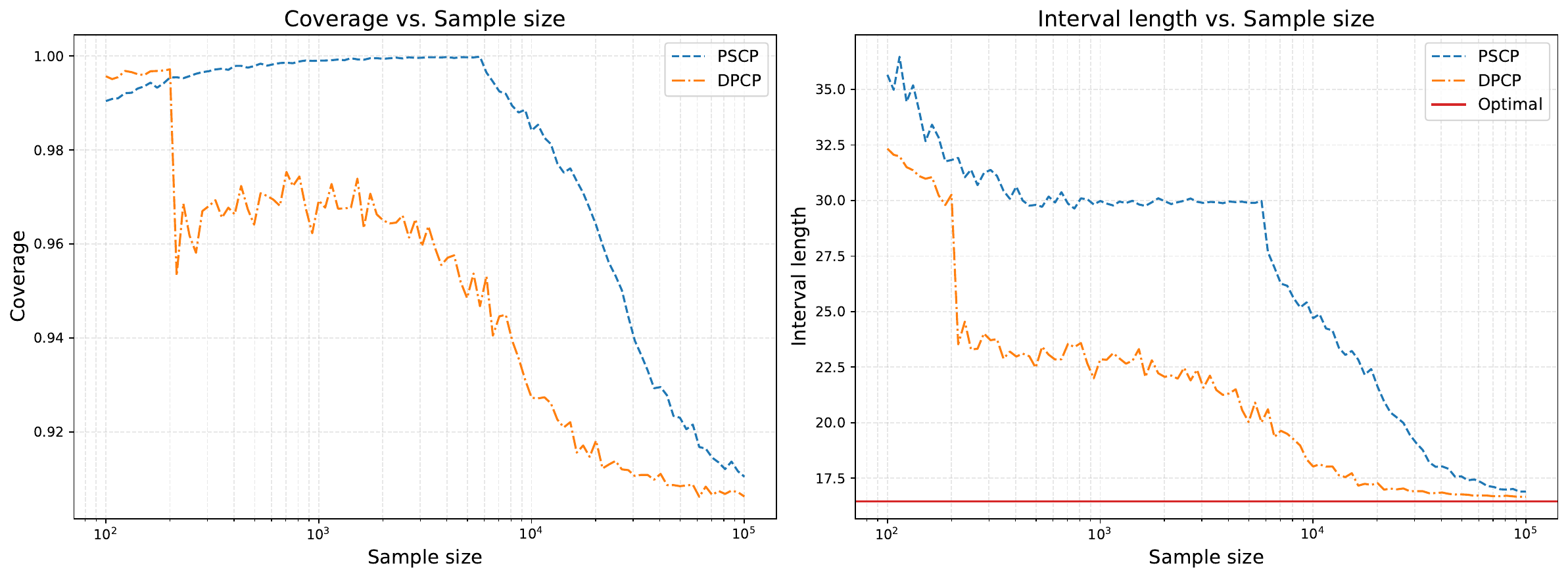}
    \caption{Coverage (left) and length (right) vary with sample size $n$. Set $\alpha = 0.1$, $\varepsilon = 0.1$}
    \label{fig:synthetic_sample_size}
\end{figure}

\begin{figure}[h]
    \centering
    \includegraphics[width=0.8\textwidth]{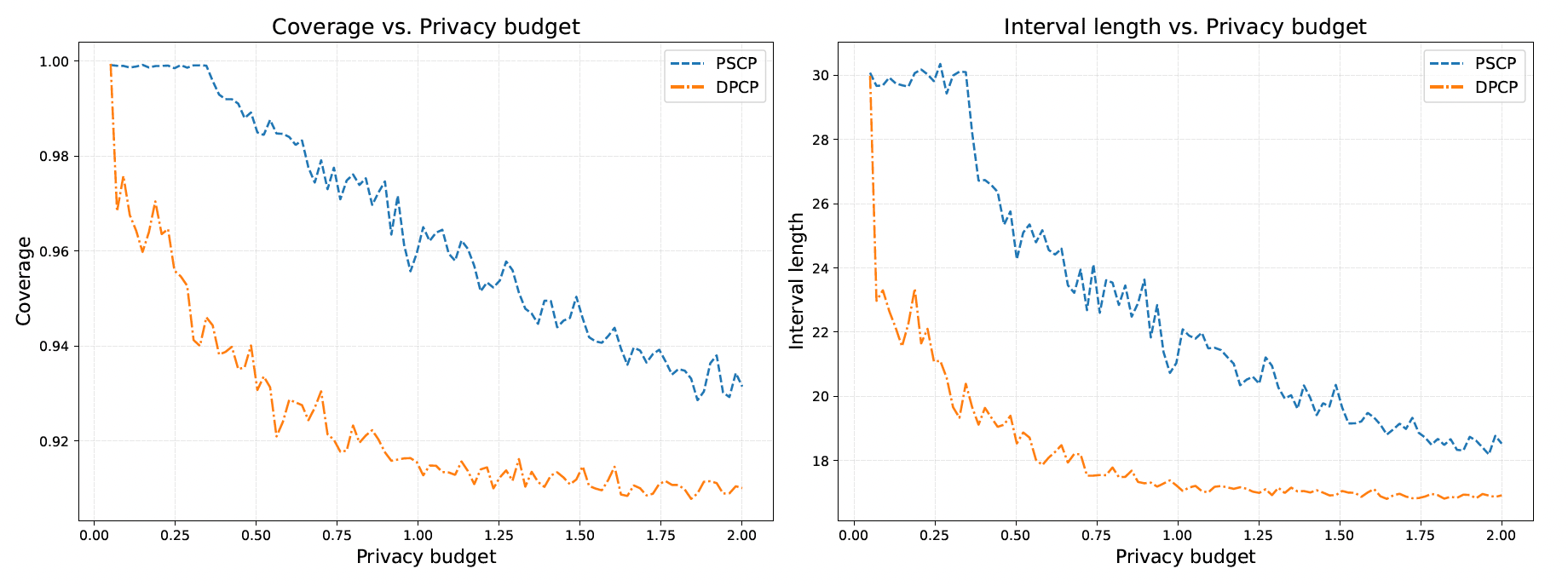}
    \caption{Coverage (left) and length (right) vary with privacy budget $\varepsilon$. The privacy budget is split as $\varepsilon_1 = 0.05$ for model training and $\varepsilon_2 = \varepsilon - \varepsilon_1$ for quantile estimation, with $\alpha = 0.1$ and sample size $n = 2000$.}
    \label{fig:synthetic_privacy_budget}
\end{figure}

\begin{figure}[h]
    \centering
    \includegraphics[width=0.8\textwidth]{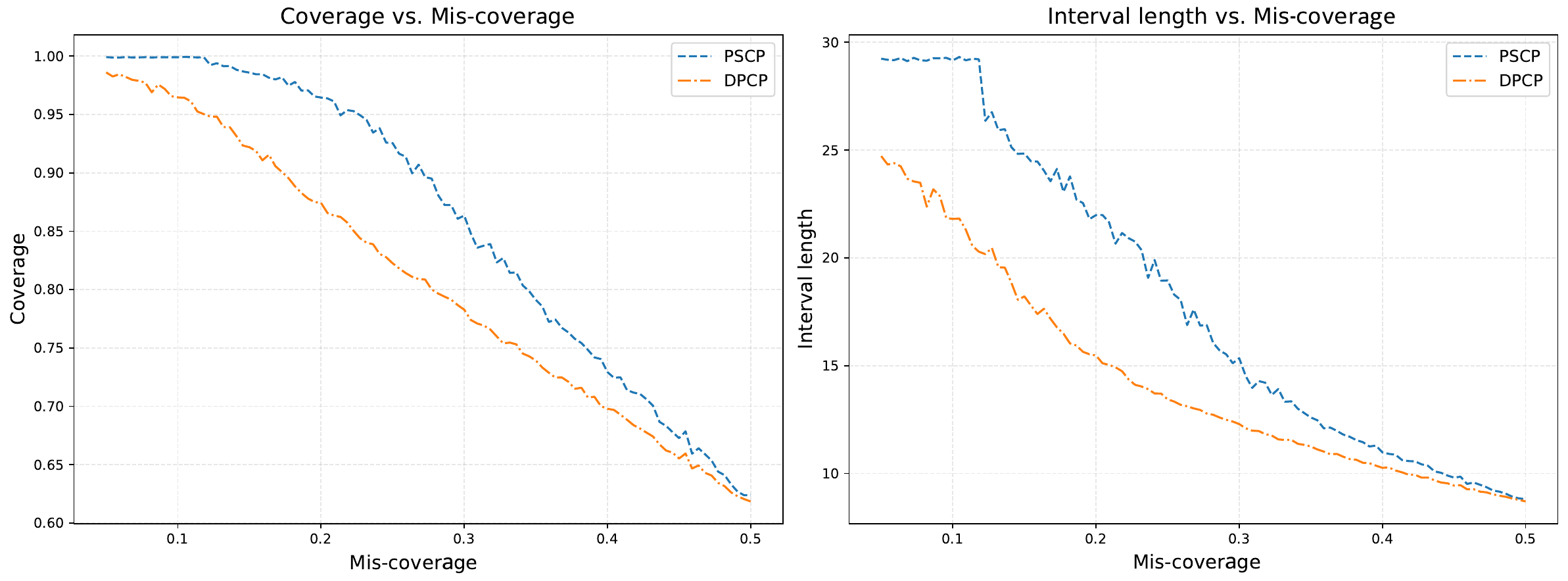}
    \caption{Coverage (left) and length (right)  vary with mis-coverage level $\alpha$. Set $\varepsilon = 0.1$, $n = 2000$}
    \label{fig:synthetic_min-covearge}
\end{figure}

Figures~\ref{fig:synthetic_sample_size}, \ref{fig:synthetic_privacy_budget}, and \ref{fig:synthetic_min-covearge} collectively illustrate the finite-sample performance of DPCP and its robustness on sample size, privacy budget, and target miscoverage level. Figure~\ref{fig:synthetic_sample_size} examines convergence as the sample size increases. As predicted by the theoretical analysis in Section~\ref{sec:theory}, DPCP achieves valid coverage while exhibiting substantially shorter prediction intervals than PSCP. In small samples, the differentially private quantile mechanisms in both DPCP and PSCP tend to be conservative, often selecting near-maximum nonconformity scores, which yields coverage close to one and produces visible jumps in interval length. As the sample size grows, these effects diminish rapidly and the DPCP intervals converge smoothly toward the oracle benchmark. In particular, this convergence occurs with fewer samples than required by PSCP, reflecting the improved data efficiency of using the full dataset for both training and calibration.

Figure~\ref{fig:synthetic_privacy_budget} examines the impact of the privacy budget $\varepsilon$ on both coverage and interval length. Across the full range of $\varepsilon$ values considered, DPCP consistently attains coverage close to the nominal level while producing substantially shorter prediction intervals than PSCP. As the privacy budget increases, the intervals produced by both methods shrink gradually, reflecting reduced privacy noise; however, DPCP exhibits a markedly faster stabilization in interval length, highlighting its improved statistical efficiency. Additional refinements to the quantile adjustment that further mitigate conservativeness at larger privacy budgets are discussed in Appendix \ref{C}. Finally, Figure~\ref{fig:synthetic_min-covearge} reports performance across a range of target miscoverage levels $\alpha$. The results confirm that DPCP adapts smoothly to changes in $\alpha$, preserving the expected trade-off between coverage and interval length. Across all levels considered, DPCP consistently achieves coverage close to the target while maintaining shorter intervals than PSCP, in agreement with the theoretical guarantees established in Section~\ref{sec:theory}.

Taken together, these experiments demonstrate that DPCP not only preserves privacy and coverage guarantees but also yields substantial efficiency gains in finite samples, particularly in regimes with limited data or strong privacy constraints. Having established the finite-sample behavior of DPCP in this synthetic setting, we next evaluate its practical performance on real datasets. These experiments assess whether the efficiency gains observed above persist under realistic data distributions, model misspecification, and higher-dimensional covariates.

\subsection{Real Data}
\label{subsec:Real data}
In the following experiment, we set miscoverage $\alpha = 0.1$ and total privacy budget $\varepsilon = 0.1$. We employed least squares regression, ridge regression, and lasso regression as the underlying algorithms and trained the private models using the Opacus library. We conducted numerical experiments on 9 standard datasets, with the complete results presented in Appendix \ref{D}. Additionally, Appendix \ref{D} includes the experimental results of classification tasks using deep learning models. We used 50\% of the data as the test set to calculate coverage, while the remaining 50\% was used for model training and prediction set construction. Each experiment was repeated 100 times.

\begin{figure}
    \centering
    \includegraphics[width=0.8\textwidth]{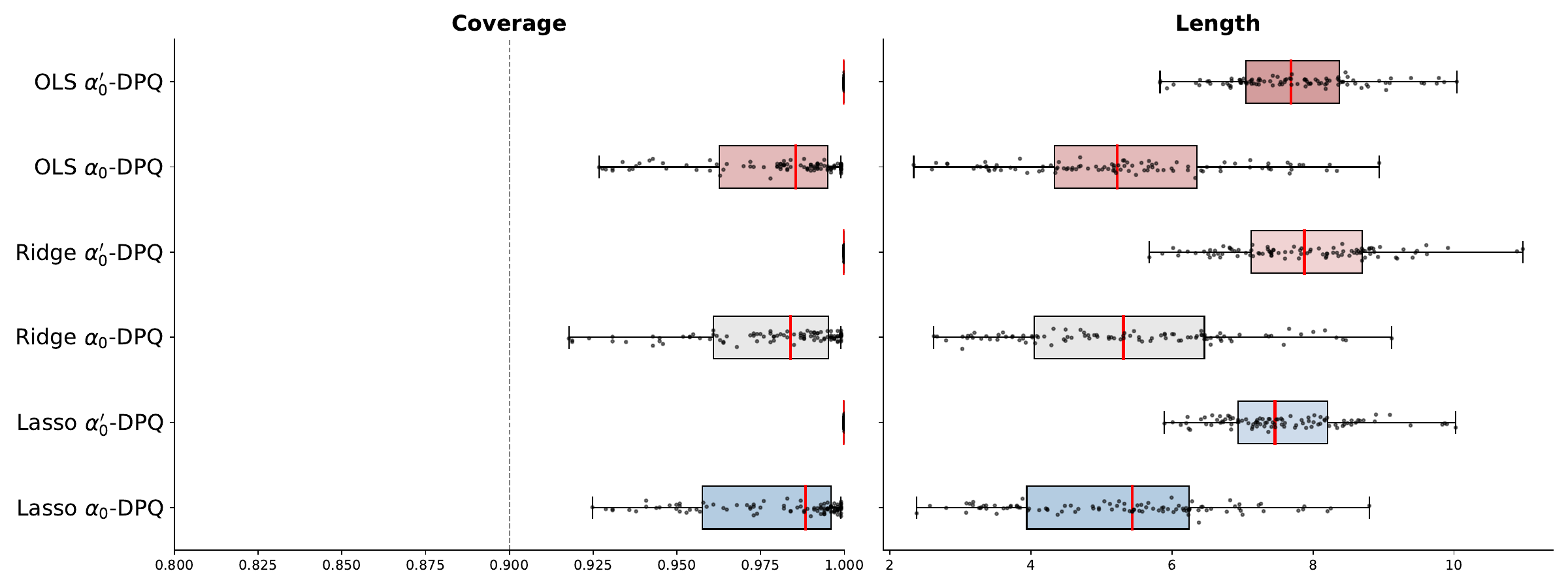}
    \includegraphics[width=0.8\textwidth]{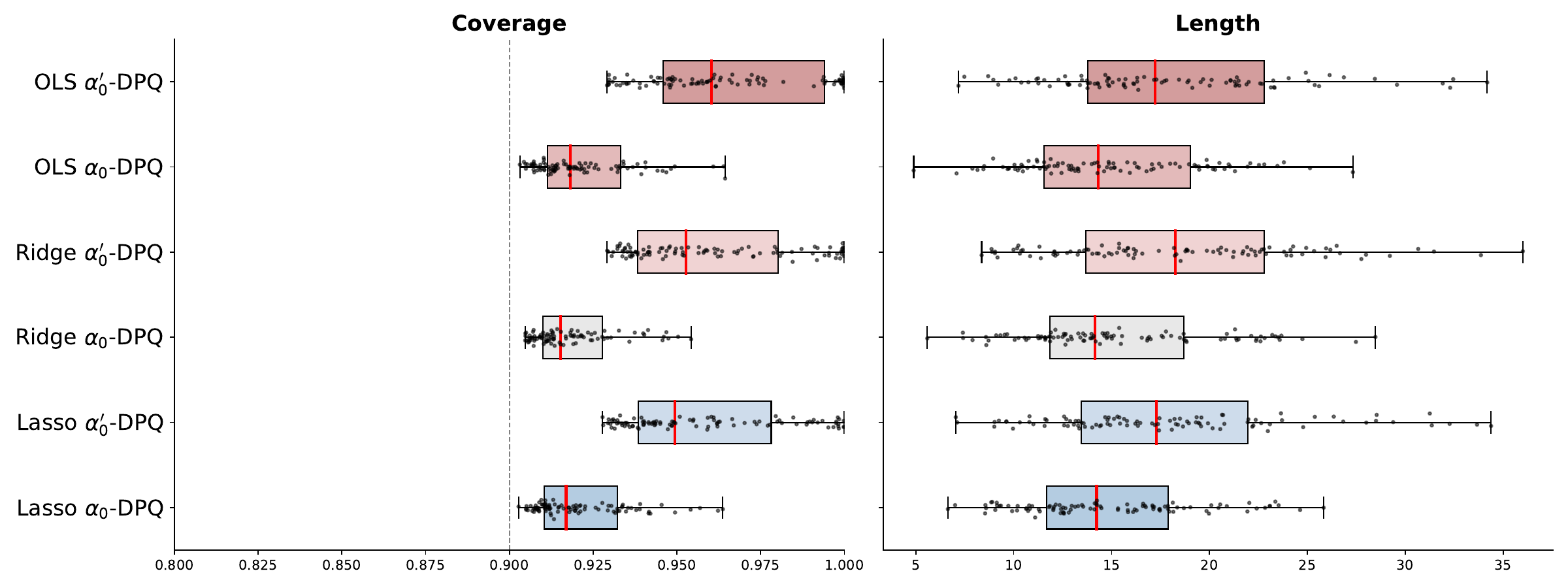}
    \caption{Coverage and length of private prediction intervals on the communities and crime dataset (upper, sample size = 1994) and the power consumption dataset (lower, sample size = 52416).}
    \label{fig:DPQ_two_dataset}
\end{figure}

Figure \ref{fig:DPQ_two_dataset} shows the performance of the private prediction sets obtained using the $\alpha_0$-DPQ and $\alpha^\prime_0$-DPQ methods. We aimed to achieve a private prediction set with a coverage level of 0.9, and our method consistently comes closer to this target. In cases with relatively small sample sizes, the $\alpha^\prime_0$-DPQ method almost always outputs the maximum nonconformity score, making private prediction sets overly conservative and nearly meaningless. In contrast, our method maintains a certain level of effectiveness under the same conditions. When the sample size is relatively large, the privacy prediction sets obtained by our method also outperform those produced by the $\alpha^\prime_0$-DPQ method. This is because our method outputs the $0.9 + 2/n\epsilon$ privacy quantile, whereas the $\alpha^\prime_0$-DPQ method always selects a greater privacy quantile.

\begin{figure}
    \centering
    \includegraphics[width=0.8\textwidth]{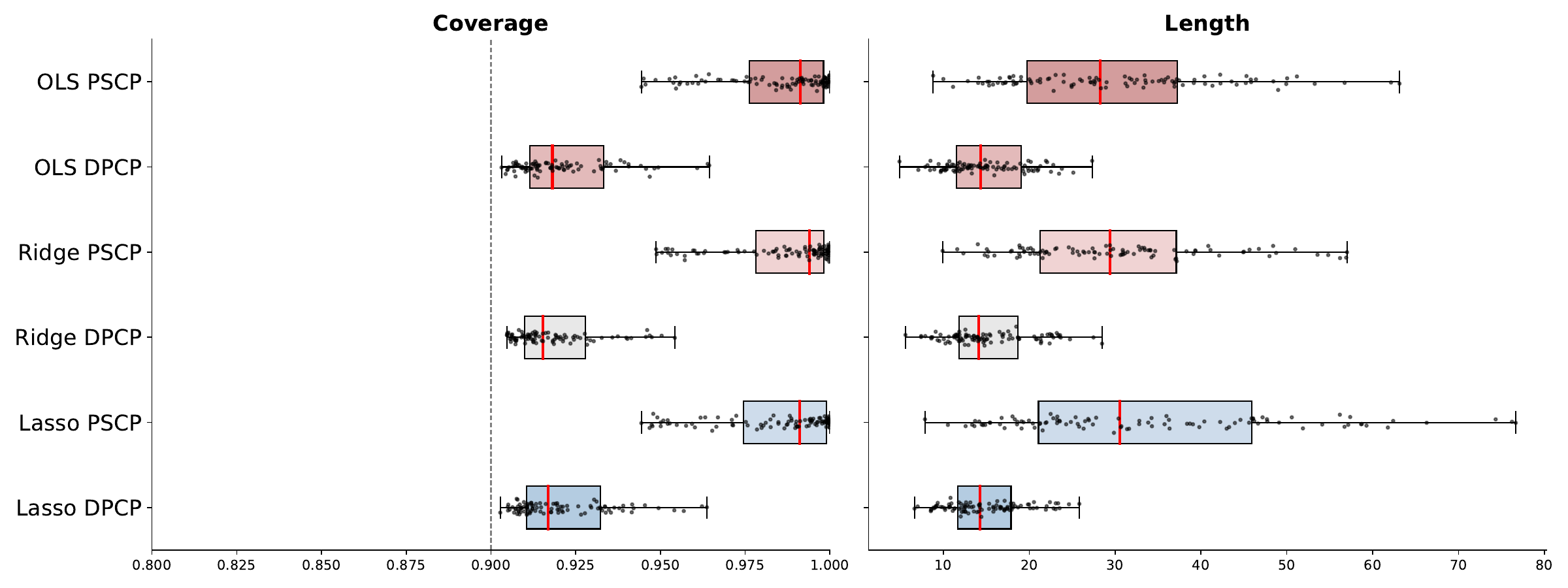}
    \caption{Coverage (left) and length (right) of private prediction intervals on the power consumption dataset.}
    \label{fig:DPCP_power}
\end{figure}

Figure \ref{fig:DPCP_power} compares the DPCP intervals and PSCP intervals. It is evident that when privacy protection is strong, the private prediction sets obtained from our strategy significantly outperform existing methods and their distribution is more concentrated. This is because the DPCP method exhibits higher statistical efficiency during both the model training phase and the prediction set construction phase, introducing less privacy noise and avoiding the randomness associated with data splitting in the split CP method.

\begin{figure}
    \centering
    \includegraphics[width=0.8\textwidth]{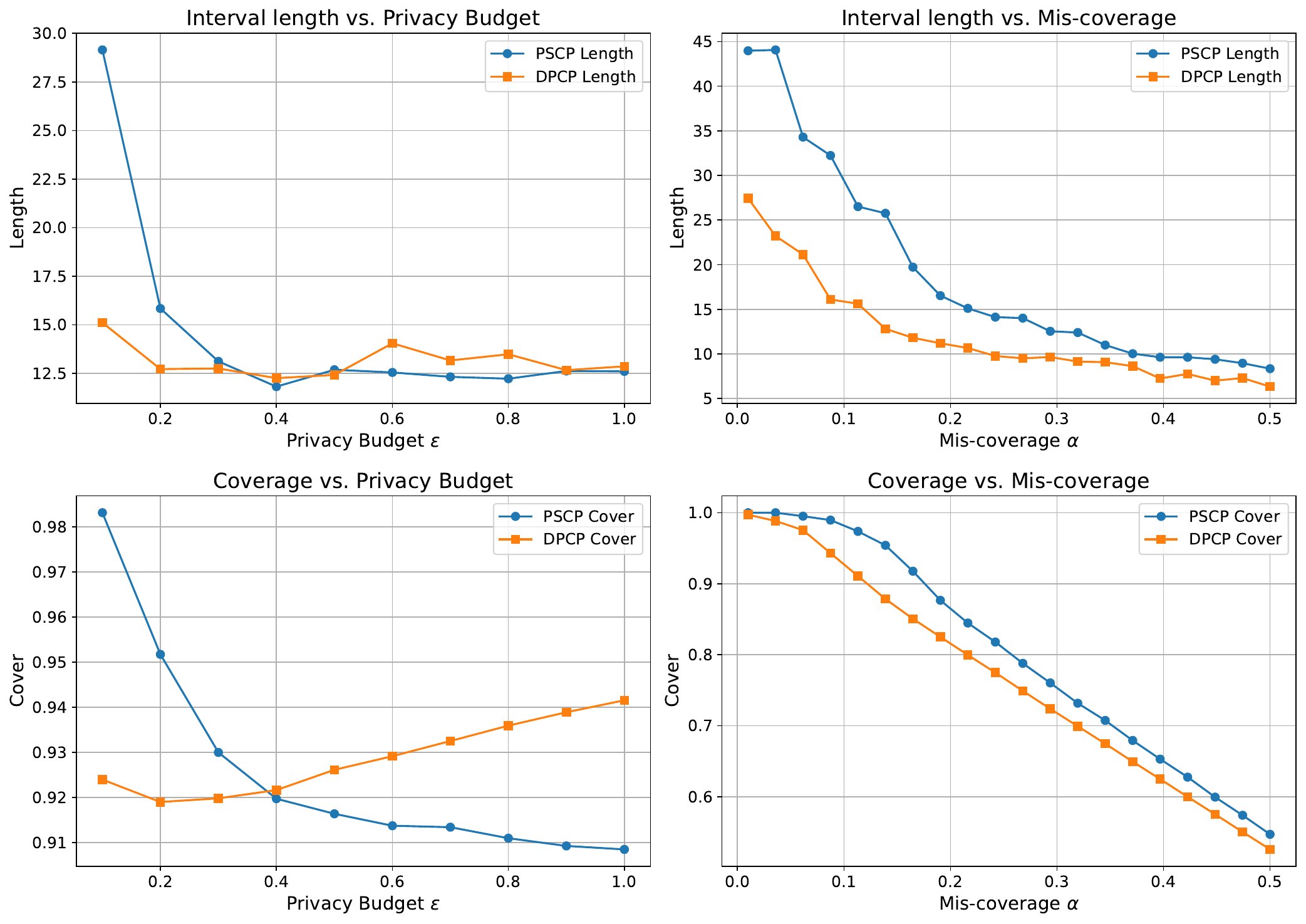}
    \caption{Private prediction intervals on the power consumption dataset. The left side shows the interval performance as the privacy budget changes with $\alpha = 0.1$, and the right side shows the interval performance as the mis-coverage changes with $\varepsilon = 0.1$.
}
    \label{fig:Real_power_vary}
\end{figure}

\section{Discussion}
\label{sec:Discussion}

This paper introduces DPCP, a new conformal inference framework that leverages the stability of differentially private learning algorithms to construct prediction sets from a single model fit. By integrating conformal calibration directly with DP, DPCP avoids the sample-splitting inefficiency of private split conformal methods and makes fuller use of the available data. As a result, it can yield more efficient prediction sets while preserving end-to-end privacy. Our theoretical analysis establishes privacy, coverage, and efficiency guarantees for DPCP, and the empirical results show that DPCP typically produces shorter prediction intervals than private split CP under comparable privacy budgets, especially in the strong-privacy regime.

The proposed framework is also flexible. It is not tied to a specific model class or learning algorithm. Its key requirement is the availability of an oracle conformal procedure together with a suitable oracle-to-private comparison. Appendix~\ref{A} develops such a comparison for differential CP. In this sense, the DPCP construction may be adapted to other conformal settings beyond the exchangeable case, including weighted conformal procedures designed for covariate shift or other forms of distributional heterogeneity \citep{tibshirani2019conformal, barber2023conformal}. More broadly, if the corresponding oracle procedure enjoys stronger validity properties, then one may expect analogous private versions to inherit approximate counterparts of these guarantees, up to the distortion introduced by the privacy mechanism.

Several directions remain for future work. First, our exact coverage analysis for DPCP relies on additional regularity conditions linking the randomized private threshold to the conditional behavior of the score distribution. It would be valuable to identify more primitive or verifiable conditions under which these assumptions hold, or to replace them with weaker assumptions that still yield useful finite-sample guarantees. Second, although our efficiency analysis shows that DPCP approaches the oracle conformal predictor under suitable asymptotic conditions, sharper nonasymptotic bounds may be possible. Finally, while our analysis is presented under the standard \((\varepsilon,\delta)\)-DP framework, the methodology is not restricted to this setting. Alternative notions such as concentrated DP \citep{bun2016concentrated}, Rényi DP \citep{mironov2017renyi}, and Gaussian DP \citep{dong2022gaussian} may lead to sharper or more interpretable oracle-to-private comparisons, and exploring these extensions is a natural topic for future research.

\newpage

\appendix

\section{Theoretical Guarantees for Differential CP}
\label{A}

This appendix collects the main theoretical results for differential CP. The goal is to provide a self-contained presentation of the formal guarantees supporting the methodology in the main text. For clarity, Appendix~A is divided into two parts: we first state the main theorems, and then provide their proofs.

\subsection{Theorems for Differential CP}
\label{A.1}
The key idea is to treat differential CP as a privacy-stability-based approximation to an oracle conformal predictor built on the augmented dataset. This perspective allows us to separate the analysis into three parts. We first compare the differential CP predictor with its oracle counterpart and quantify how differential privacy perturbs the corresponding miscoverage event. We then translate this comparison into finite-sample coverage guarantees for differential CP. After that, we study a training-conditional validity result under a released-model approximation, which clarifies how the differential CP predictor behaves once the private model has been fixed. Finally, under an empirical risk minimization framework, we quantify the efficiency gap between differential CP and the oracle conformal predictor.

The usefulness of the differential CP developed in Section~\ref{sec:dCP} becomes apparent when we relate its prediction set $C_{\alpha}^{d}(X_{n+1})$ to the oracle conformal predictor $C_{\alpha}^{o}(X_{n+1})$, namely, the infeasible conformal predictor obtained from the augmented dataset
$
\mathcal{D}_{n+1}=\mathcal{D}_n\cup\{(X_{n+1},Y_{n+1})\}.
$
Although this oracle predictor is infeasible in practice, since its construction depends on the unknown test response $Y_{n+1}$, it serves as a useful benchmark: it is the natural full-data conformal object that differential CP attempts to approximate through the stability of a differentially private mechanism. The next theorem shows that the miscoverage probability of differential CP can be sandwiched by that of oracle conformal predictors at slightly perturbed nominal levels, with the multiplicative and additive distortions governed by the DP parameters.

\begin{theorem}
\label{the:dcp_ocp}
Let $\mathcal{A}$ be an $(\varepsilon,\delta)$-DP mechanism and let $\alpha\in(0,1)$.
Define
\[
\alpha' = e^{-\varepsilon}(\alpha-\delta),\qquad
\alpha_{+} = \alpha' + \frac{1}{n+1},\qquad
\alpha_{-} = \alpha' - \frac{1}{n+1}.
\]
Then, for any dataset $\mathcal{D}_n$ and any new point $(X_{n+1},Y_{n+1})$, the differential CP set defined in \eqref{eq:dCP} satisfies
\begin{align*}
\Pr\!\left(Y_{n+1}\notin C_{\alpha}^{d}(X_{n+1}) \mid \mathcal{D}_{n+1}\right)
&\le e^{\varepsilon}
     \Pr\!\left(Y_{n+1}\notin C_{\alpha_{+}}^{o}(X_{n+1})
     \mid \mathcal{D}_{n+1}\right)
     +\delta,
\end{align*}
whenever $\alpha_{+}\in(0,1)$, and
\begin{align*}
\Pr\!\left(Y_{n+1}\notin C_{\alpha}^{d}(X_{n+1}) \mid \mathcal{D}_{n+1}\right)
&\ge e^{-\varepsilon}
     \Bigl(
       \Pr\!\left(Y_{n+1}\notin
               C_{\alpha_{-}}^{o}(X_{n+1})
               \mid \mathcal{D}_{n+1}\right)
       -\delta
     \Bigr),
\end{align*}
whenever $\alpha_{-}\in(0,1)$.
\end{theorem}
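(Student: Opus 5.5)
The plan is to write both miscoverage events as events about the output of the single mechanism $\mathcal{A}$ on the two adjacent datasets $\mathcal{D}_n$ and $\mathcal{D}_{n+1}$, with $\mathcal{D}_{n+1}$ held fixed. The probability is then over the internal randomness of $\mathcal{A}$ only. The steps are as follows.

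First I reformulate the differential CP miscoverage event. Conditionally on $\mathcal{D}_{n+1}$, the fitted model $\hat\mu_n=\mathcal{A}(\mathcal{D}_n)$ is the only random object. The event $Y_{n+1}\notin C^d_\alpha(X_{n+1})$ says that $R_{n+1}(\hat\mu_n)$ exceeds the $\lceil(1-\alpha')(n+1)\rceil$-th smallest of $R_1(\hat\mu_n),\dots,R_n(\hat\mu_n)$. This is a deterministic function of the pair $(\mathcal{D}_{n+1},\hat\mu_n)$, so it equals $\{\hat\mu_n\in\mathcal F\}$ for the measurable set
\[
\mathcal F=\bigl\{\mu:\ R((X_{n+1},Y_{n+1}),\mu)>\text{the corresponding order statistic of }\{R(Z_i,\mu)\}_{i\le n}\bigr\}.
\]

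Second I describe the oracle event as a set of models. The oracle predictor uses $\hat\mu_{n+1}=\mathcal{A}(\mathcal{D}_{n+1})$ and the rank of $R_{n+1}$ among all $n+1$ scores. Oracle miscoverage at level $\beta$ means that $R_{n+1}(\mu)$ exceeds the $\lceil(1-\beta)(n+1)\rceil$-th smallest of the $n+1$ scores, i.e.\ $\{\hat\mu_{n+1}\in\mathcal F^o_\beta\}$. The key combinatorial step is a rank comparison, applied at a fixed $\mu$. Including $R_{n+1}$ in the pool shifts the relevant order statistic by at most one position. So the event "$R_{n+1}$ exceeds the $k$-th smallest among the first $n$ scores" is contained in, and contains, oracle exceedance events whose ranks differ by one. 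A rank shift of one corresponds to a level shift of $1/(n+1)$, which is exactly $\alpha_\pm=\alpha'\pm\frac1{n+1}$. This gives $\mathcal F^o_{\alpha_-}\subseteq\mathcal F\subseteq\mathcal F^o_{\alpha_+}$, possibly after adjusting which side carries ties. The restriction $\alpha_\pm\in(0,1)$ keeps the ranks in range.

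Third I apply Definition~\ref{the:dp_definition} with the fixed set $\mathcal F^o_{\alpha_+}$ and the adjacent pair $(\mathcal{D}_{n+1},\mathcal{D}_n)$:
\[
\Pr(\hat\mu_n\in\mathcal F)\le \Pr(\hat\mu_n\in\mathcal F^o_{\alpha_+})\le e^{\varepsilon}\Pr(\hat\mu_{n+1}\in\mathcal F^o_{\alpha_+})+\delta.
\]
This is the upper bound. For the lower bound I apply DP in the other direction, $\Pr(\hat\mu_{n+1}\in\mathcal F^o_{\alpha_-})\le e^{\varepsilon}\Pr(\hat\mu_n\in\mathcal F^o_{\alpha_-})+\delta$. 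Then I use $\mathcal F^o_{\alpha_-}\subseteq\mathcal F$ and rearrange. One point to flag: $\mathcal F$ depends on $\mathcal{D}_{n+1}$ but not on which model is plugged in. That is what makes the sets legitimate fixed events under the conditioning, and so the DP inequality applies.

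The main obstacle is the second step, the ceiling arithmetic. I must check that $\lceil(1-\alpha')(n+1)\rceil$ applied to $n$ scores, versus $\lceil(1-\alpha_\pm)(n+1)\rceil$ applied to $n+1$ scores, gives nested events with the right orientation, including the tie conventions ($>$ versus $\ge$). I would verify this by writing each event as "fewer than $k$ of the scores are $\ge R_{n+1}$" and counting the extra term contributed by $R_{n+1}$ itself. If the off-by-one lands on the opposite side, the claimed $\pm\frac1{n+1}$ slack still absorbs it.
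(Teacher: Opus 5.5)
Your proposal is correct and follows essentially the same route as the paper. For each fixed model you sandwich the differential CP miscoverage event between oracle miscoverage events at levels $\alpha_\pm$ by a pure rank comparison, then apply the $(\varepsilon,\delta)$-DP inequality to these fixed model sets in each direction. The only difference is bookkeeping in the step you deferred: the paper's oracle uses rank $\lceil(1-\beta)(n+2)\rceil$ among the $n+1$ scores and checks $k_+\le k\le k_-$. Under your $\lceil(1-\beta)(n+1)\rceil$ convention the ranks for $\alpha_\pm$ are exactly $k\mp1$, and the dCP event coincides with the oracle event at level $\alpha'$, so the nesting holds under either convention.
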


Theorem~\ref{the:dcp_ocp} should be viewed as the basic comparison principle for differential CP. It shows that, conditional on the augmented dataset, the miscoverage event of differential CP is controlled by the corresponding oracle miscoverage event up to the privacy-induced distortion. Once this oracle comparison is established, the finite-sample validity of differential CP follows by combining it with the standard validity property of the oracle conformal predictor under exchangeability.

\begin{theorem}
\label{thm:dcp_coverage}
Let $\mathcal{A}$ be an $(\varepsilon,\delta)$-DP mechanism and let $R$ be a nonconformity score. If the scores $\{R(Z_i,\mathcal{A}(\mathcal{D}_{n+1}))\}_{i=1}^{n+1}$ are exchangeable, then
\[
\Pr\!\left(Y_{n+1}\in C_{\alpha}^{d}(X_{n+1})\right)
\ge 1-\alpha-\frac{e^\varepsilon}{n+1}.
\]
Furthermore, if these scores are almost surely distinct, then
\[
\Pr\!\left(Y_{n+1}\in C_{\alpha}^{d}(X_{n+1})\right)
\le
1 - e^{-2\varepsilon}\alpha
+ e^{-\varepsilon}\!\left[\delta(1+e^{-\varepsilon}) + \frac{2}{n+1}\right].
\]
\end{theorem}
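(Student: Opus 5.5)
We derive Theorem~\ref{thm:dcp_coverage} by integrating the conditional sandwich of Theorem~\ref{the:dcp_ocp} over $\mathcal{D}_{n+1}$ and then bounding the oracle miscoverage probabilities with the rank-uniformity bounds of Proposition~\ref{prop:split_cp_validity}. These bounds apply because the oracle scores $\{R(Z_i,\mathcal{A}(\mathcal{D}_{n+1}))\}_{i=1}^{n+1}$ are exchangeable. Throughout, $\alpha'=e^{-\varepsilon}(\alpha-\delta)$ and $\alpha_\pm=\alpha'\pm\frac{1}{n+1}$, as in Theorem~\ref{the:dcp_ocp}.

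\emph{Lower bound on coverage.} Taking expectations of the first inequality in Theorem~\ref{the:dcp_ocp} over $\mathcal{D}_{n+1}$ gives
\[
\Pr\bigl(Y_{n+1}\notin C^d_\alpha(X_{n+1})\bigr)\le e^{\varepsilon}\Pr\bigl(Y_{n+1}\notin C^o_{\alpha_+}(X_{n+1})\bigr)+\delta .
\]
The oracle set is a full-conformal set on $n+1$ exchangeable scores, with threshold the $\lceil(1-\alpha_+)(n+1)\rceil$-th order statistic. By the standard rank argument (the validity part of Proposition~\ref{prop:split_cp_validity}, applied with the test score included among the calibration scores), $\Pr(Y_{n+1}\notin C^o_{\alpha_+})\le\alpha_+$. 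Substituting,
\[
e^{\varepsilon}\alpha_+ +\delta=e^{\varepsilon}\Bigl(e^{-\varepsilon}(\alpha-\delta)+\tfrac1{n+1}\Bigr)+\delta=\alpha+\frac{e^\varepsilon}{n+1},
\]
which is the claimed lower bound. If $\alpha_+\ge 1$ the bound is trivial, so we may assume $\alpha_+\in(0,1)$.

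\emph{Upper bound on coverage.} Taking expectations of the second inequality in Theorem~\ref{the:dcp_ocp} gives
\[
\Pr\bigl(Y_{n+1}\notin C^d_\alpha(X_{n+1})\bigr)\ge e^{-\varepsilon}\Bigl(\Pr\bigl(Y_{n+1}\notin C^o_{\alpha_-}(X_{n+1})\bigr)-\delta\Bigr).
\]
When the scores are a.s.\ distinct, the upper half of Proposition~\ref{prop:split_cp_validity} gives the oracle coverage bound $\Pr(Y_{n+1}\in C^o_{\alpha_-})\le 1-\alpha_-+\frac1{n+1}$. Hence $\Pr(Y_{n+1}\notin C^o_{\alpha_-})\ge\alpha_- -\frac1{n+1}=\alpha'-\frac2{n+1}$. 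Substituting,
\[
\Pr\bigl(Y_{n+1}\notin C^d_\alpha\bigr)\ge e^{-\varepsilon}\Bigl(e^{-\varepsilon}(\alpha-\delta)-\tfrac{2}{n+1}-\delta\Bigr)=e^{-2\varepsilon}\alpha-e^{-\varepsilon}\Bigl[\delta(1+e^{-\varepsilon})+\tfrac2{n+1}\Bigr].
\]
Taking complements yields exactly the stated upper bound. If $\alpha_-\le 0$, the right-hand side of the claimed bound is at least $1$, so the statement holds trivially.

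\emph{Main obstacle.} The step needing most care is justifying that averaging Theorem~\ref{the:dcp_ocp} is legitimate. That theorem is conditional on $\mathcal{D}_{n+1}$, whereas the oracle rank bounds are marginal statements over both the data and the mechanism's randomness. We must check that the oracle probability inside Theorem~\ref{the:dcp_ocp} is taken over the same internal randomness of $\mathcal{A}(\mathcal{D}_{n+1})$ for which exchangeability is assumed. We must also handle the ceiling conventions in the quantile indices $\lceil(1-\alpha_\pm)(n+1)\rceil$ so that the $\pm\frac1{n+1}$ shifts match the rank bounds used above.
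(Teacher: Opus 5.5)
Your proposal is correct and is essentially the paper's own proof. You average the two conditional inequalities of Theorem~\ref{the:dcp_ocp} over $\mathcal{D}_{n+1}$, insert the oracle rank bounds $\Pr(Y_{n+1}\notin C^o_{\alpha_+}(X_{n+1}))\le\alpha_+$ and $\Pr(Y_{n+1}\in C^o_{\alpha_-}(X_{n+1}))\le 1-\alpha_-+\frac{1}{n+1}$, and simplify, exactly as the paper does, with your added edge cases checking out.

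The convention check you defer goes through. With $k=\lceil(1-\alpha')(n+1)\rceil$, your oracle indices are $\lceil(1-\alpha_\pm)(n+1)\rceil=k\mp1$, so the nesting behind Theorem~\ref{the:dcp_ocp} still holds. This $(n+1)$-indexing is in fact the one that yields $1-\alpha_-+\frac{1}{n+1}$; the $\lceil(1-\alpha_\pm)(n+2)\rceil$ indexing in the paper's proof of Theorem~\ref{the:dcp_ocp} can give larger oracle coverage (e.g.\ coverage $1>0.97$ for $n=9$, $\alpha_-=0.13$).
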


Theorem~\ref{thm:dcp_coverage} establishes the finite-sample marginal validity of differential CP and shows explicitly how the privacy parameters affect the attainable coverage level. This result isolates the statistical effect of replacing the oracle full-data conformal construction with the privacy-stable differential CP rule. To move from differential CP to a fully private conformal predictor, one must additionally privatize the threshold selection step. The next lemma provides a coverage control statement for the differentially private quantile mechanism, which later serves as a key ingredient in the analysis of DPCP.

\begin{lemma}
\label{lem:dpq_coverage}
Let $\beta\in(0,1/2]$ be the input level to Algorithm~\ref{alg:DPQ}, let $N=|\mathcal D_{cal}|$, and define
\[
\alpha_0=\beta-\frac{2}{N\varepsilon}>0.
\]
Assume that the nonconformity scores $R_1,\ldots,R_N$ are exchangeable and that
\[
\sum_{j=1}^M \frac{w_j}{N}\Pr(\hat q=e_j)
\le
1+\frac{2}{N\varepsilon\alpha_0}.
\]
Then, for any $i\in[N]$,
\[
\Pr(R_i\le \hat q)\ge 1-\beta.
\]
If, in addition, $\Pr(R_i=e_j)=0$ for all $i\in[N]$ and $j\in[M]$, then
\[
\Pr(R_i\le \hat q)\le 1-\beta+\frac{2}{N\varepsilon\beta-2}.
\]
\end{lemma}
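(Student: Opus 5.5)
The proof is a direct exchangeability argument: the miscoverage probability of a single score becomes an expected empirical fraction, which is then bounded through the weights $w_j$ that Algorithm~\ref{alg:DPQ} attaches to each candidate threshold. The key observation is that the output $\hat q$ of Algorithm~\ref{alg:DPQ} depends on $(R_1,\ldots,R_N)$ only through the counts $|\{k:R_k<e_j\}|$ and $|\{k:R_k>e_j\}|$. These counts, and hence $\hat q$, are invariant under permutations of the scores, and the exponential-mechanism randomness is independent of the data. Combined with exchangeability of $R_1,\ldots,R_N$, this means $\Pr(R_i>\hat q)$ does not depend on $i$. Averaging over $i$ then gives
\[
\Pr(R_i>\hat q)=\frac1N\sum_{k=1}^N\Pr(R_k>\hat q)=\mathbb E\!\left[\frac{|\{k:R_k>\hat q\}|}{N}\right],
\]
and the same identity holds with ``$<$'' in place of ``$>$''.

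\emph{Lower bound on coverage.} On the event $\{\hat q=e_j\}$, the definition of $w_j$ as a maximum gives $|\{k:R_k>e_j\}|\le \alpha_0 w_j$. Conditioning on the selected index and using the assumed bound on $\sum_j \frac{w_j}{N}\Pr(\hat q=e_j)$, we get
\[
\Pr(R_i>\hat q)\le \alpha_0\sum_{j=1}^M\frac{w_j}{N}\Pr(\hat q=e_j)\le \alpha_0\Bigl(1+\frac{2}{N\varepsilon\alpha_0}\Bigr)=\alpha_0+\frac{2}{N\varepsilon}=\beta .
\]
Hence $\Pr(R_i\le\hat q)\ge 1-\beta$. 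Strictly, $w_j$ is random because it depends on the scores, so the expectation should be written as $\mathbb E[w_J]$ with $J$ the selected index. I will read the hypothesis in exactly this sense, $\sum_j \mathbb E[w_j\mathbf 1\{\hat q=e_j\}]/N$, and state this reading explicitly.

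\emph{Upper bound on coverage.} Assume now $\Pr(R_i=e_j)=0$ for all $i,j$. Then almost surely no score equals any grid point, so $\{R_i\le\hat q\}=\{R_i<\hat q\}$ a.s. The same exchangeability identity gives $\Pr(R_i\le\hat q)=\mathbb E\bigl[|\{k:R_k<\hat q\}|/N\bigr]$. On $\{\hat q=e_j\}$ we have $|\{k:R_k<e_j\}|\le(1-\alpha_0)w_j$, and therefore
\[
\Pr(R_i\le\hat q)\le(1-\alpha_0)\Bigl(1+\frac{2}{N\varepsilon\alpha_0}\Bigr)=1-\alpha_0+\frac{2}{N\varepsilon\alpha_0}-\frac{2}{N\varepsilon}.
\]
Substituting $\alpha_0=\beta-2/(N\varepsilon)$, the right-hand side equals $1-\beta+\frac{2}{N\varepsilon\alpha_0}$. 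Finally, $N\varepsilon\alpha_0=N\varepsilon\beta-2$, which yields the stated bound $1-\beta+\frac{2}{N\varepsilon\beta-2}$.

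The arithmetic is routine, and the condition $\beta\le 1/2$ is only needed to keep $\alpha_0\in(0,1)$ and the weights well defined. The one step needing care is the first identity, which has two parts. First, one must justify that the private output is a symmetric function of the scores, which holds because Algorithm~\ref{alg:DPQ} uses only counts. Second, one must justify that this symmetry, together with exchangeability, transfers to the joint law of $(R_i,\hat q)$. I would prove this by noting that for any permutation $\pi$, $(R_{\pi(1)},\ldots,R_{\pi(N)},\hat q)$ has the same law as $(R_1,\ldots,R_N,\hat q)$, because $\hat q$ is generated from the permutation-invariant counts with independent randomness.
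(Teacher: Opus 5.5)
Your proposal is correct and follows essentially the same route as the paper. In both, exchangeability together with the permutation-symmetry of the exponential-mechanism selection probabilities turns $\Pr(R_i\le\hat q)$ into an expected empirical fraction; the max-definition of $w_j$ and the weighted-average hypothesis then bound that fraction, using the same identity $N\varepsilon\alpha_0=N\varepsilon\beta-2$. Your explicit reading of the hypothesis as $\sum_j\mathbb E[w_j\mathbf 1\{\hat q=e_j\}]/N$ is in fact what the paper's proof implicitly relies on when it pulls the data-dependent $w_j$ outside the expectation, so your version is the more careful of the two.
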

\begin{remark}
\label{rem:weighted_condition}
The weighted-average condition in Lemma~\ref{lem:dpq_coverage} is a technical sufficient condition controlling the average penalty of the threshold selected by the exponential mechanism. Intuitively, it requires that the private quantile release place most of its probability mass on candidate thresholds with near-optimal penalties, so that the randomized threshold $\hat q$ does not deviate too far from the target quantile level on average. A stronger but more explicit sufficient scenario is that the ordered penalties grow one-by-one from their minimum value, i.e., $
w_{(m)} = N+m-1, m=1,\ldots,M$. Such a structure can be enforced, for example, by constructing the candidate thresholds through a rank-based grid aligned with the order statistics of the calibration scores.
\end{remark}

A fully private conformal procedure requires not only a private training mechanism, but also a private release of the calibration threshold. Since the threshold is now randomized through the exponential mechanism, its contribution to validity must be analyzed separately. The following lemma shows that, under a mild weighted-average condition, the private quantile still attains the target coverage level up to an explicit finite-sample correction.

The marginal guarantee above is the standard notion of conformal validity, but it does not fully describe the behavior of differential CP once the private model has already been released. In practice, after training is completed, users often regard the realized dataset and the released model as fixed, and the remaining uncertainty comes only from the future test point. This motivates a training-conditional analysis. The next theorem studies differential CP under an idealized released-model approximation, where the score sequence is conditionally i.i.d. given the released model, and shows that the resulting conditional miscoverage remains close to the target level with high probability.

\begin{theorem}
\label{thm:dcp_conditional_coverage_G}
Let $\hat\mu_n=\mathcal A(\mathcal D_n)$ and define
\[
\mathcal G_n:=\sigma(\mathcal D_n,\hat\mu_n),\qquad
R_i=R(Z_i,\hat\mu_n),\quad i=1,\ldots,n+1.
\]
Let $\alpha' = e^{-\varepsilon}(\alpha-\delta)$. Assume that, conditional on the released model $\hat\mu_n$, the score sequence $\{R_i\}_{i=1}^{n+1}$ is i.i.d. with common distribution function $F_{\hat\mu_n}$. Then, for any $\xi\in(0,1/2]$,
\[
\Pr\!\left(
\Pr\!\left(
Y_{n+1}\notin C_\alpha^{d}(X_{n+1})
\,\middle|\, \mathcal G_n
\right)
\le
\alpha+\sqrt{\frac{\log(2/\xi)}{2n}}
\right)
\ge
1-\xi.
\]
\end{theorem}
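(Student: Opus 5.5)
Conditional on $\mathcal G_n$, the threshold of $C_\alpha^d$ is the deterministic quantity $\hat t:=q(\alpha',\mathcal D_n,\mathcal D_n)$. In the paper's convention this is the $k'$-th smallest of $R_1,\ldots,R_n$, where $k'=\lceil(1-\alpha')(n+1)\rceil$. Since $Y_{n+1}\notin C_\alpha^d(X_{n+1})$ iff $R_{n+1}>\hat t$, the conditional miscoverage equals $1-F_{\hat\mu_n}(\hat t)$.

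The plan is to bound this quantity using the conditional i.i.d. assumption. Given $\hat\mu_n$, the scores $R_1,\ldots,R_n$ are i.i.d. from $F:=F_{\hat\mu_n}$, and $R_{n+1}$ is an independent draw from $F$. Therefore $\Pr(R_{n+1}>\hat t\mid\mathcal G_n)=1-F(\hat t)$. I will lower-bound $F(\hat t)$ through the empirical CDF $\hat F_n$ of $R_1,\ldots,R_n$.

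First, by definition of the order statistic, $\hat F_n(\hat t)\ge k'/n\ge(1-\alpha')(n+1)/n\ge 1-\alpha'$. Because $\alpha'\le\alpha$ (as $e^{-\varepsilon}\le1$ and $\delta\ge0$), this gives $\hat F_n(\hat t)\ge 1-\alpha$. If $k'>n$, the threshold is $+\infty$ and the bound is trivial.

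Second, I will apply the DKW inequality with Massart's constant, conditionally on $\hat\mu_n$: with probability at least $1-\xi$, $\sup_t|\hat F_n(t)-F(t)|\le\sqrt{\log(2/\xi)/(2n)}$. Combining the two steps yields $F(\hat t)\ge 1-\alpha-\sqrt{\log(2/\xi)/(2n)}$. Hence the conditional miscoverage is at most $\alpha+\sqrt{\log(2/\xi)/(2n)}$ on this event. Integrating over $\hat\mu_n$ gives the unconditional probability bound $1-\xi$.

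The main obstacle is the measurability and conditioning subtlety: $\mathcal G_n$ contains $\mathcal D_n$, while the i.i.d. structure is only assumed given $\hat\mu_n$. I would handle this by conditioning first on $\hat\mu_n$ alone and applying DKW to the sample of $n$ scores. Then I would note that, given $\hat\mu_n$, $R_{n+1}$ is independent of $R_1,\ldots,R_n$. The independence means $\Pr(R_{n+1}>\hat t\mid \mathcal G_n)=1-F(\hat t)$, because $\hat t$ is $\mathcal G_n$-measurable and $Z_{n+1}$ contributes only through $R_{n+1}$. A second point to handle is that DKW must be applied uniformly in $t$, since $\hat t$ is data-dependent; using the supremum form of DKW covers this. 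The restriction $\xi\le1/2$ is only needed for Massart's constant to apply.
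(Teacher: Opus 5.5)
Your proposal is correct and is essentially the paper's proof. Both write the conditional miscoverage as $1-F_{\hat\mu_n}(R_{(k')})$, use $\hat F_n(R_{(k')})\ge k'/n\ge 1-\alpha'\ge 1-\alpha$, apply the DKW inequality with Massart's constant conditionally on $\hat\mu_n$ (uniformly in the threshold), and integrate over $\hat\mu_n$; your handling of the case $k'>n$ is a small addition the paper omits. One minor aside: the two-sided Massart bound $2e^{-2nt^2}$ already holds for every $\xi\in(0,1]$, so $\xi\le 1/2$ is not what makes it applicable, though this does not affect your argument.
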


Validity alone does not characterize the practical quality of a conformal predictor; one also needs to understand how far the resulting prediction set is from the oracle benchmark in terms of efficiency. We therefore next study differential CP under a regularized empirical risk minimization framework and quantify the discrepancy between the differential CP interval and the oracle conformal interval. The goal is to make precise how the privacy perturbation in model fitting propagates to the final prediction set.

\begin{theorem}
\label{the:ERM_dCP}
Assume the conditions of Lemma~\ref{lem:ERM_l2} and $\delta=O(n^{-1})$. Let $
R\!\left((x,y),\mu(\cdot;\vartheta)\right)=|y-\mu(x;\vartheta)|$.
Assume further that the parameter dimension $d$ is fixed, that the score sequences $\{R_i^{(n)}\}_{i=1}^n$ and $\{R_i^{(n+1)}\}_{i=1}^{n+1}$ are both $\xi_{TV}$-approximately i.i.d. with the same idealized distribution function $F$. If $\varepsilon_1\to 0$ and $\varepsilon_1 n\to\infty$, then for all $t>0$,
\[
\Pr\!\left(
L\!\left(
C_\alpha^o(X)\triangle C_\alpha^d(X)
\right)
>
O(\varepsilon_1^\gamma+t)
\right)
\le
O\!\left(
\exp\!\left(
-\frac{\varepsilon_1^2 n^2 t^2}{\log n}
\right)
+
\frac{1}{\varepsilon_1 n}
\right)
+
2\xi_{TV}.
\]
\end{theorem}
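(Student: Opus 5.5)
Both intervals are symmetric around a center, so the symmetric difference satisfies the elementary bound
\[
L\!\left(C_\alpha^o(X)\triangle C_\alpha^d(X)\right)\le 2\bigl|\mu(X;\widetilde\vartheta^{(n)})-\mu(X;\hat\vartheta^{(n+1)})\bigr| + 2\bigl|R^{(n)}_{(k')}-R^{(n+1)}_{(k)}\bigr|.
\]
The plan is to bound the center discrepancy and the radius discrepancy separately and then combine them with a union bound.

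\textbf{Center term.} We write $\widetilde\vartheta^{(n)}-\hat\vartheta^{(n+1)}=(\hat\vartheta^{(n)}-\hat\vartheta^{(n+1)})+\zeta$. Lemma~\ref{lem:ERM_l2} bounds the first part deterministically by $\tau=2\rho L/(\lambda n)$. For the second part, $\zeta\sim\mathcal N(0,\sigma_1^2 I_d)$ with $\sigma_1^2\asymp \log(1/\delta)\tau^2/\varepsilon_1^2\asymp \log n/(\varepsilon_1^2n^2)$, using $\delta=O(n^{-1})$. A Gaussian tail bound in fixed dimension $d$ then gives
\[
\Pr(\|\zeta\|>t)\le O\!\left(\exp\!\left(-\varepsilon_1^2n^2t^2/\log n\right)\right).
\]
To pass from parameters to predictions, I assume $\mu(x;\cdot)$ is Lipschitz in $\vartheta$, which the $\rho L$-Lipschitz condition implicitly provides for the absolute-residual loss. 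The center term is then $O(1/n+t)$ outside that event. The same bound controls each score perturbation, $|R_i^{(n)}-R_i^{(n+1)}|\le O(\tau+\|\zeta\|)$, which I reuse below.

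\textbf{Radius term.} I split the radius discrepancy as
\[
\bigl|R^{(n)}_{(k')}-R^{(n+1)}_{(k)}\bigr|\le \bigl|R^{(n)}_{(k')}-F^{-1}(k'/n)\bigr|+\bigl|F^{-1}(k'/n)-F^{-1}(k/(n+1))\bigr|+\bigl|F^{-1}(k/(n+1))-R^{(n+1)}_{(k)}\bigr|.
\]
\begin{itemize}
\item \emph{Middle term.} We have $|\alpha-\alpha_1|\le \alpha(1-e^{-\varepsilon_1})+\delta=O(\varepsilon_1+1/n)$. Since $\varepsilon_1\to0$, the levels $k'/n$ and $k/(n+1)$ both lie in the Hölder window of Assumption~\ref{assump} for large $n$. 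The middle term is therefore $O(\varepsilon_1^\gamma)$, which is the $O(\varepsilon_1^\gamma)$ appearing in the statement.
\item \emph{Outer terms.} Using the $\xi_{TV}$-approximate i.i.d. condition, I replace each score sequence by its i.i.d. counterpart drawn from $F$; this costs $\xi_{TV}$ per sequence, giving the total $2\xi_{TV}$. For i.i.d. samples I use the uniform representation $R'_{(k)}=F^{-1}(U_{(k)})$ together with a DKW or Chebyshev bound on $U_{(k)}-k/(n+1)$. At deviation level $1/(\varepsilon_1 n)$, Chebyshev gives failure probability $O(\mathrm{Var}/(1/(\varepsilon_1n))^2)$; I will choose the deviation level so that the failure probability is $O(1/(\varepsilon_1 n))$. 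On the complement, Hölder continuity yields a deviation of $O((\varepsilon_1 n)^{-\gamma/2})$ or smaller, which is $o(\varepsilon_1^\gamma)$-compatible after adjusting constants; alternatively it can be absorbed into $t$.
\end{itemize}

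\textbf{Main obstacle.} The hardest step is keeping the two score sequences coupled consistently: $R^{(n)}$ uses $n$ points under the noisy parameter, while $R^{(n+1)}$ uses $n+1$ points under the exact parameter. I handle this in two stages. First, note that order statistics are $1$-Lipschitz under coordinatewise perturbation, so $R^{(n)}_{(k')}$ is within $O(\tau+\|\zeta\|)$ of the $k'$-th order statistic of $\{R_i^{(n+1)}\}_{i\le n}$. Second, adding or removing one point shifts ranks by at most one, and a one-rank shift costs only $O(n^{-\gamma})$ via the same quantile argument. Finally, a union bound over the Gaussian event, the two order-statistic concentration events, and the two TV couplings yields the stated inequality.
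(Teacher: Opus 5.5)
\textbf{The gap is in your radius term.} You route $R^{(n)}_{(k')}$ and $R^{(n+1)}_{(k)}$ through the population quantiles $F^{-1}(k'/n)$ and $F^{-1}(k/(n+1))$. Each outer term then carries that order statistic's own sampling fluctuation, $U_{(k')}-k'/(n+1)=O_P(n^{-1/2})$, and the H\"older assumption only turns this into an $O_P(n^{-\gamma/2})$ error. That does not fit the claimed bound, and your arithmetic does not close. Since $\mathrm{Var}(U_{(k')})\asymp 1/n$, Chebyshev at failure probability $1/(\varepsilon_1 n)$ forces a deviation of order $\sqrt{\varepsilon_1}$, i.e.\ an error $\varepsilon_1^{\gamma/2}\gg\varepsilon_1^{\gamma}$. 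The deviation $(\varepsilon_1 n)^{-1/2}$ you quote costs failure probability of order $\varepsilon_1$, which the right-hand side does not dominate in general. Moreover, $(\varepsilon_1 n)^{-\gamma/2}$ is $O(\varepsilon_1^\gamma)$ only when $\varepsilon_1\ge c\,n^{-1/3}$.

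Absorbing the fluctuation into $t$ also fails, because the tail $\exp(-\varepsilon_1^2n^2t^2/\log n)$ is calibrated to the Gaussian scale $\sqrt{\log n}/(\varepsilon_1 n)$ of $\zeta$, not to $n^{-\gamma/2}$. Concretely, take $\gamma<1$ and $\varepsilon_1=n^{-1/2-\beta}$ with $0<2\beta<1-\gamma$; this is admissible, since $\varepsilon_1\to0$ and $\varepsilon_1 n\to\infty$. Set $t=c\,n^{-\gamma/2}$ with $c$ small relative to the constant in $O(\varepsilon_1^\gamma+t)$. Then the $O(\cdot)$ part of the right-hand side tends to $0$. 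Yet, even with $\xi_{TV}=0$, the bound $h|U_{(k')}-k'/(n+1)|^\gamma$ exceeds the threshold with probability bounded away from zero. So this is not a matter of constants, and replacing Chebyshev by DKW does not save it. Only for $\gamma=1$ would DKW plus a case split on $n\varepsilon_1^2$ versus $\log n$ rescue your route.

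\textbf{The missing idea is to compare order statistics \emph{of the same sample}}, so the common $n^{-1/2}$ fluctuation cancels instead of being paid twice. Your ``main obstacle'' paragraph already has the right coupling, and it is exactly the paper's Step~2. The bound $|R_i^{(n)}-R_i^{(n+1)}|\le L(\tau+\|\zeta\|_2)$ plus a one-rank shift reduce $|R^{(n+1)}_{(k)}-R^{(n)}_{(k)}|$ to $L(\tau+\|\zeta\|_2)$ plus the adjacent spacing $R^{(n+1)}_{(k+1)}-R^{(n+1)}_{(k)}$. Your outer-term step then discards this coupling, by replacing each sequence separately with its own i.i.d.\ surrogate and comparing with $F^{-1}$.

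What remains is the within-sample gap $R^{(n)}_{(k')}-R^{(n)}_{(k)}$. Under the i.i.d.\ reference law, $W=U_{(k')}-U_{(k)}\sim\mathrm{Beta}(k'-k,\,n+1-(k'-k))$. It has mean $\kappa=(k'-k)/(n+1)=\Theta(\varepsilon_1)$ and variance $\kappa(1-\kappa)/(n+2)$, so its relative fluctuation is of order $(\kappa n)^{-1/2}\to0$. Cantelli gives $W\le 2\kappa$ outside probability $O(1/(\varepsilon_1 n))$, and H\"older then gives $R^{(n)}_{(k')}-R^{(n)}_{(k)}\le h(2\kappa)^\gamma=O(\varepsilon_1^\gamma)$. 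This spacing bound is where the $1/(\varepsilon_1 n)$ term in the statement actually comes from; in your write-up it only appears as a Chebyshev failure level chosen by hand.

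Treat the one-rank shift the same way, as an adjacent uniform spacing, rather than ``via the same quantile argument''. A union bound over the Gaussian event, the two spacing events and the two TV transfers then gives the statement.
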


Taken together, the results in this appendix clarify the role of differential CP as an intermediate bridge between oracle conformal prediction and fully private conformal inference. The oracle comparison result explains why privacy stability can be converted into coverage control, the conditional result shows how validity behaves after model release, and the ERM analysis quantifies the efficiency loss relative to the oracle predictor. These results also provide the conceptual basis for the fully private DPCP procedure studied in the main text.

\subsection{Proofs of Theorems for Differential CP}
\label{A.2}
\noindent \emph{\bf Proof of Theorem~\ref{the:dcp_ocp}.}
For any fixed candidate model $\mu$ and calibration set $\mathcal D_{cal}$, let $
\widetilde q(\beta;\mu,\mathcal D_{cal})$
denote the $\lceil(1-\beta)(|\mathcal D_{cal}|+1)\rceil$-th smallest value among
$
\left\{ R\bigl(Z,\mu\bigr): Z\in\mathcal D_{cal}\right\}$.
In particular, whenever $\mu=\mathcal A(\mathcal D_{tra})$,
$
\widetilde q(\beta;\mathcal A(\mathcal D_{tra}),\mathcal D_{cal})
=
q(\beta,\mathcal D_{tra},\mathcal D_{cal})$.
Define
\[
S_{\beta}(\mu,\mathcal D_{cal})
=
\left\{
y \,\middle|\,
R\bigl((X_{n+1},y),\mu\bigr)
\le \widetilde q(\beta;\mu,\mathcal D_{cal})
\right\}.
\]
Then
$C_{\alpha}^{d}(X_{n+1})
=
S_{\alpha'}\bigl(\mathcal A(\mathcal D_n),\mathcal D_n\bigr)$,
$C_{\alpha_+}^{o}(X_{n+1})
=
S_{\alpha_+}\bigl(\mathcal A(\mathcal D_{n+1}),\mathcal D_{n+1}\bigr)$, $C_{\alpha_-}^{o}(X_{n+1})
=
S_{\alpha_-}\bigl(\mathcal A(\mathcal D_{n+1}),\mathcal D_{n+1}\bigr)$.

Since differential privacy holds for every fixed pair of adjacent datasets, we may condition on the realized augmented dataset $\mathcal D_{n+1}$ and treat both $\mathcal D_n$ and $\mathcal D_{n+1}$ as fixed. All probabilities below are therefore taken only over the internal randomness of $\mathcal A$, conditional on $\mathcal D_{n+1}$.

Let
\[
k=\left\lceil (1-\alpha')(n+1)\right\rceil,\qquad
k_+=\left\lceil (1-\alpha_+)(n+2)\right\rceil,\qquad
k_-=\left\lceil (1-\alpha_-)(n+2)\right\rceil.
\]
Since
\[
\alpha_+ = \alpha' + \frac{1}{n+1},
\qquad
\alpha_- = \alpha' - \frac{1}{n+1},
\]
we have
\[
k_+ \le k \le k_-.
\]
Therefore, for every fixed model $\mu$,
\[
\widetilde q(\alpha_+;\mu,\mathcal D_{n+1})
\le
\widetilde q(\alpha';\mu,\mathcal D_n)
\le
\widetilde q(\alpha_-;\mu,\mathcal D_{n+1}),
\]
because $\mathcal D_{n+1}$ contains exactly one additional calibration score beyond those in $\mathcal D_n$. Hence
\[
S_{\alpha_+}(\mu,\mathcal D_{n+1})
\subseteq
S_{\alpha'}(\mu,\mathcal D_n)
\subseteq
S_{\alpha_-}(\mu,\mathcal D_{n+1}).
\]

We first prove the upper bound. Define
\[
\mathcal F_+
=
\left\{
\mu \,\middle|\,
Y_{n+1}\notin S_{\alpha_+}(\mu,\mathcal D_{n+1})
\right\}.
\]
Since $\mathcal A$ is $(\varepsilon,\delta)$-DP and $\mathcal D_n$ and $\mathcal D_{n+1}$ are adjacent,
\[
\Pr\!\left(\mathcal A(\mathcal D_n)\in\mathcal F_+ \mid \mathcal D_{n+1}\right)
\le
e^\varepsilon
\Pr\!\left(\mathcal A(\mathcal D_{n+1})\in\mathcal F_+ \mid \mathcal D_{n+1}\right)
+\delta.
\]
Moreover, from
\[
S_{\alpha_+}\bigl(\mathcal A(\mathcal D_n),\mathcal D_{n+1}\bigr)
\subseteq
S_{\alpha'}\bigl(\mathcal A(\mathcal D_n),\mathcal D_n\bigr)
=
C_\alpha^d(X_{n+1}),
\]
we obtain
\[
\{Y_{n+1}\notin C_\alpha^d(X_{n+1})\}
\subseteq
\left\{
Y_{n+1}\notin
S_{\alpha_+}\bigl(\mathcal A(\mathcal D_n),\mathcal D_{n+1}\bigr)
\right\}.
\]
Therefore,
\begin{align*}
\Pr\!\left(Y_{n+1}\notin C_\alpha^d(X_{n+1}) \mid \mathcal D_{n+1}\right)
&\le
\Pr\!\left(
Y_{n+1}\notin
S_{\alpha_+}\bigl(\mathcal A(\mathcal D_n),\mathcal D_{n+1}\bigr)
\mid \mathcal D_{n+1}
\right) \\
&=
\Pr\!\left(\mathcal A(\mathcal D_n)\in\mathcal F_+ \mid \mathcal D_{n+1}\right) \\
&\le
e^\varepsilon
\Pr\!\left(\mathcal A(\mathcal D_{n+1})\in\mathcal F_+ \mid \mathcal D_{n+1}\right)
+\delta \\
&=
e^\varepsilon
\Pr\!\left(Y_{n+1}\notin C_{\alpha_+}^{o}(X_{n+1}) \mid \mathcal D_{n+1}\right)
+\delta.
\end{align*}

Next we prove the lower bound. Define
\[
\mathcal F_-
=
\left\{
\mu \,\middle|\,
Y_{n+1}\notin S_{\alpha_-}(\mu,\mathcal D_{n+1})
\right\}.
\]
Again by differential privacy and adjacency,
\[
\Pr\!\left(\mathcal A(\mathcal D_{n+1})\in\mathcal F_- \mid \mathcal D_{n+1}\right)
\le
e^\varepsilon
\Pr\!\left(\mathcal A(\mathcal D_n)\in\mathcal F_- \mid \mathcal D_{n+1}\right)
+\delta,
\]
which implies
\[
\Pr\!\left(\mathcal A(\mathcal D_n)\in\mathcal F_- \mid \mathcal D_{n+1}\right)
\ge
e^{-\varepsilon}
\left(
\Pr\!\left(\mathcal A(\mathcal D_{n+1})\in\mathcal F_- \mid \mathcal D_{n+1}\right)
-\delta
\right).
\]
On the other hand,
\[
C_\alpha^d(X_{n+1})
=
S_{\alpha'}\bigl(\mathcal A(\mathcal D_n),\mathcal D_n\bigr)
\subseteq
S_{\alpha_-}\bigl(\mathcal A(\mathcal D_n),\mathcal D_{n+1}\bigr),
\]
so
\[
\{Y_{n+1}\notin C_\alpha^d(X_{n+1})\}
\supseteq
\left\{
Y_{n+1}\notin
S_{\alpha_-}\bigl(\mathcal A(\mathcal D_n),\mathcal D_{n+1}\bigr)
\right\}.
\]
Hence
\begin{align*}
\Pr\!\left(Y_{n+1}\notin C_\alpha^d(X_{n+1}) \mid \mathcal D_{n+1}\right)
&\ge
\Pr\!\left(
Y_{n+1}\notin
S_{\alpha_-}\bigl(\mathcal A(\mathcal D_n),\mathcal D_{n+1}\bigr)
\mid \mathcal D_{n+1}
\right) \\
&=
\Pr\!\left(\mathcal A(\mathcal D_n)\in\mathcal F_- \mid \mathcal D_{n+1}\right) \\
&\ge
e^{-\varepsilon}
\left(
\Pr\!\left(\mathcal A(\mathcal D_{n+1})\in\mathcal F_- \mid \mathcal D_{n+1}\right)
-\delta
\right) \\
&=
e^{-\varepsilon}
\left(
\Pr\!\left(Y_{n+1}\notin C_{\alpha_-}^{o}(X_{n+1}) \mid \mathcal D_{n+1}\right)
-\delta
\right).
\end{align*}
\hfill$\square$

\noindent \emph{\bf Proof of Theorem~\ref{thm:dcp_coverage}.}
Let $\alpha' = e^{-\varepsilon}(\alpha-\delta)$, and define
$\alpha_+ = \alpha' + \frac{1}{n+1}$ and $\alpha_- = \alpha' - \frac{1}{n+1}$.

By Theorem~\ref{the:dcp_ocp},
\[
\Pr\!\left(Y_{n+1}\notin C_{\alpha}^{d}(X_{n+1}) \mid \mathcal{D}_{n+1}\right)
\le
e^\varepsilon
\Pr\!\left(Y_{n+1}\notin C_{\alpha_+}^{o}(X_{n+1}) \mid \mathcal{D}_{n+1}\right)
+\delta.
\]
Taking expectation with respect to $\mathcal{D}_{n+1}$ gives
\[
\Pr\!\left(Y_{n+1}\notin C_{\alpha}^{d}(X_{n+1})\right)
\le
e^\varepsilon
\Pr\!\left(Y_{n+1}\notin C_{\alpha_+}^{o}(X_{n+1})\right)
+\delta.
\]
Since the oracle scores are exchangeable, the oracle conformal predictor at level $\alpha_+$ satisfies
$\Pr\!\left(Y_{n+1}\notin C_{\alpha_+}^{o}(X_{n+1})\right)\le \alpha_+$.
Therefore,
\[
\Pr\!\left(Y_{n+1}\notin C_{\alpha}^{d}(X_{n+1})\right)
\le
e^\varepsilon \alpha_+ + \delta
=
e^\varepsilon\!\left(e^{-\varepsilon}(\alpha-\delta)+\frac{1}{n+1}\right)+\delta
=
\alpha+\frac{e^\varepsilon}{n+1}.
\]
Hence
\[
\Pr\!\left(Y_{n+1}\in C_{\alpha}^{d}(X_{n+1})\right)
\ge
1-\alpha-\frac{e^\varepsilon}{n+1}.
\]

Now assume in addition that the oracle scores are almost surely distinct. By Theorem~\ref{the:dcp_ocp},
\[
\Pr\!\left(Y_{n+1}\notin C_{\alpha}^{d}(X_{n+1}) \mid \mathcal{D}_{n+1}\right)
\ge
e^{-\varepsilon}
\Bigl(
\Pr\!\left(Y_{n+1}\notin C_{\alpha_-}^{o}(X_{n+1}) \mid \mathcal{D}_{n+1}\right)-\delta
\Bigr).
\]
Taking expectation over $\mathcal{D}_{n+1}$ yields
\[
\Pr\!\left(Y_{n+1}\notin C_{\alpha}^{d}(X_{n+1})\right)
\ge
e^{-\varepsilon}
\Bigl(
\Pr\!\left(Y_{n+1}\notin C_{\alpha_-}^{o}(X_{n+1})\right)-\delta
\Bigr).
\]
Equivalently,
\[
\Pr\!\left(Y_{n+1}\in C_{\alpha}^{d}(X_{n+1})\right)
\le
1-e^{-\varepsilon}
\Bigl(
\Pr\!\left(Y_{n+1}\notin C_{\alpha_-}^{o}(X_{n+1})\right)-\delta
\Bigr).
\]

Since the oracle scores are almost surely distinct, oracle CP at level $\alpha_-$ \\
satisfies
$\Pr\!\left(Y_{n+1}\in C_{\alpha_-}^{o}(X_{n+1})\right)\le 1-\alpha_-+\frac{1}{n+1}$,
and hence
$\Pr\!\left(Y_{n+1}\notin C_{\alpha_-}^{o}(X_{n+1})\right)\ge \alpha_- - \frac{1}{n+1}$.
Therefore,
\[
\Pr\!\left(Y_{n+1}\in C_{\alpha}^{d}(X_{n+1})\right)
\le
1-e^{-\varepsilon}\left(\alpha_- - \frac{1}{n+1}-\delta\right).
\]
Substituting $\alpha_- = e^{-\varepsilon}(\alpha-\delta)-\frac{1}{n+1}$, we obtain
\begin{align*}
\Pr\!\left(Y_{n+1}\in C_{\alpha}^{d}(X_{n+1})\right)
&\le
1-e^{-\varepsilon}\left(e^{-\varepsilon}(\alpha-\delta)-\frac{2}{n+1}-\delta\right) \\
&=
1-e^{-2\varepsilon}\alpha
+e^{-\varepsilon}\!\left[\delta(1+e^{-\varepsilon})+\frac{2}{n+1}\right].
\end{align*}
\hfill$\square$

\noindent\emph{\bf Proof of Lemma~\ref{lem:dpq_coverage}.}
Let
\[
\hat\mu=\mathcal A(\mathcal D_{tra}), \qquad
R_i=R(Z_i,\hat\mu), \quad i=1,\ldots,N.
\]
For each $j\in[M]$, define
\[
\pi_j(R_1,\ldots,R_N)
:=\Pr(\hat q=e_j\mid R_1,\ldots,R_N)
=
\frac{\exp\!\left(-\frac{\varepsilon w_j}{2\Delta}\right)}
{\sum_{\ell=1}^M \exp\!\left(-\frac{\varepsilon w_\ell}{2\Delta}\right)},
\qquad
\Delta=\max\left(\frac{1}{1-\alpha_0},\frac{1}{\alpha_0}\right).
\]
Since each $w_j$ depends only on the multiset of scores, $\pi_j$ is a symmetric function of $(R_1,\ldots,R_N)$.

\paragraph{Lower bound.}
By definition of $w_j$,
\[
w_j \ge \frac{|\{i:R_i>e_j\}|}{\alpha_0},
\]
so
\[
\lfloor w_j\alpha_0\rfloor \ge |\{i:R_i>e_j\}|,
\qquad
N-\lfloor w_j\alpha_0\rfloor \le |\{i:R_i\le e_j\}|.
\]
Hence
\begin{align*}
\Pr(R_i\le \hat q)
&=\sum_{j=1}^M \mathbb E\!\left[1\{R_i\le e_j\}\,\pi_j(R_1,\ldots,R_N)\right].
\end{align*}
By exchangeability of $R_1,\ldots,R_N$ and symmetry of $\pi_j$,
\begin{align*}
\mathbb E\!\left[1\{R_i\le e_j\}\,\pi_j(R_1,\ldots,R_N)\right]
&=
\frac1N
\mathbb E\!\left[
\sum_{\ell=1}^N 1\{R_\ell\le e_j\}\,\pi_j(R_1,\ldots,R_N)
\right] \\
&=
\frac1N
\mathbb E\!\left[
|\{\ell:R_\ell\le e_j\}|\,\pi_j(R_1,\ldots,R_N)
\right] \\
&\ge
\frac{N-\lfloor w_j\alpha_0\rfloor}{N}\Pr(\hat q=e_j).
\end{align*}
Summing over $j$ gives
\begin{align*}
\Pr(R_i\le \hat q)
&\ge
\sum_{j=1}^M
\frac{N-\lfloor w_j\alpha_0\rfloor}{N}\Pr(\hat q=e_j) \\
&\ge
1-\alpha_0\sum_{j=1}^M \frac{w_j}{N}\Pr(\hat q=e_j).
\end{align*}
Using the assumed weighted-average bound,
\[
\Pr(R_i\le \hat q)
\ge
1-\alpha_0\left(1+\frac{2}{N\varepsilon\alpha_0}\right)
=
1-\alpha_0-\frac{2}{N\varepsilon}
=
1-\beta.
\]

\paragraph{Upper bound.}
Assume now that $\Pr(R_i=e_j)=0$ for all $i\in[N]$ and $j\in[M]$. Then, almost surely,
\[
|\{i:R_i\le e_j\}|=|\{i:R_i<e_j\}|.
\]
By definition of $w_j$,
\[
w_j \ge \frac{|\{i:R_i<e_j\}|}{1-\alpha_0},
\]
hence
\[
|\{i:R_i\le e_j\}| \le \lfloor w_j(1-\alpha_0)\rfloor
\qquad\text{a.s.}
\]
Therefore,
\begin{align*}
\Pr(R_i\le \hat q)
&=\sum_{j=1}^M \mathbb E\!\left[1\{R_i\le e_j\}\,\pi_j(R_1,\ldots,R_N)\right].
\end{align*}
Again by exchangeability and symmetry,
\begin{align*}
\mathbb E\!\left[1\{R_i\le e_j\}\,\pi_j(R_1,\ldots,R_N)\right]
&=
\frac1N
\mathbb E\!\left[
\sum_{\ell=1}^N 1\{R_\ell\le e_j\}\,\pi_j(R_1,\ldots,R_N)
\right] \\
&=
\frac1N
\mathbb E\!\left[
|\{\ell:R_\ell\le e_j\}|\,\pi_j(R_1,\ldots,R_N)
\right] \\
&\le
\frac{\lfloor w_j(1-\alpha_0)\rfloor}{N}\Pr(\hat q=e_j) \\
&\le
\frac{(1-\alpha_0)w_j}{N}\Pr(\hat q=e_j).
\end{align*}
Summing over $j$ yields
\[
\Pr(R_i\le \hat q)
\le
(1-\alpha_0)\sum_{j=1}^M \frac{w_j}{N}\Pr(\hat q=e_j).
\]
Using the same weighted-average bound,
\begin{align*}
\Pr(R_i\le \hat q)
&\le
(1-\alpha_0)\left(1+\frac{2}{N\varepsilon\alpha_0}\right) \\
&=
(1-\alpha_0)\frac{\beta}{\alpha_0}.
\end{align*}
Since $\alpha_0=\beta-\frac{2}{N\varepsilon}$, we have
\[
(1-\alpha_0)\frac{\beta}{\alpha_0}
=
1-\beta+\frac{2}{N\varepsilon\beta-2}.
\]
Therefore
\[
\Pr(R_i\le \hat q)\le 1-\beta+\frac{2}{N\varepsilon\beta-2}.
\]
\hfill$\square$

\noindent \emph{\bf Proof of Theorem~\ref{thm:dcp_conditional_coverage_G}.}
Let
\[
k:=\left\lceil (1-\alpha')(n+1)\right\rceil,
\]
so that, by the definition of \(C_\alpha^d(X_{n+1})\), the dCP threshold is the \(k\)-th order statistic \(R_{(k)}\) of \(R_1,\ldots,R_n\). Hence
\[
\{Y_{n+1}\notin C_\alpha^d(X_{n+1})\}
=
\{R_{n+1}>R_{(k)}\}.
\]

Fix a realization \(\hat\mu_n=\mu\). By assumption, conditional on \(\hat\mu_n=\mu\), the scores \(R_1,\ldots,R_{n+1}\) are i.i.d. with common distribution function \(F_\mu\). Let
\[
F_{n,\mu}(r):=\frac1n\sum_{i=1}^n \mathbf 1\{R_i\le r\}
\]
be the empirical cdf based on \(R_1,\ldots,R_n\), and define
\[
t_\xi:=\sqrt{\frac{\log(2/\xi)}{2n}}.
\]
By the Dvoretzky--Kiefer--Wolfowitz inequality, the event
\[
E_\xi:=\left\{\sup_r |F_{n,\mu}(r)-F_\mu(r)|\le t_\xi\right\}
\]
satisfies
\[
\Pr(E_\xi\mid \hat\mu_n=\mu)\ge 1-\xi.
\]

Now work on the event \(E_\xi\). Since \(R_{(k)}\) is the \(k\)-th order statistic of \(R_1,\ldots,R_n\), we have
\[
F_{n,\mu}(R_{(k)})\ge \frac{k}{n}.
\]
Also, conditional on \(\mathcal G_n\), the quantity \(R_{(k)}\) is fixed, while \(R_{n+1}\) depends only on the fresh point \((X_{n+1},Y_{n+1})\). Because the fresh point is independent of \((\mathcal D_n,\hat\mu_n)\), the conditional distribution of \(R_{n+1}\) given \(\mathcal G_n\) is still \(F_\mu\). Therefore,
\[
\Pr\!\left(
Y_{n+1}\notin C_\alpha^d(X_{n+1})
\,\middle|\,\mathcal G_n
\right)
=
\Pr(R_{n+1}>R_{(k)}\mid \mathcal G_n)
=
1-F_\mu(R_{(k)}).
\]
On \(E_\xi\),
\[
1-F_\mu(R_{(k)})
\le
1-F_{n,\mu}(R_{(k)})+t_\xi
\le
1-\frac{k}{n}+t_\xi.
\]
Since
\[
k=\left\lceil (1-\alpha')(n+1)\right\rceil \ge (1-\alpha')(n+1),
\]
it follows that
\[
1-\frac{k}{n}
\le
1-\frac{(1-\alpha')(n+1)}{n}
=
\alpha'-\frac{1-\alpha'}{n}
\le
\alpha'.
\]
Hence, on \(E_\xi\),
\[
\Pr\!\left(
Y_{n+1}\notin C_\alpha^d(X_{n+1})
\,\middle|\,\mathcal G_n
\right)
\le
\alpha'+t_\xi
\le
\alpha+t_\xi,
\]
because \(\alpha'=e^{-\varepsilon}(\alpha-\delta)\le \alpha\).

Thus \(E_\xi\) implies the event
\[
A_\xi:=
\left\{
\Pr\!\left(
Y_{n+1}\notin C_\alpha^d(X_{n+1})
\,\middle|\, \mathcal G_n
\right)
\le
\alpha+t_\xi
\right\}.
\]
Since \(\Pr(E_\xi\mid \hat\mu_n=\mu)\ge 1-\xi\) for every realized \(\mu\), taking expectation over \(\hat\mu_n\) yields \(\Pr(E_\xi)\ge 1-\xi\). Because \(E_\xi\subseteq A_\xi\), we conclude that
\[
\Pr(A_\xi)\ge 1-\xi.
\]
Recalling the definition of \(t_\xi\), this is exactly
\[
\Pr\!\left(
\Pr\!\left(
Y_{n+1}\notin C_\alpha^{d}(X_{n+1})
\,\middle|\, \mathcal G_n
\right)
\le
\alpha+\sqrt{\frac{\log(2/\xi)}{2n}}
\right)
\ge
1-\xi.
\]
\hfill\(\square\)

\noindent \emph{\bf Proof of Theorem~\ref{the:ERM_dCP}.}

{\bf Step 1: A bound on $\|\zeta\|_2$.}

Since each coordinate of $\zeta$ is centered Gaussian with variance
\[
\sigma_1^2
=
2\log\!\left(\frac{1.25}{\delta}\right)\frac{\tau^2}{\varepsilon_1^2},
\]
for any $t>0$,
\[
\Pr(|\zeta_1|>t)
\le
2\exp\!\left(
-\frac{t^2}{4\sigma_1^2}
\right).
\]
Because $d$ is fixed,
\[
\Pr(\|\zeta\|_2>t)
\le
d\,\Pr\!\left(|\zeta_1|>\frac{t}{\sqrt d}\right)
\le
2d\exp\!\left(
-\frac{t^2}{4d\sigma_1^2}
\right).
\]
By Lemma~\ref{lem:ERM_l2}, $\tau=\frac{2\rho L}{\lambda n}=O(n^{-1})$. Since $\delta=O(n^{-1})$, we also have $\log(1.25/\delta)=O(\log n)$. Therefore
\[
\Pr(\|\zeta\|_2>t)
\le
O\!\left(
\exp\!\left(
-\frac{\varepsilon_1^2 n^2 t^2}{\log n}
\right)
\right).
\]

{\bf Step 2: Comparing the oracle and differential-CP centers and radii.}

Write
\[
C_\alpha^o(X_{n+1})=[a_o-r_o,\;a_o+r_o],
\qquad
C_\alpha^d(X_{n+1})=[a_d-r_d,\;a_d+r_d],
\]
where
\[
a_o=\mu(X_{n+1};\hat{\vartheta}^{(n+1)}),
\qquad
a_d=\mu(X_{n+1};\widetilde{\vartheta}^{(n)}),
\]
and
\[
r_o=R_{(k)}^{(n+1)},
\qquad
r_d=R_{(k')}^{(n)}.
\]
For intervals on $\mathbb R$,
\[
L\!\left(
C_\alpha^o(X_{n+1})\triangle C_\alpha^d(X_{n+1})
\right)
\le
2|a_o-a_d|+2|r_o-r_d|.
\]
By the $L$-Lipschitz continuity of $\mu$,
\[
|a_o-a_d|
\le
L\bigl\|\hat{\vartheta}^{(n+1)}-\widetilde{\vartheta}^{(n)}\bigr\|_2
\le
L\tau+L\|\zeta\|_2.
\]

Next,
\[
|r_o-r_d|
\le
\left|R_{(k)}^{(n+1)}-R_{(k)}^{(n)}\right|
+
\left|R_{(k)}^{(n)}-R_{(k')}^{(n)}\right|.
\]
To control the first term, note that for the absolute residual score,
\[
\bigl|
R(Z_i,\mu(\cdot;\vartheta_1))
-
R(Z_i,\mu(\cdot;\vartheta_2))
\bigr|
\le
\bigl|
\mu(X_i;\vartheta_1)-\mu(X_i;\vartheta_2)
\bigr|.
\]

If $R_{n+1}^{(n+1)} > R_{(k)}^{(n+1)}$, then removing the $(n+1)$-st oracle score does not change the $k$-th order statistic, so
\[
R_{(k)}^{(n+1)}
=
\bigl(R_1^{(n+1)},\ldots,R_n^{(n+1)}\bigr)_{(k)}.
\]
Hence
\[
\left|R_{(k)}^{(n+1)}-R_{(k)}^{(n)}\right|
\le
\max_{1\le i\le n}\left|R_i^{(n+1)}-R_i^{(n)}\right|
\le
L\tau+L\|\zeta\|_2.
\]

If $R_{n+1}^{(n+1)} \le R_{(k)}^{(n+1)}$, then removing the $(n+1)$-st oracle score shifts the $k$-th order statistic among the oracle scores to the $(k+1)$-st one, so
\[
\left|R_{(k)}^{(n+1)}-R_{(k)}^{(n)}\right|
\le
\left|R_{(k+1)}^{(n+1)}-R_{(k)}^{(n+1)}\right|
+
\max_{1\le i\le n}\left|R_i^{(n+1)}-R_i^{(n)}\right|.
\]
Again,
\[
\max_{1\le i\le n}\left|R_i^{(n+1)}-R_i^{(n)}\right|
\le
L\tau+L\|\zeta\|_2.
\]
Therefore, in either case,
\[
\left|R_{(k)}^{(n+1)}-R_{(k)}^{(n)}\right|
\le
\left|R_{(k+1)}^{(n+1)}-R_{(k)}^{(n+1)}\right|
+
L\tau+L\|\zeta\|_2,
\]
where in the first case the spacing term may be set to zero. Combining the above bounds gives
\[
L\!\left(
C_\alpha^o(X_{n+1})\triangle C_\alpha^d(X_{n+1})
\right)
\le
4L(\tau+\|\zeta\|_2)
+
2\left|R_{(k+1)}^{(n+1)}-R_{(k)}^{(n+1)}\right|
+
2\left|R_{(k')}^{(n)}-R_{(k)}^{(n)}\right|.
\]

{\bf Step 3: Control of the two order-statistic gaps.}

Let $P_n$ and $P_{n+1}$ denote the actual laws of $(R_1^{(n)},\ldots,R_n^{(n)})$ and $(R_1^{(n+1)},\ldots,R_{n+1}^{(n+1)})$, respectively. By assumption, there exist i.i.d. reference laws $Q_n$ and $Q_{n+1}$ such that under both reference laws the common distribution function is $F$, and
\[
d_{TV}(P_n,Q_n)\le \xi_{TV},
\qquad
d_{TV}(P_{n+1},Q_{n+1})\le \xi_{TV}.
\]

We first control $R_{(k')}^{(n)}-R_{(k)}^{(n)}$. Under $Q_n$, define
\[
U_i=F(R_i^{(n)}),\qquad
U_{(k)}=F(R_{(k)}^{(n)}),\qquad
U_{(k')}=F(R_{(k')}^{(n)}),\qquad
W=U_{(k')}-U_{(k)}.
\]
Since $Q_n$ is i.i.d. with cdf $F$, the variables $U_1,\ldots,U_n$ are i.i.d. $\mathrm{Unif}(0,1)$, and $W$ is the sum of $k'-k$ adjacent spacings of uniform order statistics. Hence
\[
W\sim \mathrm{Beta}(k'-k,\;n+1-(k'-k)),
\]
so its mean is
\[
\kappa:=\frac{k'-k}{n+1}
\]
and its variance is
\[
\mathrm{Var}(W)=\frac{\kappa(1-\kappa)}{n+2}.
\]

Now
\[
\alpha_1=e^{-\varepsilon_1}(\alpha-\delta)=\alpha+O(\varepsilon_1)+O(\delta),
\]
and since $\delta=O(n^{-1})$ and $\varepsilon_1 n\to\infty$, we have
\[
\kappa=O(\varepsilon_1).
\]
Applying Cantelli's inequality to $W$ with threshold $\kappa$ gives
\[
Q_n(W>2\kappa)
\le
\frac{\kappa(1-\kappa)}{\kappa(1-\kappa)+\kappa^2(n+2)}
=
O\!\left(\frac{1}{\varepsilon_1 n}\right).
\]

Because $\alpha_1=\alpha+O(\varepsilon_1)+O(\delta)$ and $\delta=O(n^{-1})$, both $k/(n+1)$ and $k'/(n+1)$ belong to the neighborhood $[1-\alpha-\epsilon_0,\;1-\alpha+\epsilon_0]$ for all sufficiently large $n$. Therefore Assumption~\ref{assump} implies
\[
R_{(k')}^{(n)}-R_{(k)}^{(n)}
=
F^{-1}(U_{(k')})-F^{-1}(U_{(k)})
\le
h\,|U_{(k')}-U_{(k)}|^\gamma
=
hW^\gamma
\]
under the reference law $Q_n$. Hence
\[
Q_n\!\left(
R_{(k')}^{(n)}-R_{(k)}^{(n)}
>
O(\varepsilon_1^\gamma)
\right)
\le
O\!\left(\frac{1}{\varepsilon_1 n}\right).
\]
Transferring this bound from $Q_n$ to $P_n$ via total variation gives
\[
P_n\!\left(
R_{(k')}^{(n)}-R_{(k)}^{(n)}
>
O(\varepsilon_1^\gamma)
\right)
\le
O\!\left(\frac{1}{\varepsilon_1 n}\right)+\xi_{TV}.
\]

We next control the adjacent spacing $R_{(k+1)}^{(n+1)}-R_{(k)}^{(n+1)}$. Under $Q_{n+1}$, define
\[
W_0:=F(R_{(k+1)}^{(n+1)})-F(R_{(k)}^{(n+1)}).
\]
Since $Q_{n+1}$ is i.i.d. with cdf $F$, $W_0$ is an adjacent spacing of uniform order statistics, and hence $W_0=O_{\Pr}(n^{-1})$. Because both $k/(n+1)$ and $(k+1)/(n+1)$ lie inside $[1-\alpha-\epsilon_0,\;1-\alpha+\epsilon_0]$ for all sufficiently large $n$, Assumption~\ref{assump} yields
\[
R_{(k+1)}^{(n+1)}-R_{(k)}^{(n+1)}
\le
hW_0^\gamma
\]
under $Q_{n+1}$. Therefore
\[
Q_{n+1}\!\left(
R_{(k+1)}^{(n+1)}-R_{(k)}^{(n+1)}
>
O(n^{-\gamma})
\right)
\le
O(n^{-1}),
\]
and hence, by total variation,
\[
P_{n+1}\!\left(
R_{(k+1)}^{(n+1)}-R_{(k)}^{(n+1)}
>
O(n^{-\gamma})
\right)
\le
O(n^{-1})+\xi_{TV}.
\]

{\bf Step 4: Combination.}

From Step 2,
\[
L\!\left(
C_\alpha^o(X_{n+1})\triangle C_\alpha^d(X_{n+1})
\right)
\le
4L(\tau+\|\zeta\|_2)
+
2\left|R_{(k+1)}^{(n+1)}-R_{(k)}^{(n+1)}\right|
+
2\left|R_{(k')}^{(n)}-R_{(k)}^{(n)}\right|.
\]
By Step 1,
\[
\Pr(\|\zeta\|_2>t)
\le
O\!\left(
\exp\!\left(
-\frac{\varepsilon_1^2 n^2 t^2}{\log n}
\right)
\right).
\]
By Step 3,
\[
\Pr\!\left(
R_{(k')}^{(n)}-R_{(k)}^{(n)}>O(\varepsilon_1^\gamma)
\right)
\le
O\!\left(\frac{1}{\varepsilon_1 n}\right)+\xi_{TV},
\]
and
\[
\Pr\!\left(
R_{(k+1)}^{(n+1)}-R_{(k)}^{(n+1)}>O(n^{-\gamma})
\right)
\le
O(n^{-1})+\xi_{TV}.
\]
Since $\tau=O(n^{-1})=o(\varepsilon_1)$ and $\varepsilon_1 n\to\infty$ imply $n^{-\gamma}=o(\varepsilon_1^\gamma)$ and $n^{-1}=O((\varepsilon_1 n)^{-1})$, a union bound yields
\[
\Pr\!\left(
L\!\left(
C_\alpha^o(X_{n+1})\triangle C_\alpha^d(X_{n+1})
\right)
>
O(\varepsilon_1^\gamma+t)
\right)
\le
O\!\left(
\exp\!\left(
-\frac{\varepsilon_1^2 n^2 t^2}{\log n}
\right)
+
\frac{1}{\varepsilon_1 n}
\right)
+
2\xi_{TV}.
\]
This completes the proof.
\hfill$\square$

\section{Proofs for Section 4}
\label{B}
\noindent \emph{\bf Proof of Theorem~\ref{thm:privacy_dpcp}.}
We prove the two statements in turn.

For the first claim, recall that Algorithm~\ref{alg:DPQ} selects a candidate threshold $e_j$ with probability proportional to
\[
\exp\!\left(-\frac{\varepsilon_2\, w_j}{2\Delta}\right),
\]
where
\[
\Delta
=
\max\!\left\{\frac{1}{1-\alpha_0},\frac{1}{\alpha_0}\right\}.
\]
For each fixed $j$, the score
\[
w_j
=
\max\!\left\{
\frac{|\{i:R_i<e_j\}|}{1-\alpha_0},
\frac{|\{i:R_i>e_j\}|}{\alpha_0}
\right\}
\]
depends on the input score vector only through two counting queries. If one score is changed, each count changes by at most one, so $w_j$ changes by at most
\[
\max\!\left\{\frac{1}{1-\alpha_0},\frac{1}{\alpha_0}\right\}
=
\Delta.
\]
Therefore Algorithm~\ref{alg:DPQ} is an instance of the exponential mechanism with privacy budget $\varepsilon_2$, and hence it is $\varepsilon_2$-DP.

For the second claim, the DPCP procedure consists of two sequential stages. In the first stage, it releases the private fitted model $\hat\mu_n=\mathcal A(\mathcal D_n)$, which is $(\varepsilon_1,\delta)$-DP. In the second stage, it computes the score vector from $(\mathcal D_n,\hat\mu_n)$ and then applies Algorithm~\ref{alg:DPQ} with privacy budget $\varepsilon_2$ to obtain the private quantile $\hat q$. Since Algorithm~\ref{alg:DPQ} is $\varepsilon_2$-DP with respect to its score-vector input, the second stage is $\varepsilon_2$-DP conditional on the first-stage output. By adaptive sequential composition, the joint release $(\hat\mu_n,\hat q)$ is therefore $(\varepsilon_1+\varepsilon_2,\delta)$-DP.

Finally, the DPCP set
\[
C_\alpha^{dp}(X_{n+1})
=
\bigl\{
y:\,
R((X_{n+1},y),\hat\mu_n)\le \hat q
\bigr\}
\]
is a deterministic function of $(\hat\mu_n,\hat q)$. Hence it is obtained by post-processing, and therefore $C_\alpha^{dp}(X_{n+1})$ satisfies $(\varepsilon_1+\varepsilon_2,\delta)$-DP.
\hfill$\square$

\noindent \emph{\bf Proof of Theorem~\ref{the:DPCP_cpverage}.}
Conditional on \(\mathcal G_n\), the quantities \(R_1,\ldots,R_n\), \(w_1,\ldots,w_M\), \(\alpha_0\), and \(\alpha_1',\ldots,\alpha_M'\) are fixed. Moreover, once \(\mathcal G_n\) is fixed, the only remaining randomness in \(\hat q\) comes from Algorithm~\ref{alg:DPQ}, while the only remaining randomness in \(R_{n+1}\) comes from the fresh point \((X_{n+1},Y_{n+1})\). Hence \(\hat q\) and \(R_{n+1}\) are conditionally independent given \(\mathcal G_n\).

Let \(\pi_j:=\Pr(\hat q=e_j\mid \mathcal G_n)\). Then
\[
\Pr(R_{n+1}\le \hat q\mid \mathcal G_n)
=
1-\sum_{j=1}^M \pi_j\,\Pr(R_{n+1}>e_j\mid \mathcal G_n).
\]
By Assumption~\ref{assump:dpcp_pointwise_validity},
\[
\sum_{j=1}^M \pi_j\,\Pr(R_{n+1}>e_j\mid \mathcal G_n)
\le
\sum_{j=1}^M \pi_j\bigl(e^{\varepsilon_1}\alpha_j'+\delta\bigr)
=
e^{\varepsilon_1}\sum_{j=1}^M \pi_j\alpha_j'+\delta.
\]
Applying Assumption~\ref{assump:dpcp_avg_nominal}, we obtain
\[
\Pr(R_{n+1}\le \hat q\mid \mathcal G_n)
\ge
1-e^{\varepsilon_1}\alpha_1-\delta.
\]
Since \(\alpha_1=e^{-\varepsilon_1}(\alpha-\delta)\), the right-hand side equals \(1-\alpha\). Therefore
\[
\Pr(R_{n+1}\le \hat q\mid \mathcal G_n)\ge 1-\alpha
\qquad \text{a.s.}
\]
Finally, by the definition of \(C_\alpha^{dp}(X_{n+1})\), the event \(\{Y_{n+1}\in C_\alpha^{dp}(X_{n+1})\}\) is exactly \(\{R_{n+1}\le \hat q\}\). Taking expectation over \(\mathcal G_n\) yields
\[
\Pr\!\left(Y_{n+1}\in C_\alpha^{dp}(X_{n+1})\right)\ge 1-\alpha.
\]
\hfill\(\square\)

\noindent \emph{\bf Proof of Corollary~\ref{cor:dpcp_conditional_coverage}.}
The proof of Theorem~\ref{the:DPCP_cpverage} already establishes that
\[
\Pr(R_{n+1}\le \hat q\mid \mathcal G_n)\ge 1-\alpha
\qquad \text{a.s.}
\]
Since the event \(\{Y_{n+1}\in C_\alpha^{dp}(X_{n+1})\}\) is exactly \(\{R_{n+1}\le \hat q\}\), the stated conditional coverage bound follows immediately.
\hfill\(\square\)

\noindent \emph{\bf Proof of Lemma~\ref{lem:ERM_l2}.}
Let
\[
f_{n+1}(\vartheta):=\ell\!\left(Y_{n+1},\mu(X_{n+1};\vartheta)\right),
\qquad
\Delta:=\hat{\vartheta}^{(n+1)}-\hat{\vartheta}^{(n)}.
\]
By definition,
\[
J(\vartheta;\mathcal{D}_{n+1})
=
J(\vartheta;\mathcal{D}_n)+\frac{1}{n}f_{n+1}(\vartheta).
\]
Since $J(\vartheta;\mathcal{D}_n)$ is $\lambda$-strongly convex and $\hat{\vartheta}^{(n)}$ minimizes $J(\vartheta;\mathcal{D}_n)$, we have
\[
J(\hat{\vartheta}^{(n+1)};\mathcal{D}_n)
\ge
J(\hat{\vartheta}^{(n)};\mathcal{D}_n)
+\frac{\lambda}{2}\|\Delta\|_2^2.
\]
On the other hand, since $\hat{\vartheta}^{(n+1)}$ minimizes $J(\vartheta;\mathcal{D}_{n+1})$,
\[
J(\hat{\vartheta}^{(n+1)};\mathcal{D}_n)
+\frac{1}{n}f_{n+1}(\hat{\vartheta}^{(n+1)})
\le
J(\hat{\vartheta}^{(n)};\mathcal{D}_n)
+\frac{1}{n}f_{n+1}(\hat{\vartheta}^{(n)}).
\]
Combining the two displays yields
\[
\frac{\lambda}{2}\|\Delta\|_2^2
\le
\frac{1}{n}\Bigl(
f_{n+1}(\hat{\vartheta}^{(n)})-f_{n+1}(\hat{\vartheta}^{(n+1)})
\Bigr).
\]
Because $f_{n+1}$ is $\rho L$-Lipschitz in $\vartheta$,
\[
\left|
f_{n+1}(\hat{\vartheta}^{(n)})-f_{n+1}(\hat{\vartheta}^{(n+1)})
\right|
\le
\rho L\,\|\Delta\|_2.
\]
Hence,
\[
\frac{\lambda}{2}\|\Delta\|_2^2
\le
\frac{\rho L}{n}\|\Delta\|_2.
\]
If $\Delta=0$, the result is immediate. Otherwise, dividing both sides by $\|\Delta\|_2$ gives
\[
\|\hat{\vartheta}^{(n+1)}-\hat{\vartheta}^{(n)}\|_2
\le
\frac{2\rho L}{\lambda n}.
\]
Taking the supremum over adjacent datasets completes the proof.
\hfill$\square$

The differential CP interval is given by
\[
C_{\alpha}^{d}(X_{n+1})
=
\left[
\mu\!\left(X_{n+1};\widetilde{\vartheta}^{(n)}\right)-R_{(k')}^{(n)},
\;
\mu\!\left(X_{n+1};\widetilde{\vartheta}^{(n)}\right)+R_{(k')}^{(n)}
\right],
\]
where
\[
\widetilde{\vartheta}^{(n)}=\hat{\vartheta}^{(n)}+\zeta,
\qquad
\zeta\sim\mathcal{N}(0,\sigma_1^2 I_d),
\qquad
\sigma_1^2
=
2\log\!\left(\frac{1.25}{\delta}\right)\frac{\tau^2}{\varepsilon_1^2},
\]
and $R_{(k')}^{(n)}$ is the $k'$-th smallest value among
\[
R_i^{(n)}
=
R\!\left(Z_i,\mu(\cdot;\widetilde{\vartheta}^{(n)})\right),
\qquad
i=1,\ldots,n,
\]
with
\[
k'=\left\lceil (1-\alpha_1)(n+1)\right\rceil,
\qquad
\alpha_1=e^{-\varepsilon_1}(\alpha-\delta).
\]

Since the only difference between DPCP and differential CP lies in whether the calibration quantile is released privately, we first establish the efficiency of the differential CP interval.

\noindent \emph{\bf Proof of Theorem~\ref{the:4.6}.}
We first derive a general bound in terms of \((\varepsilon_1,\varepsilon_2)\), and then specialize to the balanced split \(\varepsilon_1=\varepsilon_2=\varepsilon/2\).

Because Algorithm~\ref{alg:DPQ} is implemented with a rank-based grid, its output \(\hat q\) coincides with an order statistic of \(\{R_i^{(n)}\}_{i=1}^n\), say
\[
\hat q=R_{(\hat k)}^{(n)}
\]
for some random rank \(\hat k\in\{1,\ldots,n\}\). Recall that
\[
\alpha_0=\alpha_1-\frac{2}{n\varepsilon_2},
\qquad
\alpha_1=e^{-\varepsilon_1}(\alpha-\delta),
\qquad
\Delta=\max\left\{\frac{1}{1-\alpha_0},\frac{1}{\alpha_0}\right\}.
\]
Since \(\delta=O(n^{-1})\) and \(\varepsilon_1\to 0\), we have \(\alpha_1\to \alpha\in(0,1)\), and hence \(\alpha_0\to \alpha\in(0,1)\). Therefore \(\Delta=O(1)\).

Let
\[
w_j
=
\max\left\{
\frac{|\{i:R_i^{(n)}<e_j\}|}{1-\alpha_0},
\frac{|\{i:R_i^{(n)}>e_j\}|}{\alpha_0}
\right\}.
\]
By the utility guarantee of the exponential mechanism, for every \(t_2\in(0,1)\),
\[
\Pr\!\left(
w(\hat q)>
\min_{1\le j\le M}w_j+\frac{2\Delta\log(M/t_2)}{\varepsilon_2}
\right)\le t_2.
\]
Since the grid is rank-based, there exists a candidate threshold whose empirical rank is within \(O(1)\) of \((1-\alpha_0)n\), so that \(\min_{1\le j\le M} w_j\le n+O(1)\). Define
\[
\mathcal E_{t_2}
:=
\left\{
w(\hat q)\le n+\frac{2\Delta\log(M/t_2)}{\varepsilon_2}
\right\}.
\]
If \(\hat q=R_{(m)}^{(n)}\), then
\[
w(\hat q)\ge
\max\left\{
\frac{m-1}{1-\alpha_0},
\frac{n-m}{\alpha_0}
\right\}.
\]
Hence on \(\mathcal E_{t_2}\), the selected rank must satisfy
\[
|\hat k-(1-\alpha_0)n|
=
O\!\left(\frac{\log(M/t_2)}{\varepsilon_2}\right).
\]

Now let
\[
k'=\left\lceil(1-\alpha_1)(n+1)\right\rceil,
\qquad
\hat\kappa:=\frac{|\hat k-k'|}{n+1}.
\]
Since \(\alpha_1-\alpha_0=2/(n\varepsilon_2)\), it follows that on \(\mathcal E_{t_2}\),
\[
\hat\kappa
=
O\!\left(\frac{\log(M/t_2)}{n\varepsilon_2}\right).
\]

We next control the random gap between \(\hat q\) and \(R_{(k')}^{(n)}\). Let \(P_n\) denote the actual law of \((R_1^{(n)},\ldots,R_n^{(n)})\). By assumption, there exists an i.i.d. reference law \(Q_n\) with common distribution function \(F\) such that
\[
d_{TV}(P_n,Q_n)\le \xi_{TV}.
\]
Under \(Q_n\), define
\[
\widehat W
:=
\left|F(R_{(\hat k)}^{(n)})-F(R_{(k')}^{(n)})\right|.
\]
Conditional on \(\hat\kappa\le \kappa_0\), the gap \(\widehat W\) is the sum of at most \(\kappa_0(n+1)\) adjacent spacings of uniform order statistics. The usual spacing bound under the idealized i.i.d. model therefore gives
\[
Q_n\!\left(
\widehat W>t_1+\kappa_0
\;\middle|\;
\hat\kappa\le\kappa_0
\right)
\le
\max_{0\le \kappa\le \kappa_0}
\frac{\kappa(1-\kappa)}{\kappa(1-\kappa)+t_1^2(n+2)}
=
\frac{\kappa_0(1-\kappa_0)}{\kappa_0(1-\kappa_0)+t_1^2(n+2)}.
\]

Choose
\[
t_2=\frac{1}{n\varepsilon_2},
\qquad
t_1=\kappa_0=O\!\left(\frac{\log n}{n\varepsilon_2}\right).
\]
This choice is valid for all sufficiently large \(n\) because \(M=O(n)\) and \(\varepsilon_2 n\to\infty\). Then
\[
\Pr(\hat\kappa>\kappa_0)\le t_2=O\!\left(\frac{1}{n\varepsilon_2}\right),
\]
and hence, under the idealized law \(Q_n\),
\[
Q_n\!\left(
\widehat W>
O\!\left(\frac{\log n}{n\varepsilon_2}\right)
\right)
\le
O\!\left(
\frac{1}{n\varepsilon_2}
+
\frac{\varepsilon_2}{\log n}
\right).
\]
Since \(d_{TV}(P_n,Q_n)\le \xi_{TV}\), the same bound holds under the true law up to an additive \(\xi_{TV}\):
\[
P_n\!\left(
\widehat W>
O\!\left(\frac{\log n}{n\varepsilon_2}\right)
\right)
\le
O\!\left(
\frac{1}{n\varepsilon_2}
+
\frac{\varepsilon_2}{\log n}
\right)
+
\xi_{TV}.
\]

By Assumption~\ref{assump}, \(F^{-1}\) is \(\gamma\)-H\"older continuous near the target level, and both \(F(R_{(\hat k)}^{(n)})\) and \(F(R_{(k')}^{(n)})\) lie in that neighborhood for all sufficiently large \(n\). Therefore,
\[
|\hat q-R_{(k')}^{(n)}|
=
\left|
F^{-1}\!\bigl(F(R_{(\hat k)}^{(n)})\bigr)
-
F^{-1}\!\bigl(F(R_{(k')}^{(n)})\bigr)
\right|
\le
h\,\widehat W^\gamma,
\]
which implies
\[
\Pr\!\left(
|\hat q-R_{(k')}^{(n)}|
>
O\!\left(\frac{\log n}{n\varepsilon_2}\right)^\gamma
\right)
\le
O\!\left(
\frac{1}{n\varepsilon_2}
+
\frac{\varepsilon_2}{\log n}
\right)
+
\xi_{TV}.
\]

Since \(C_\alpha^{dp}(X_{n+1})\) and \(C_\alpha^d(X_{n+1})\) have the same center, we have
\[
L\!\left(
C_\alpha^{dp}(X_{n+1})\triangle C_\alpha^d(X_{n+1})
\right)
\le
2|\hat q-R_{(k')}^{(n)}|.
\]
Hence
\[
\Pr\!\left(
L\!\left(
C_\alpha^{dp}(X)\triangle C_\alpha^d(X)
\right)
>
O\!\left(\frac{\log n}{n\varepsilon_2}\right)^\gamma
\right)
\le
O\!\left(
\frac{1}{n\varepsilon_2}
+
\frac{\varepsilon_2}{\log n}
\right)
+
\xi_{TV}.
\]

On the other hand, Theorem~\ref{the:ERM_dCP} yields
\[
\Pr\!\left(
L\!\left(
C_\alpha^o(X)\triangle C_\alpha^d(X)
\right)
>
O(\varepsilon_1^\gamma+t)
\right)
\le
O\!\left(
\exp\!\left(
-\frac{\varepsilon_1^2n^2t^2}{\log n}
\right)
+
\frac{1}{n\varepsilon_1}
\right)
+
2\xi_{TV}.
\]
Using the triangle inequality,
\[
L\!\left(
C_\alpha^o(X)\triangle C_\alpha^{dp}(X)
\right)
\le
L\!\left(
C_\alpha^o(X)\triangle C_\alpha^d(X)
\right)
+
L\!\left(
C_\alpha^d(X)\triangle C_\alpha^{dp}(X)
\right),
\]
we obtain
\[
\Pr\!\left(
L\!\left(
C_\alpha^o(X)\triangle C_\alpha^{dp}(X)
\right)
>
O\!\left(
\left(\varepsilon_1+\frac{\log n}{n\varepsilon_2}\right)^\gamma+t
\right)
\right)
\]
\[
\le
O\!\left(
\exp\!\left(
-\frac{\varepsilon_1^2n^2t^2}{\log n}
\right)
+
\frac{1}{n\varepsilon_1}
+
\frac{1}{n\varepsilon_2}
+
\frac{\varepsilon_2}{\log n}
\right)
+
3\xi_{TV}.
\]

Finally, under the balanced split
\[
\varepsilon_1=\varepsilon_2=\varepsilon/2,
\qquad
\varepsilon=\Theta(n^{-\eta}),
\qquad
\eta\in(0,1),
\]
we have
\[
\left(\varepsilon_1+\frac{\log n}{n\varepsilon_2}\right)^\gamma
=
O\!\left(
\left(
\frac{1}{n^\eta}
+
\frac{\log n}{n^{1-\eta}}
\right)^\gamma
\right),
\]
\[
\frac{1}{n\varepsilon_1}
+
\frac{1}{n\varepsilon_2}
=
O\!\left(\frac{1}{n^{1-\eta}}\right),
\qquad
\frac{\varepsilon_2}{\log n}
=
O\!\left(\frac{1}{n^\eta\log n}\right),
\]
and
\[
\exp\!\left(
-\frac{\varepsilon_1^2n^2t^2}{\log n}
\right)
=
\exp\!\left(
-\frac{n^{2-2\eta}t^2}{\log n}
\right)
\]
up to multiplicative constants in the exponent. Substituting these bounds into the previous display yields
\begin{align*}
\Pr\!\left(
L\!\left(
C_\alpha^o(X)\triangle C_\alpha^{dp}(X)
\right)
>
O\!\left(
\left(
\frac{1}{n^\eta}
+
\frac{\log n}{n^{1-\eta}}
\right)^\gamma+t
\right)
\right)
\le\;&
O\!\left(
\frac{1}{n^{1-\eta}}
+
\frac{1}{n^\eta\log n}
+
\exp\!\left(
-\frac{n^{2-2\eta}t^2}{\log n}
\right)
\right)
\\
&\quad+3\xi_{TV}.
\end{align*}
\hfill\(\square\)

\section{A Heuristic Local-Sensitivity Refinement for Differential CP}
\label{C}

In Theorem~\ref{thm:dcp_coverage}, the adjusted level $
e^{-\varepsilon}(\alpha-\delta)$
arises from the global worst-case privacy guarantee. This correction is valid uniformly over all adjacent datasets, and is therefore fully distribution-free. However, such a worst-case adjustment may be conservative in practice, especially when the effective perturbation induced by the privacy mechanism is much smaller for most datasets than that implied by the global sensitivity bound.

A natural way to reduce this conservativeness is to exploit local sensitivity information. The basic idea is to partition the data space into regions according to the magnitude of the perturbation induced by replacing one observation, and then calibrate the effective miscoverage adjustment separately within each region. Intuitively, if the fitted model is relatively stable in a certain part of the data space, then the local privacy loss may be substantially smaller than the global upper bound $\varepsilon$, suggesting that a less conservative conformal correction may still be valid there.

More concretely, consider adjacent datasets that differ in only one sample. For a given learning algorithm $\mathcal{A}$, one may associate to each such local perturbation a quantity measuring the effective change in the output of the algorithm, for example through a local sensitivity proxy. If the data space can be partitioned into subsets on which this local perturbation is uniformly controlled by different levels
\[
0 \le \varepsilon_1 \le \varepsilon_2 \le \cdots \le \varepsilon,
\]
then it is natural to expect that the global correction $e^{-\varepsilon}(\alpha-\delta)$ could be replaced by a less conservative weighted adjustment reflecting the distribution of the data across these subsets.

This intuition is closely related to the fact that differential privacy provides a uniform bound over the entire data space, whereas conformal coverage analysis averages over the joint distribution of the data. Thus, although the global DP bound is necessary for worst-case privacy protection, it may overstate the effective loss relevant to average-case coverage behavior. A refinement based on local sensitivity partitioning may therefore improve efficiency while preserving the basic logic of differential CP.

We emphasize, however, that such a refinement is substantially more delicate than the global analysis developed in the main text. A rigorous treatment would require, at a minimum, a precise definition of the local privacy ratio, a measurable partition of the data space, and a careful analysis of how the resulting conditional bounds interact with exchangeability and conformal ranking. Since these issues are beyond the scope of the present paper, we do not pursue a formal theorem here.

Nevertheless, the idea can be illustrated numerically in a simple regression example. Suppose we observe i.i.d.\ samples
\[
(x_1,y_1),\ldots,(x_n,y_n), \qquad n=100,
\]
generated from
\[
y_i=x_i+5+\xi_i, \qquad \xi_i\sim N(0,1),
\]
where $x_i$ is restricted to a bounded interval $[x_{\mathrm{lo}},x_{\mathrm{up}}]$, and each $y_i$ is truncated accordingly so that the output range is bounded. We fit the location model
\[
y=x+b
\]
by least squares under the bounded data domain. Figure~\ref{fig4} illustrates how the quantile level required by differential CP to ensure nominal $90\%$ coverage varies with the privacy budget $\varepsilon$ under two different input distributions for $X$.

The figure suggests that, when $\varepsilon$ is large, the globally adjusted level may indeed be conservative, and that the effective adjustment depends on the distribution of the covariates. This observation supports the potential value of a sharper calibration based on local sensitivity information. Developing such a refinement rigorously is an interesting direction for future work.

\begin{figure}[htbp]
    \centering
    \includegraphics[width=0.8\textwidth]{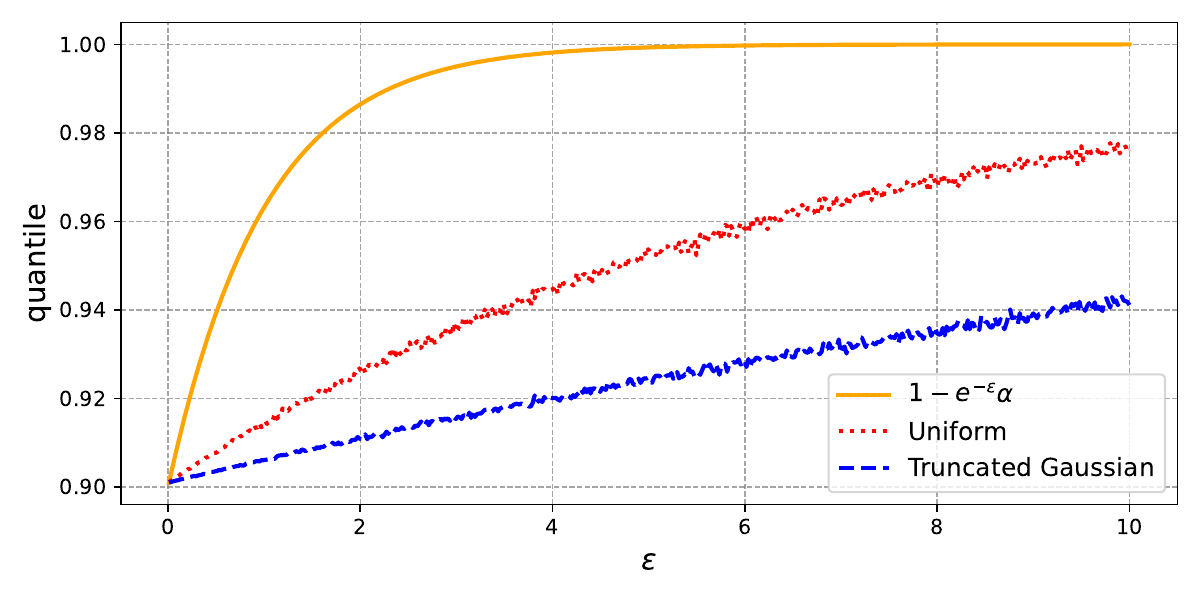}
    \caption{Illustration of the quantile level required by differential CP to ensure $90\%$ coverage as a function of the privacy budget $\varepsilon$. The blue curve corresponds to a truncated Gaussian distribution for $X$ with mean $0$ and standard deviation $10$ on $[-30,30]$, while the red curve corresponds to a uniform distribution for $X$ on $[-10,10]$.}
    \label{fig4}
\end{figure}

\newpage
\bigskip

\section{Additional Experimental Details}
\label{D}

In this appendix, we provide additional implementation details for the numerical experiments reported in Section~5.
The datasets considered include Abalone \citep{nash1994abalone}, Bike Sharing \citep{fanaee2013bikesharing}, Communities and Crime \citep{redmond2002communities}, Physicochemical Properties of Protein Tertiary Structure \citep{rana2013physicochemical}, Power Consumption of Zone~1, Zone~2, and Zone~3 in Tetouan City \citep{salam2018power}, as well as the MNIST \citep{lecun1998mnist} and CIFAR-10 \citep{krizhevsky2009cifar10} datasets.

For the regression tasks, we implement ordinary least squares, ridge regression, and lasso regression by training a single-layer neural network with linear activation. DP is enforced using the Opacus library, with the privacy parameter fixed at \(\delta=10^{-5}\) in all regression experiments. The noise multiplier is chosen to match the target privacy budget, while learning rates and regularization parameters are selected by cross-validation. The nonconformity score is taken to be the absolute residual of the fitted model. Each experiment is repeated 100 times, with the data randomly re-split into training and test sets in each repetition. Within each repetition, DPCP and split CP are evaluated on identical data splits to ensure a fair comparison.

We additionally investigate the performance of DPCP in classification settings. For these experiments, we construct standard deep learning models using Keras and enforce differential privacy through differentially private stochastic gradient descent. Following \citet{sadinle2019least}, we define the nonconformity score for classification as
\begin{align*}
R_i = 1 - \Pr(Y_i \mid X_i),
\end{align*}
where $\Pr(Y_i \mid X_i)$ denotes the predicted class probability.

In the first classification experiment, we employ a convolutional neural network consisting of two convolutional layers followed by two pooling layers and evaluate the performance in the MNIST dataset. We set $\delta = 5 \times 10^{-5}$, use stratified random sampling to select 20{,}000 observations per class for model training and the construction of prediction sets, and reserve an additional 10{,}000 observations for coverage evaluation. The experiment is repeated 100 times.

In the second classification experiment, we use a lightweight ResNet-18 architecture composed of four stacked residual blocks, each containing two basic residual units, and perform experiments on the CIFAR-10 data set. The privacy parameter is set to $\delta = 2 \times 10^{-5}$. We employ stratified random sampling to select 50{,}000 observations per class for model training and prediction set construction, and reserve 10{,}000 observations for coverage evaluation. Due to the increased computational cost, this experiment is repeated 10 times.

\begin{figure}
    \centering
    \includegraphics[width=0.8\textwidth]{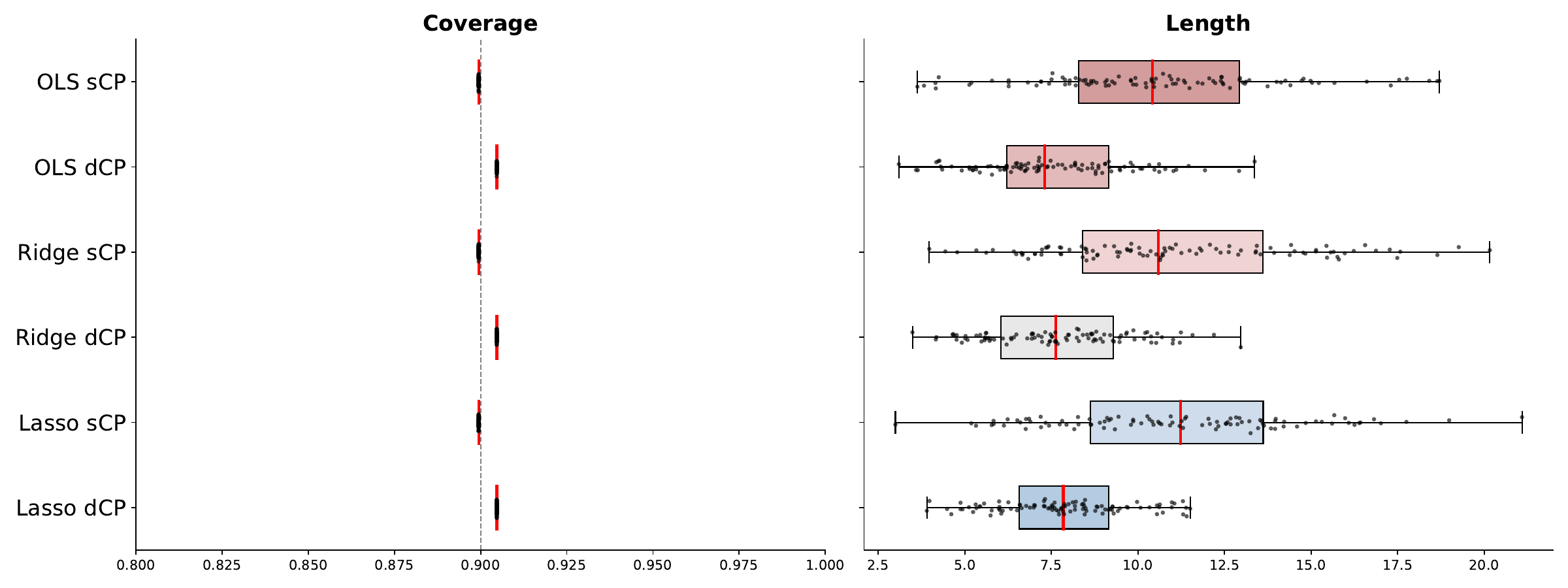}
    \includegraphics[width=0.8\textwidth]{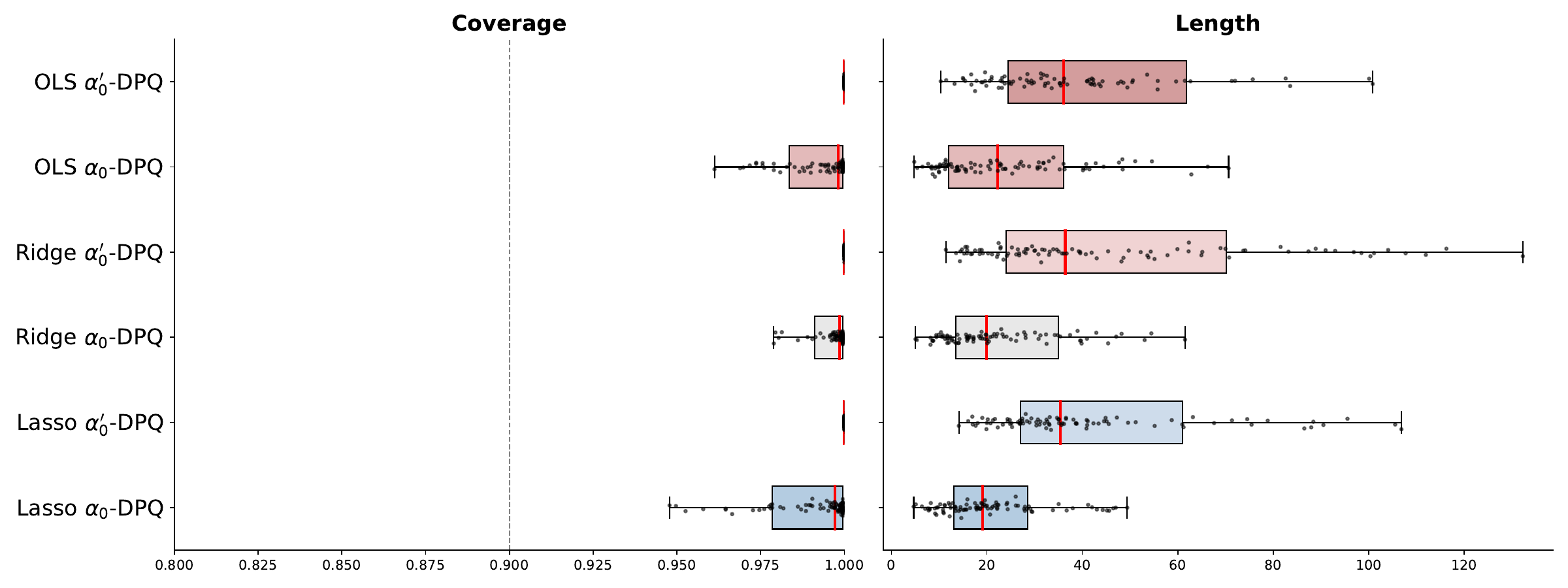}
    \includegraphics[width=0.8\textwidth]{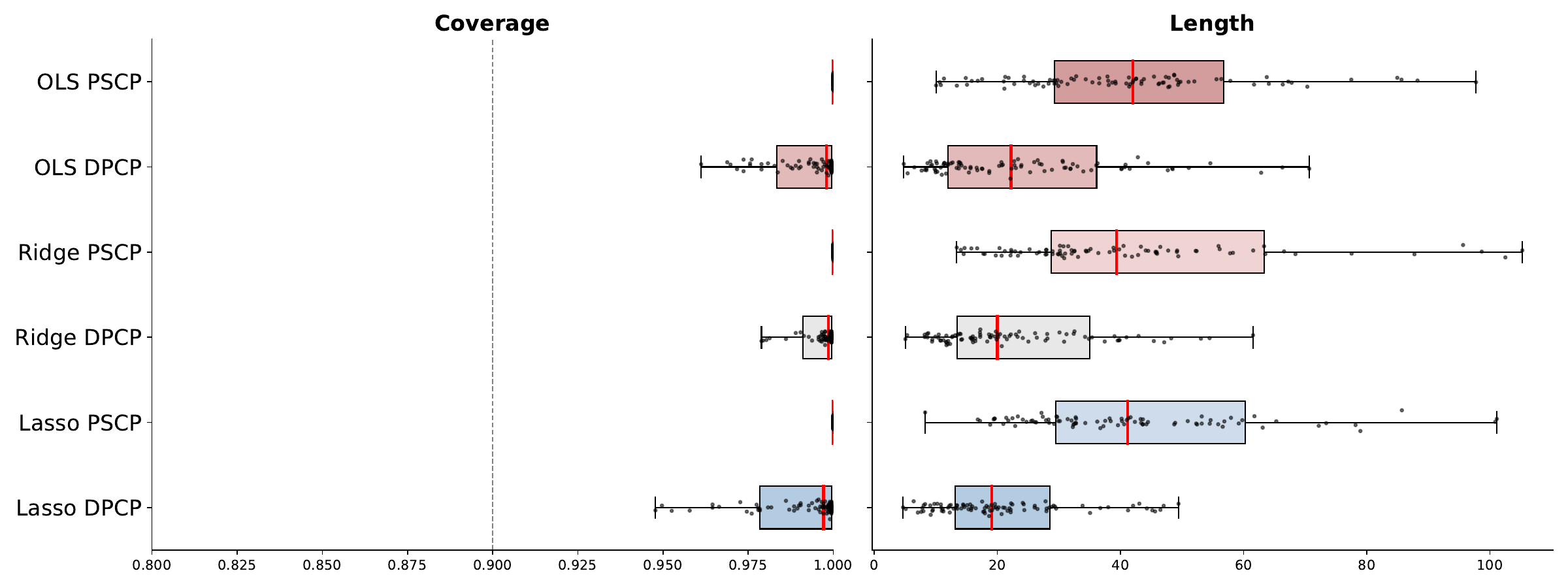}
    \caption{Coverage and length of prediction intervals on the abalone dataset.}
    \label{fig:abalone}
\end{figure}

\begin{figure}
    \centering
    \includegraphics[width=0.8\textwidth]{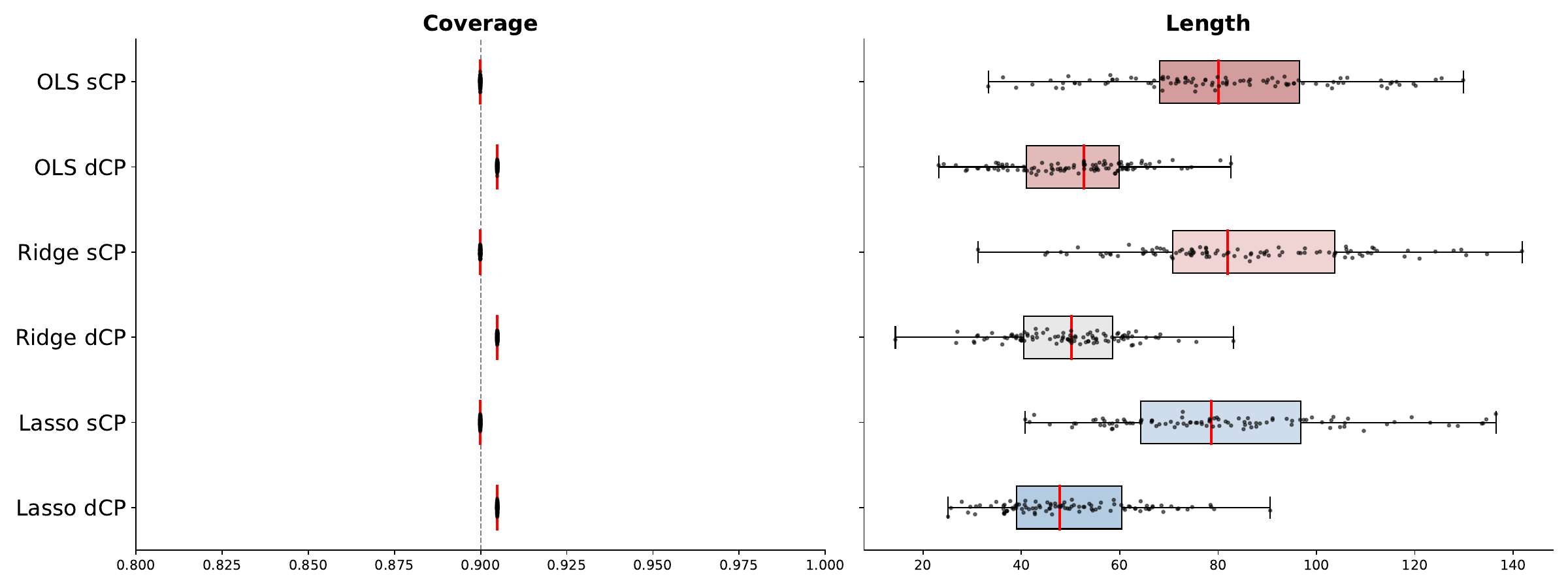}
    \includegraphics[width=0.8\textwidth]{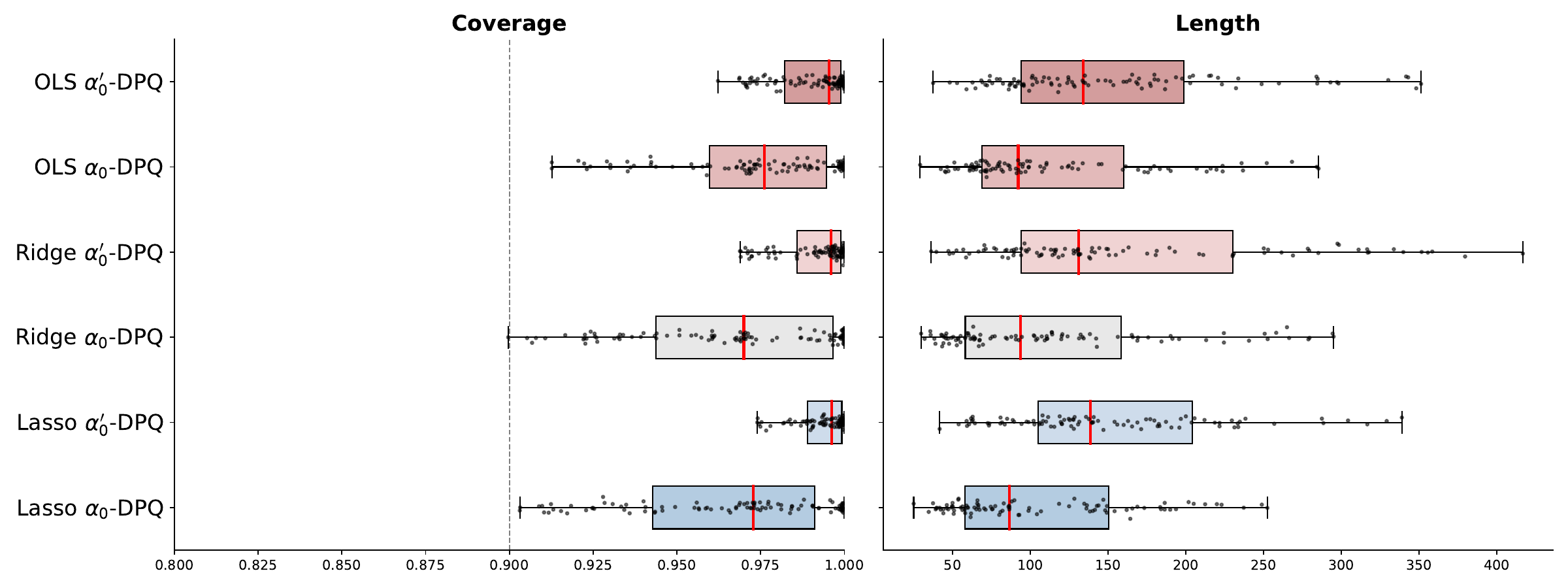}
    \includegraphics[width=0.8\textwidth]{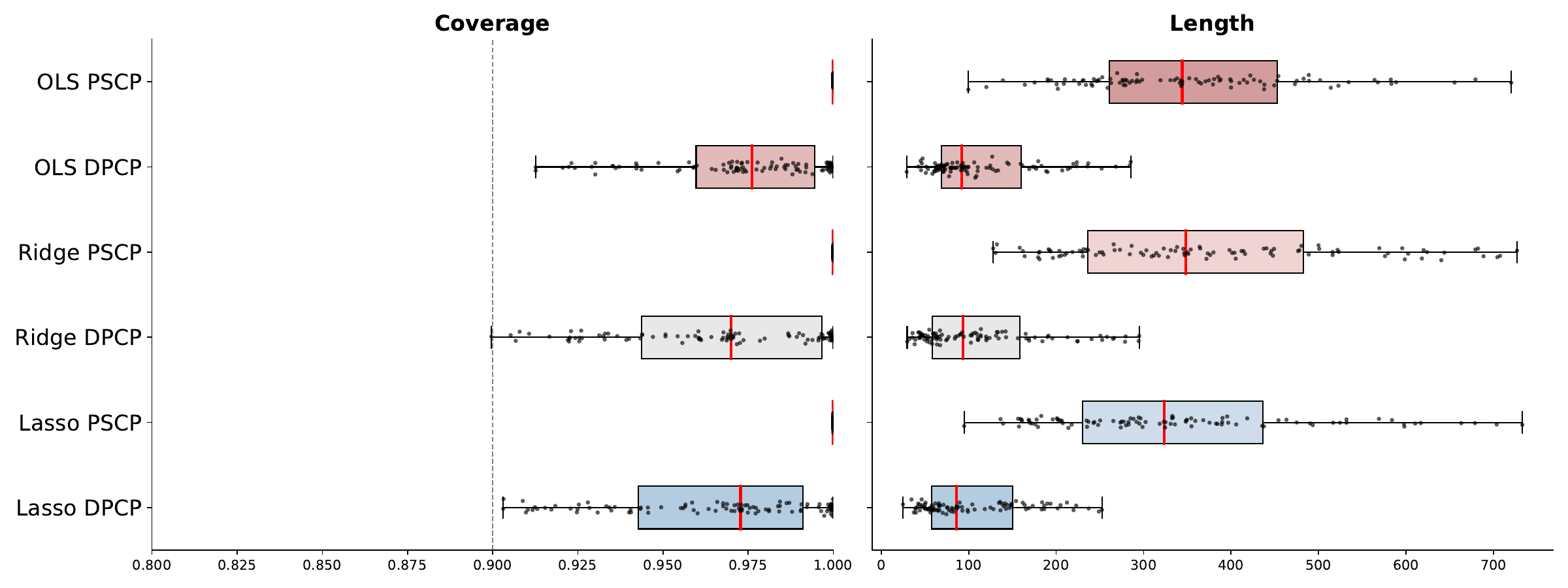}
    \caption{Coverage and length of prediction intervals on the bike sharing dataset.}
    \label{fig:bike_sharing}
\end{figure}

\begin{figure}
    \centering
    \includegraphics[width=0.8\textwidth]{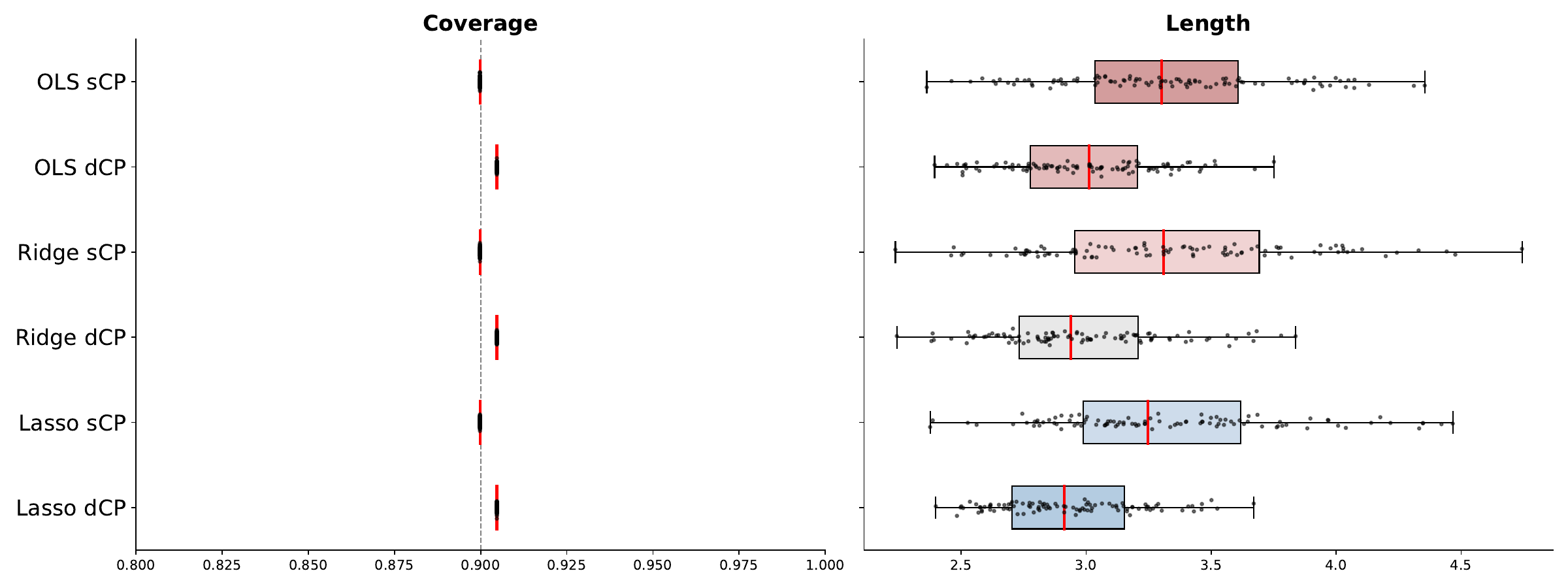}
    \includegraphics[width=0.8\textwidth]{image/communities_and_crime_data.pkl_DP.pdf}
    \includegraphics[width=0.8\textwidth]{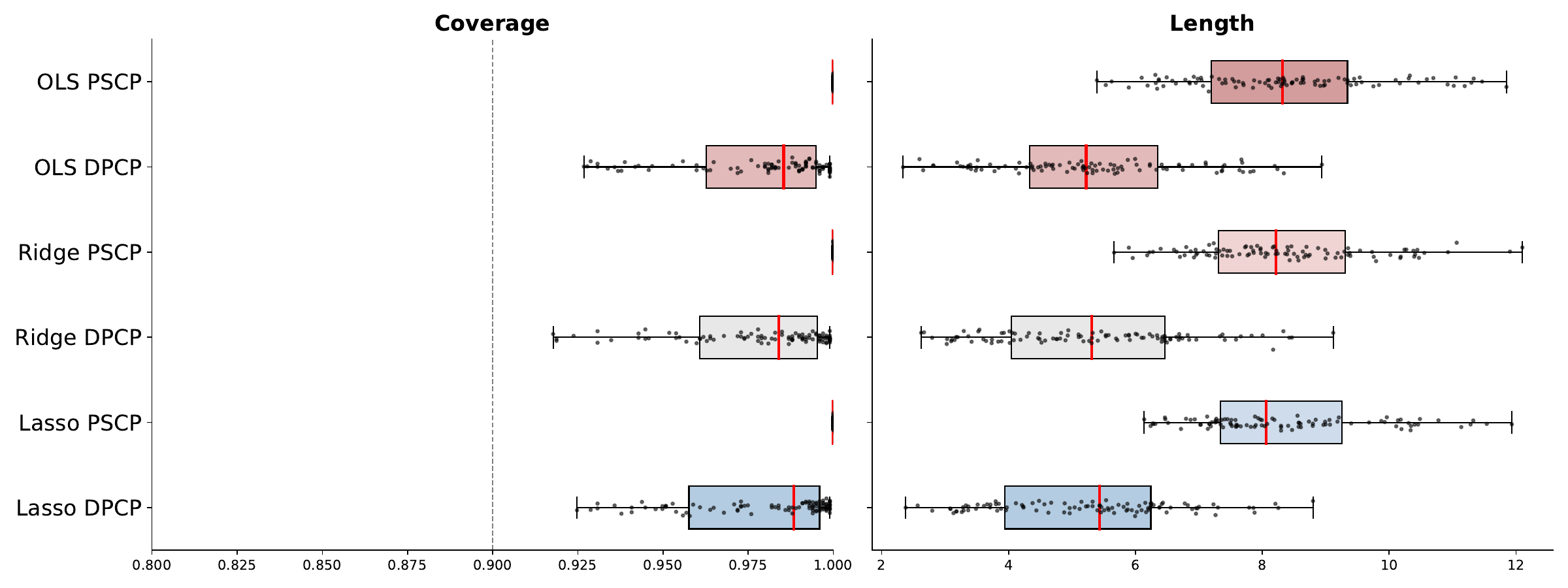}
    \caption{Coverage and length of prediction intervals on the communities and crime dataset.}
    \label{fig:communities_and_crime}
\end{figure}

\begin{figure}
    \centering
    \includegraphics[width=0.8\textwidth]{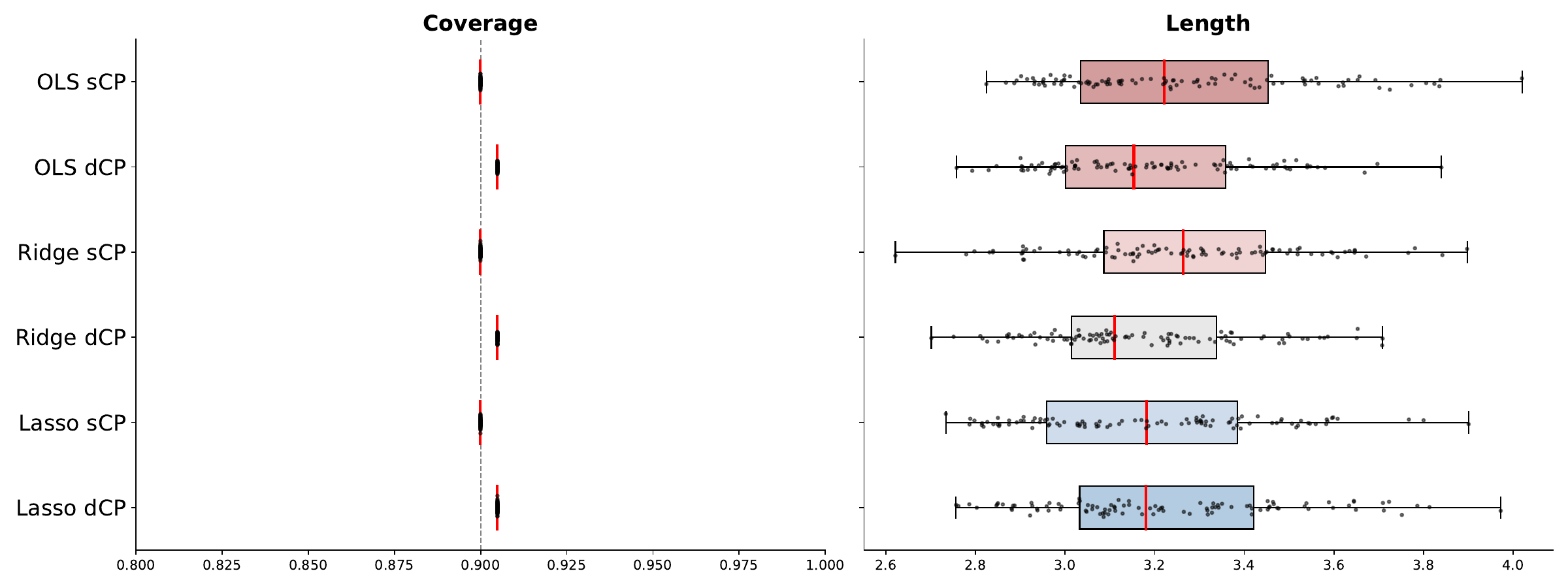}
    \includegraphics[width=0.8\textwidth]{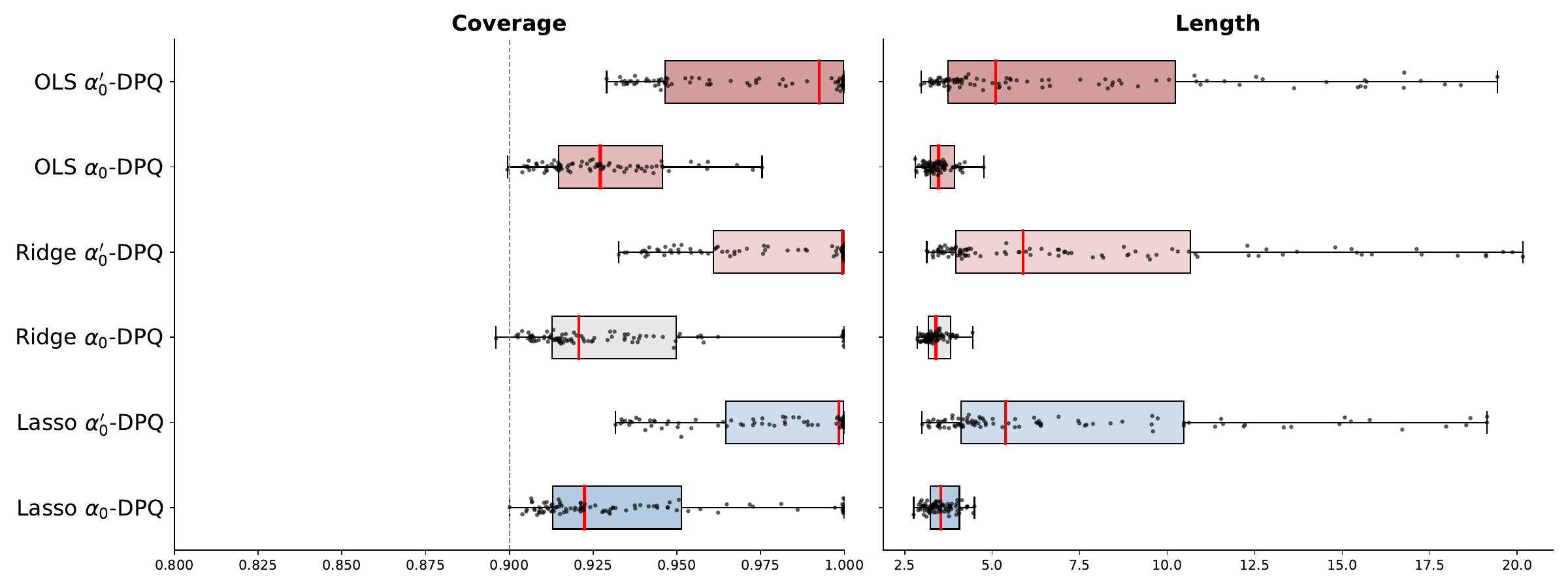}
    \includegraphics[width=0.8\textwidth]{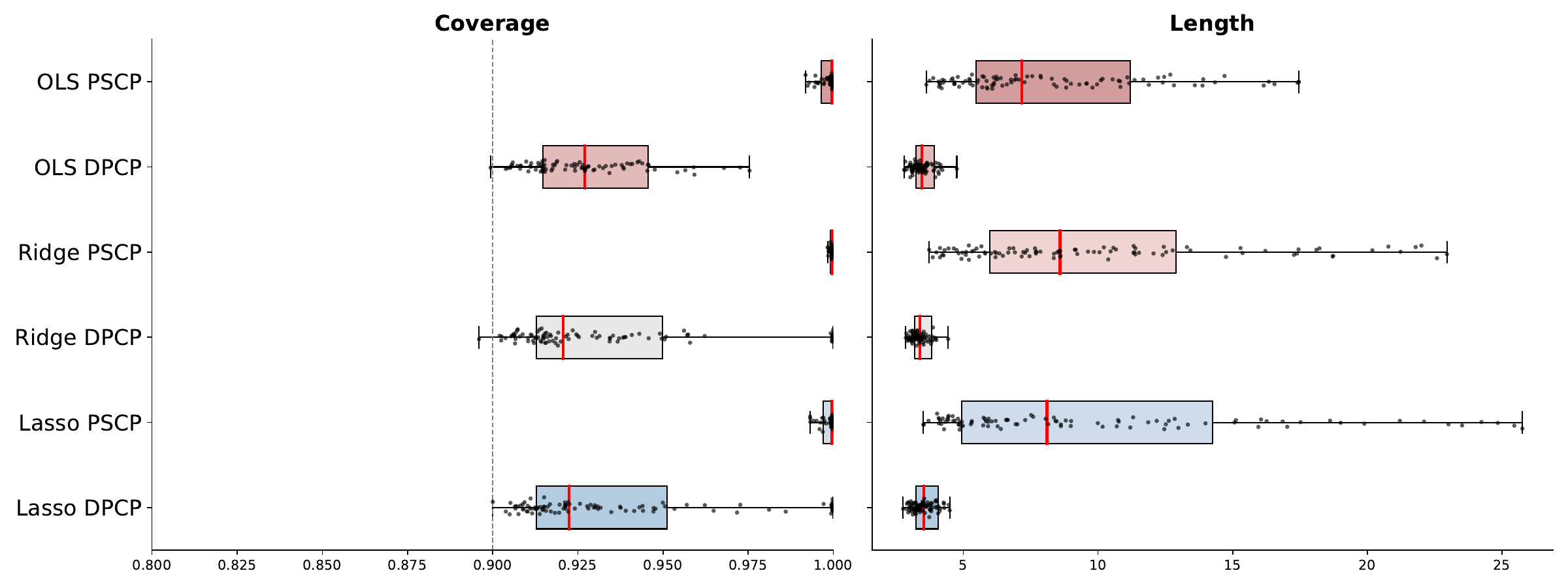}
    \caption{Coverage and length of prediction intervals on the physicochemical properties of protein tertiary structure dataset.}
    \label{fig:protein_properties}
\end{figure}

\begin{figure}
    \centering
    \includegraphics[width=0.8\textwidth]{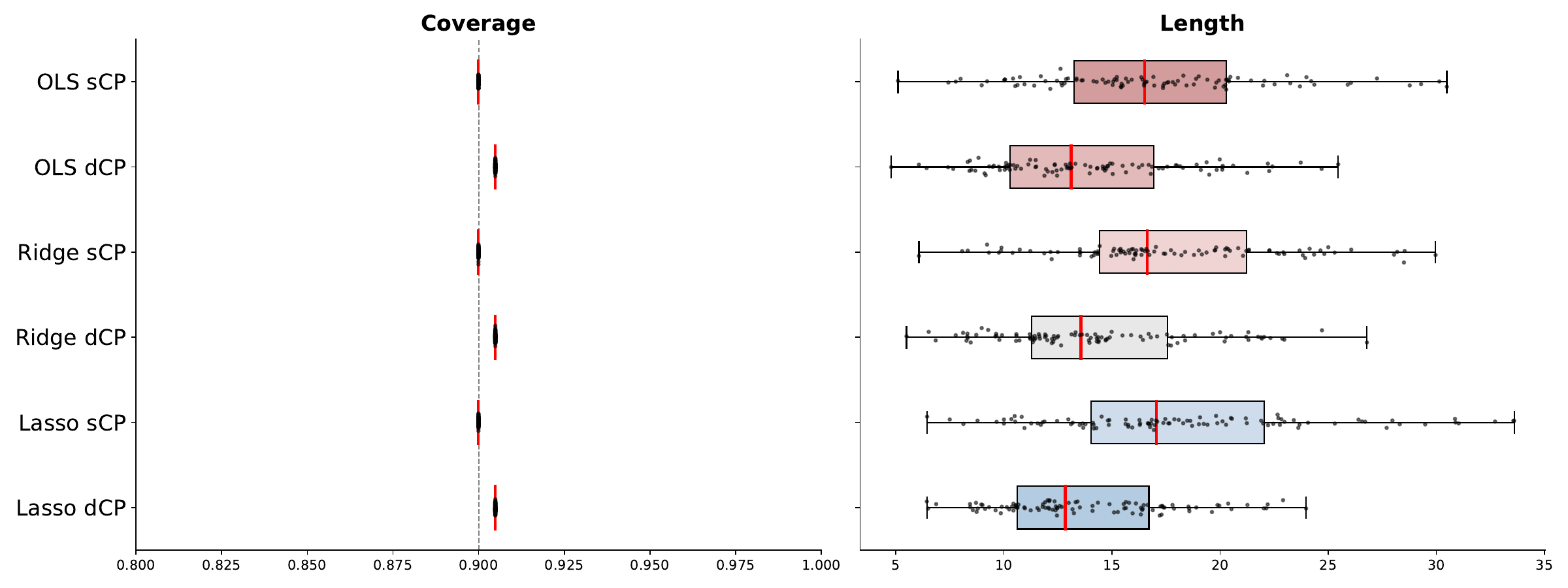}
    \includegraphics[width=0.8\textwidth]{image/power_consumption_of_Zone1_data.pkl_DP.pdf}
    \includegraphics[width=0.8\textwidth]{image/power_consumption_of_Zone1_data.pkl_CPDP.pdf}
    \caption{Coverage and length of prediction intervals on the power consumption of Zone 1 dataset.}
    \label{fig:power_consumption_zone1}
\end{figure}

\begin{figure}
    \centering
    \includegraphics[width=0.8\textwidth]{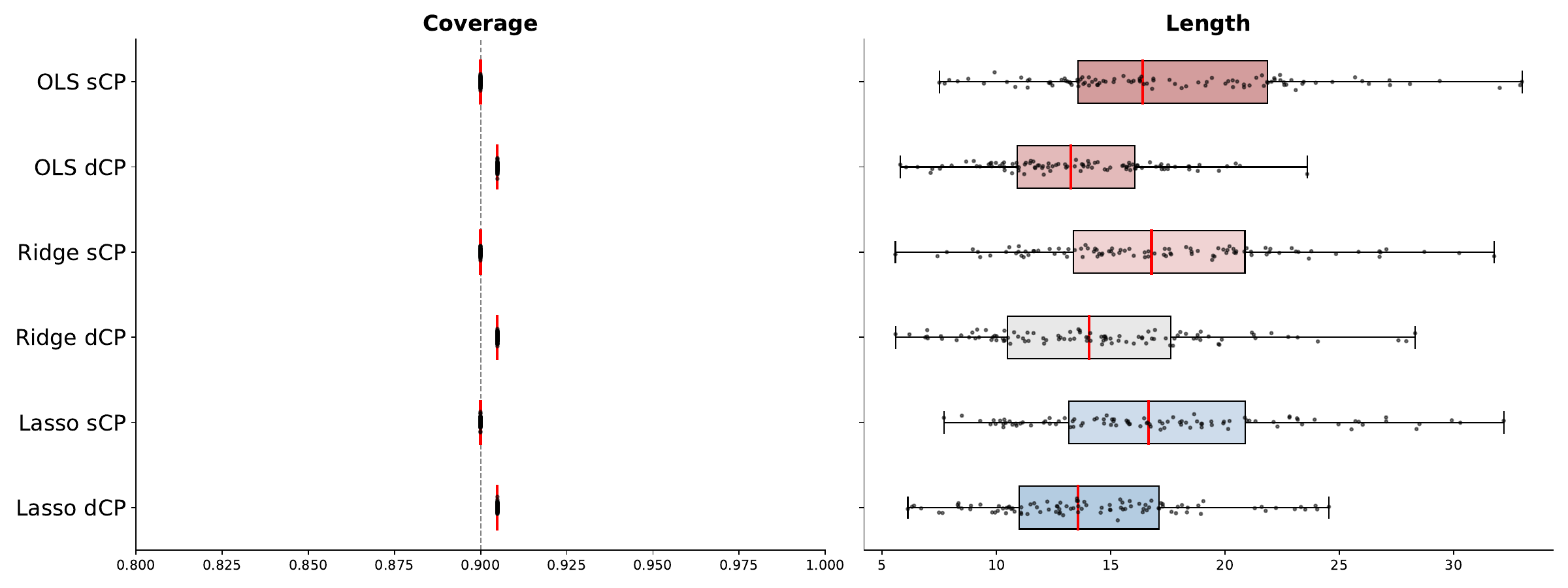}
    \includegraphics[width=0.8\textwidth]{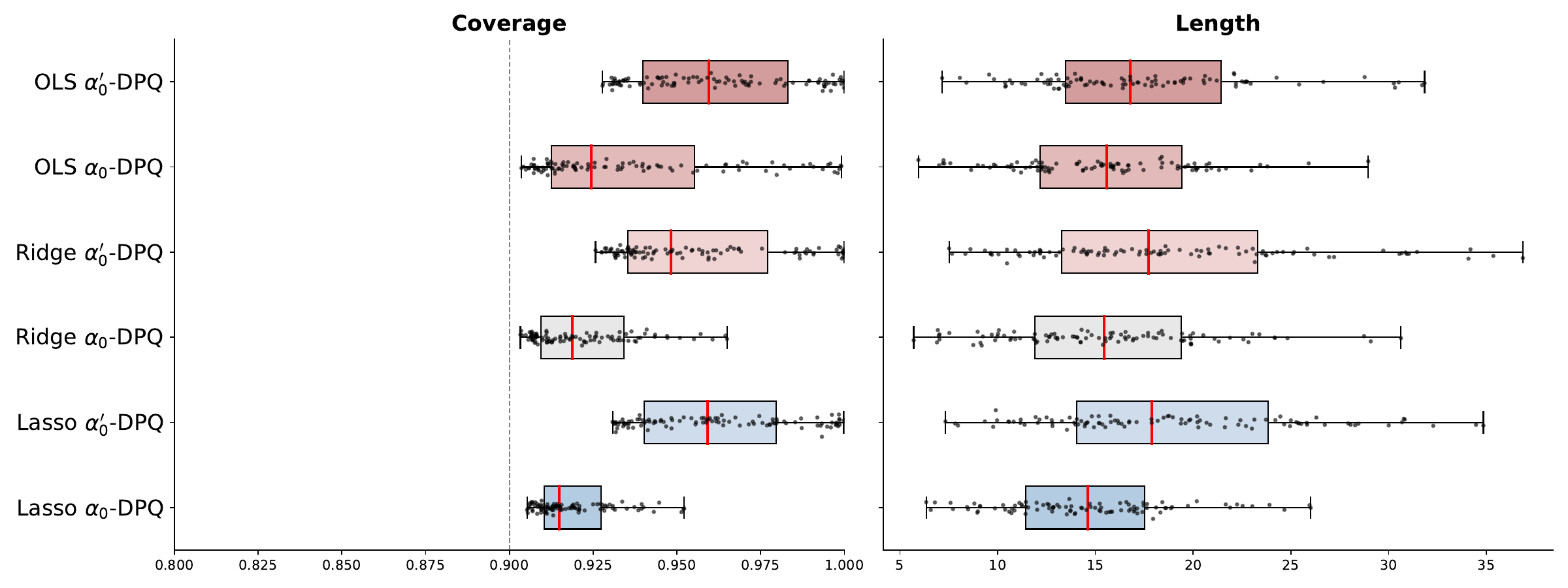}
    \includegraphics[width=0.8\textwidth]{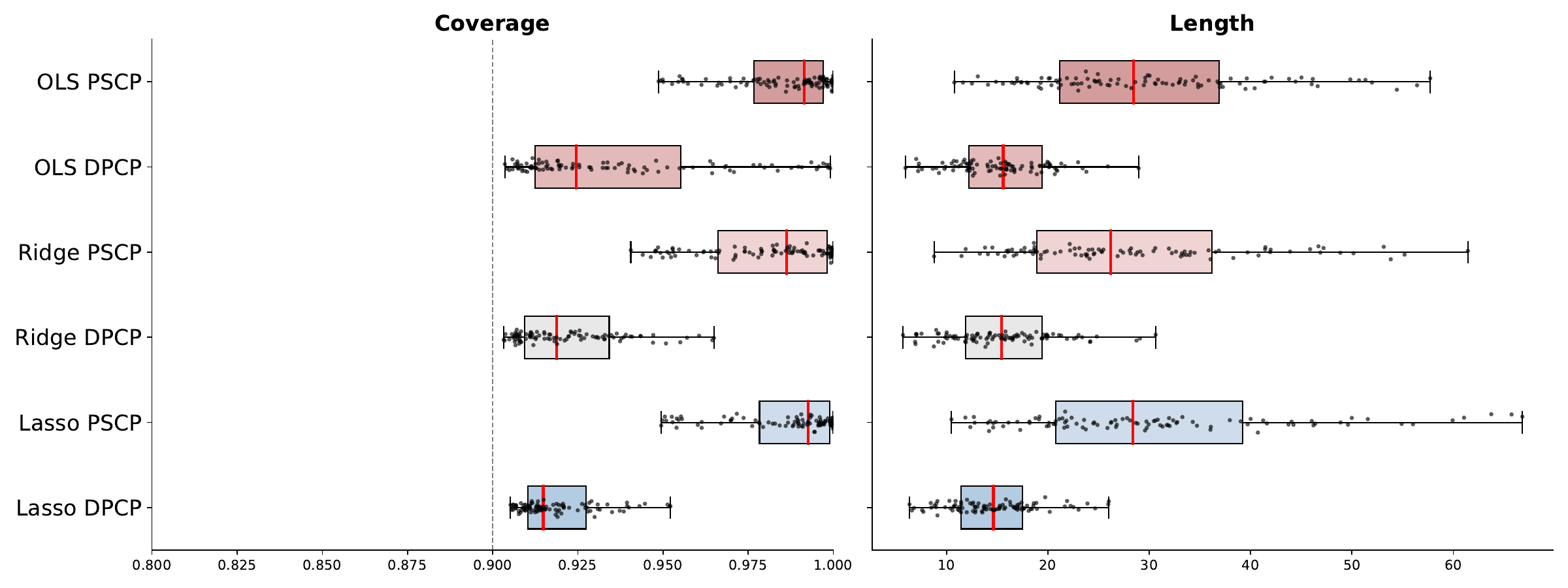}
    \caption{Coverage and length of prediction intervals on the power consumption of Zone 2 dataset.}
    \label{fig:power_consumption_zone2}
\end{figure}

\begin{figure}
    \centering
    \includegraphics[width=0.8\textwidth]{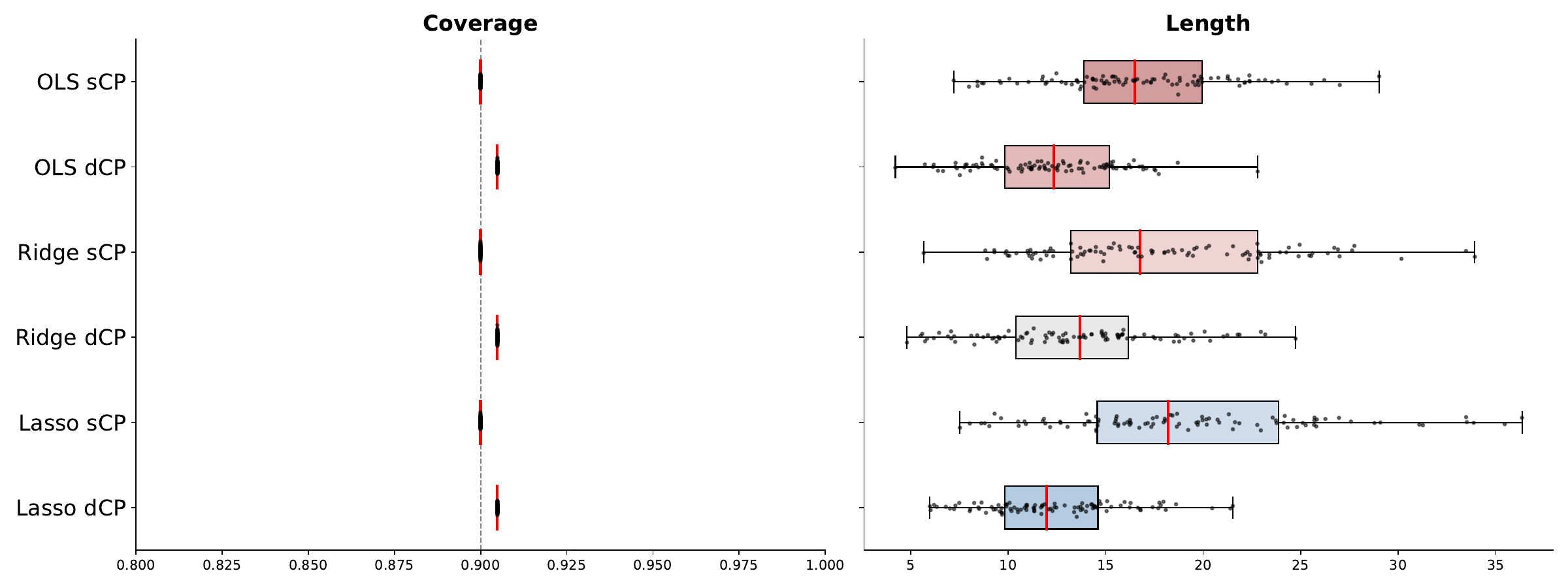}
    \includegraphics[width=0.8\textwidth]{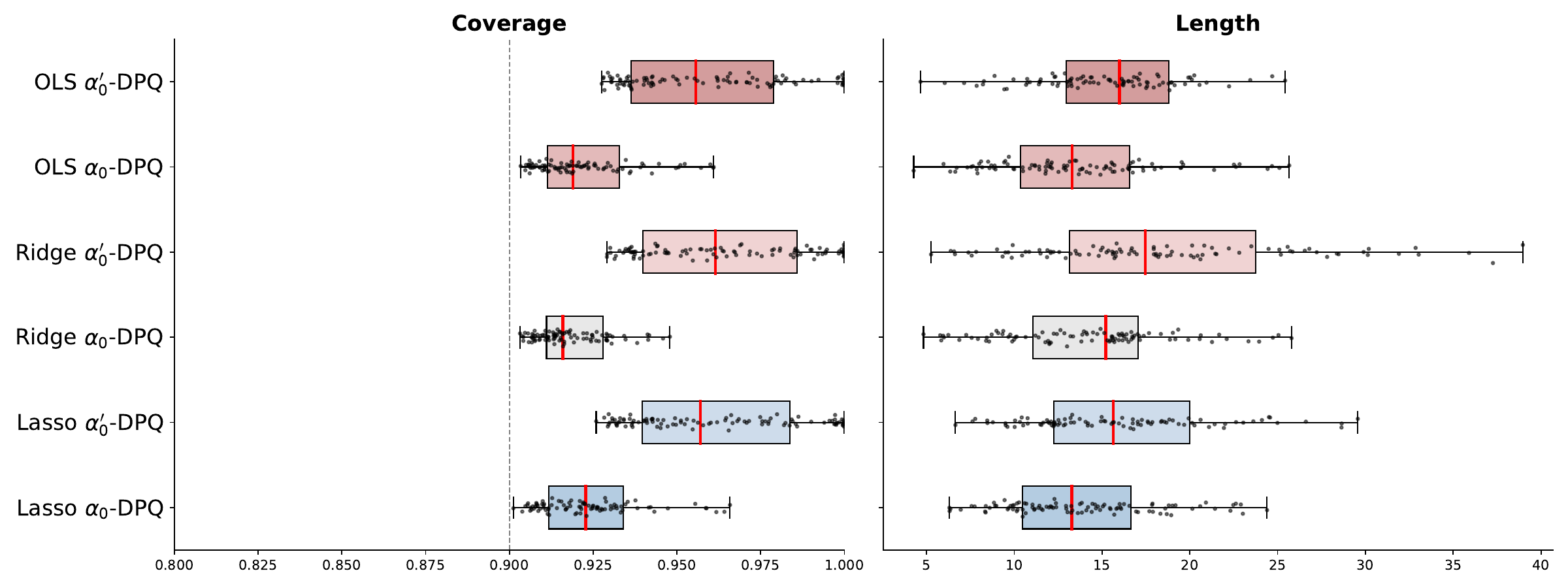}
    \includegraphics[width=0.8\textwidth]{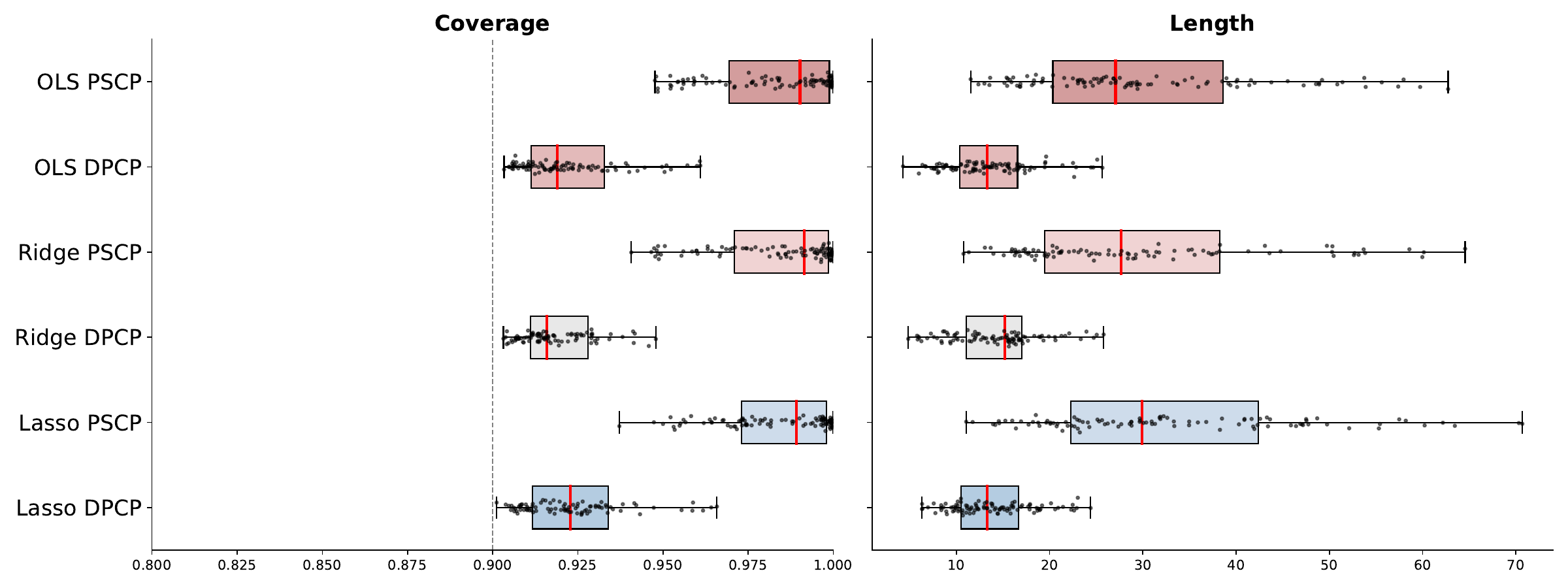}
    \caption{Coverage and length of prediction intervals on the power consumption of Zone 3 dataset.}
    \label{fig:power_consumption_zone3}
\end{figure}

\begin{figure}
    \centering
    \includegraphics[width=0.8\textwidth]{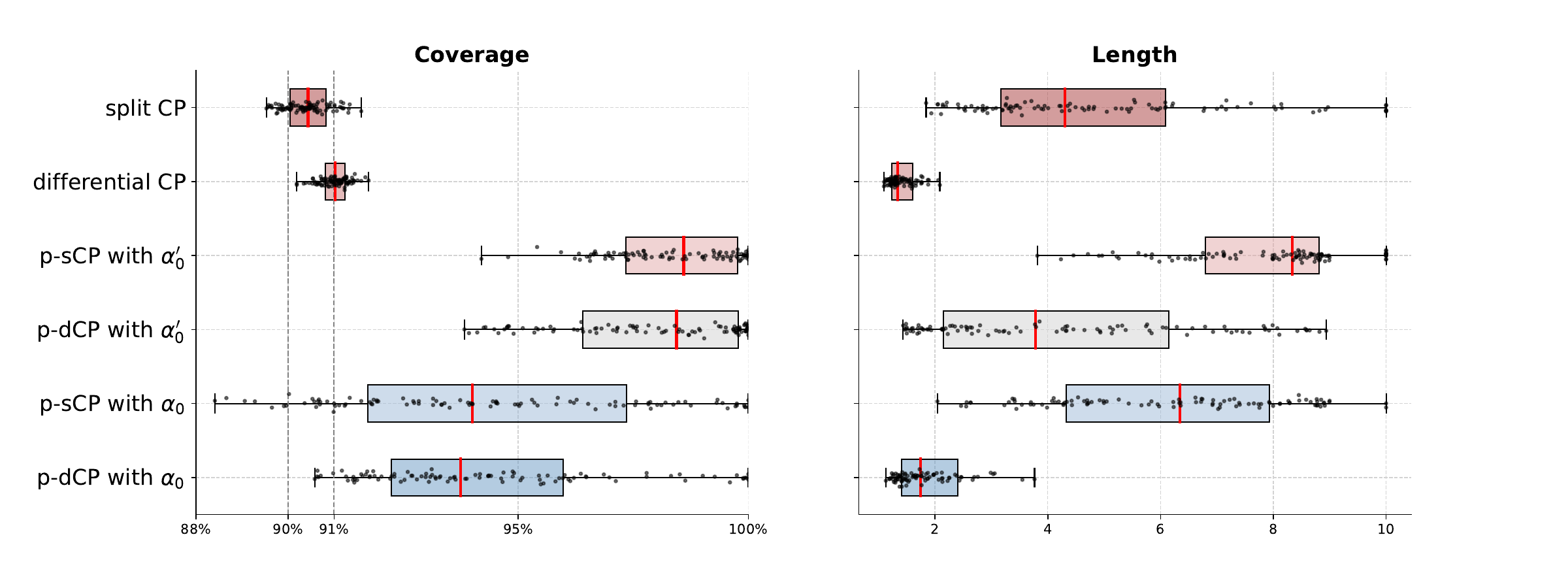}
    \caption{Coverage and length of prediction intervals for the CNN model on the MNIST dataset.}
    \label{fig:MNIST}
\end{figure}

\begin{figure}
    \centering
    \includegraphics[width=0.8\textwidth]{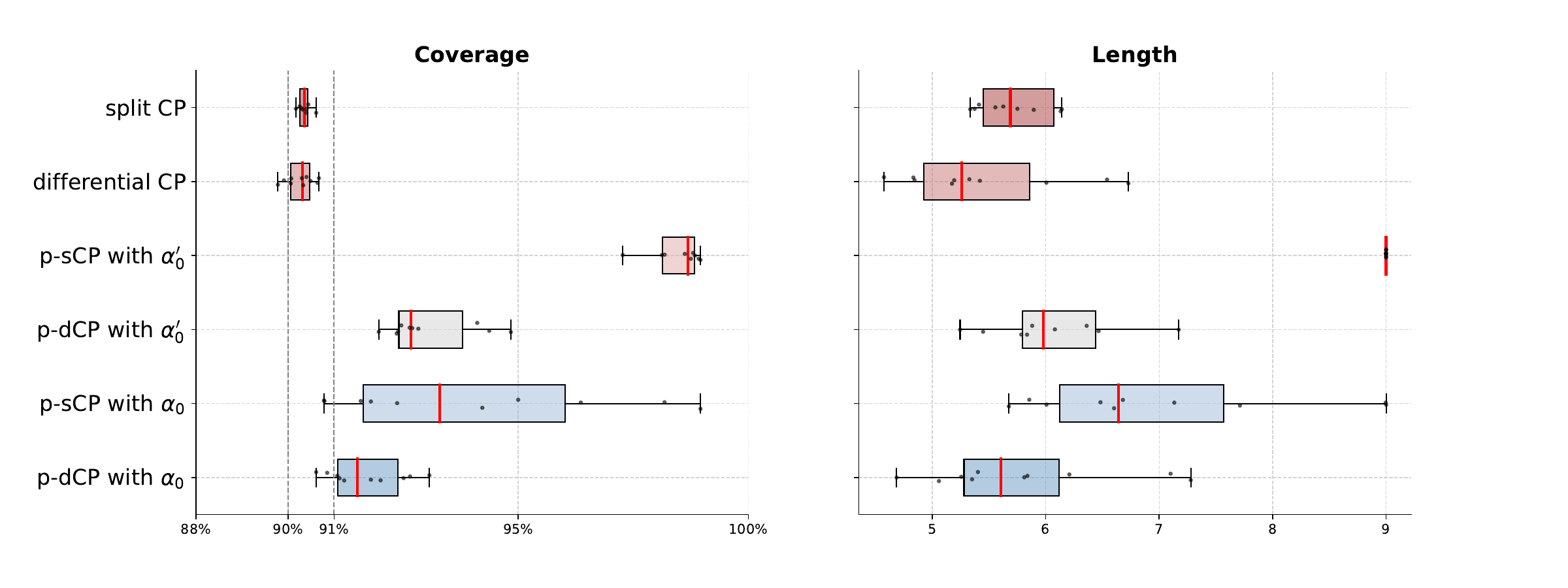}
    \caption{Coverage and length of prediction intervals for the RNN model on the CIFAR-10 dataset.}
    \label{fig:CIFAR-10}
\end{figure}

\newpage

\bibliography{reference}

\begin{thebibliography}{43}
\providecommand{\natexlab}[1]{#1}
\providecommand{\url}[1]{\texttt{#1}}
\expandafter\ifx\csname urlstyle\endcsname\relax
  \providecommand{\doi}[1]{doi: #1}\else
  \providecommand{\doi}{doi: \begingroup \urlstyle{rm}\Url}\fi

\bibitem[Angelopoulos and Bates(2023)]{angelopoulos2023conformal}
Anastasios~N. Angelopoulos and Stephen Bates.
\newblock Conformal prediction: A gentle introduction.
\newblock \emph{Foundations and Trends{\textregistered} in Machine Learning}, 16\penalty0 (4):\penalty0 494--591, 2023.

\bibitem[Angelopoulos et~al.(2022)Angelopoulos, Bates, Zrnic, and Jordan]{angelopoulos2022private}
Anastasios~N. Angelopoulos, Stephen Bates, Tijana Zrnic, and Michael~I Jordan.
\newblock Private prediction sets.
\newblock \emph{Harvard Data Science Review}, 4\penalty0 (2), 2022.

\bibitem[Angelopoulos et~al.(2024)Angelopoulos, Foygel~Barber, and Bates]{angelopoulos2024theoretical}
Anastasios~N. Angelopoulos, Rina Foygel~Barber, and Stephen Bates.
\newblock Theoretical foundations of conformal prediction, 2024.

\bibitem[Bun and Steinke(2016)]{bun2016concentrated}
Mark Bun and Thomas Steinke.
\newblock Concentrated differential privacy: Simplifications, extensions, and lower bounds.
\newblock In \emph{Theory of cryptography conference}, pages 635--658. Springer, 2016.

\bibitem[Bun et~al.(2017)Bun, Steinke, and Ullman]{bun2017make}
Mark Bun, Thomas Steinke, and Jonathan Ullman.
\newblock Make up your mind: The price of online queries in differential privacy.
\newblock In \emph{Proceedings of the twenty-eighth annual ACM-SIAM symposium on discrete algorithms}, pages 1306--1325. SIAM, 2017.

\bibitem[Chaudhuri et~al.(2011)Chaudhuri, Monteleoni, and Sarwate]{chaudhuri2009differentially}
Kamalika Chaudhuri, Claire Monteleoni, and Anand~D Sarwate.
\newblock Differentially private empirical risk minimization.
\newblock \emph{Journal of Machine Learning Research}, 12\penalty0 (3), 2011.

\bibitem[Chaudhuri et~al.(2019)Chaudhuri, Imola, and Machanavajjhala]{chaudhuri2019capacity}
Kamalika Chaudhuri, Jacob Imola, and Ashwin Machanavajjhala.
\newblock Capacity bounded differential privacy.
\newblock \emph{Advances in Neural Information Processing Systems}, 32, 2019.

\bibitem[Dong et~al.(2022)Dong, Roth, and Su]{dong2022gaussian}
Jing Dong, Aaron Roth, and Weijie Su.
\newblock Gaussian differential privacy.
\newblock \emph{Journal of the Royal Statistical Society Series B: Statistical Methodology}, 84\penalty0 (1):\penalty0 3--37, 2022.

\bibitem[Drechsler(2023)]{drechsler2023differential}
J{\"o}rg Drechsler.
\newblock Differential privacy for government agencies—are we there yet?
\newblock \emph{Journal of the American Statistical Association}, 118\penalty0 (541):\penalty0 761--773, 2023.

\bibitem[Dwork(2006)]{dwork2006differential}
Cynthia Dwork.
\newblock Differential privacy.
\newblock In \emph{Automata, Languages and Programming}, volume 4052 of \emph{Lecture Notes in Computer Science}, pages 1--12. Springer, 2006.

\bibitem[Dwork and Roth(2014)]{dwork2013algorithmic}
Cynthia Dwork and Aaron Roth.
\newblock The algorithmic foundations of differential privacy.
\newblock \emph{Foundations and Trends{\textregistered} in Theoretical Computer Science}, 9\penalty0 (3-4):\penalty0 211--407, 2014.

\bibitem[Fanaee-T(2013)]{fanaee2013bikesharing}
Hadi Fanaee-T.
\newblock Bike sharing.
\newblock UCI Machine Learning Repository, 2013.

\bibitem[Fontana et~al.(2023)Fontana, Zeni, and Vantini]{fontana2023conformal}
Matteo Fontana, Gianluca Zeni, and Simone Vantini.
\newblock Conformal prediction: a unified review of theory and new challenges.
\newblock \emph{Bernoulli}, 29\penalty0 (1):\penalty0 1--23, 2023.

\bibitem[Foygel~Barber et~al.(2021)Foygel~Barber, Cand{\`e}s, Ramdas, and Tibshirani]{barber2021predictive}
Rina Foygel~Barber, Emmanuel~J Cand{\`e}s, Aaditya Ramdas, and Ryan~J Tibshirani.
\newblock Predictive inference with the jackknife+.
\newblock \emph{The Annals of Statistics}, 49\penalty0 (1):\penalty0 486--507, 2021.

\bibitem[Foygel~Barber et~al.(2023)Foygel~Barber, Cand\`{e}s, Ramdas, and Tibshirani]{barber2023conformal}
Rina Foygel~Barber, Emmanuel~J Cand\`{e}s, Aaditya Ramdas, and Ryan~J Tibshirani.
\newblock Conformal prediction beyond exchangeability.
\newblock \emph{The Annals of Statistics}, 51\penalty0 (2):\penalty0 816--845, 2023.

\bibitem[Gawlikowski et~al.(2023)Gawlikowski, Tassi, Ali, Lee, Humt, Feng, Kruspe, Triebel, Jung, Roscher, et~al.]{gawlikowski2023survey}
Jakob Gawlikowski, Cedrique Rovile~Njieutcheu Tassi, Mohsin Ali, Jongseok Lee, Matthias Humt, Jianxiang Feng, Anna Kruspe, Rudolph Triebel, Peter Jung, Ribana Roscher, et~al.
\newblock A survey of uncertainty in deep neural networks.
\newblock \emph{Artificial Intelligence Review}, 56\penalty0 (Suppl 1):\penalty0 1513--1589, 2023.

\bibitem[Gibbs and Cand{\`e}s(2024)]{gibbs2024conformal}
Isaac Gibbs and Emmanuel~J Cand{\`e}s.
\newblock Conformal inference for online prediction with arbitrary distribution shifts.
\newblock \emph{Journal of Machine Learning Research}, 25\penalty0 (162):\penalty0 1--36, 2024.

\bibitem[Gui et~al.(2024)Gui, Hore, Ren, and Foygel~Barber]{gui2024conformalized}
Yu~Gui, Rohan Hore, Zhimei Ren, and Rina Foygel~Barber.
\newblock Conformalized survival analysis with adaptive cut-offs.
\newblock \emph{Biometrika}, 111\penalty0 (2):\penalty0 459--477, 2024.

\bibitem[Humbert et~al.(2023)Humbert, Le~Bars, Bellet, and Arlot]{humbert2023one}
Pierre Humbert, Batiste Le~Bars, Aur{\'e}lien Bellet, and Sylvain Arlot.
\newblock One-shot federated conformal prediction.
\newblock In \emph{International Conference on Machine Learning}, pages 14153--14177. PMLR, 2023.

\bibitem[Jin and Cand{\`e}s(2025)]{jin2025model}
Ying Jin and Emmanuel~J Cand{\`e}s.
\newblock Model-free selective inference under covariate shift via weighted conformal p-values.
\newblock \emph{Biometrika}, art. asaf066, 2025.

\bibitem[Kourou et~al.(2015)Kourou, Exarchos, Exarchos, Karamouzis, and Fotiadis]{Kourou2014MachineLA}
Konstantina Kourou, Themis~P Exarchos, Konstantinos~P Exarchos, Michalis~V Karamouzis, and Dimitrios~I Fotiadis.
\newblock Machine learning applications in cancer prognosis and prediction.
\newblock \emph{Computational and structural biotechnology journal}, 13:\penalty0 8--17, 2015.

\bibitem[Krizhevsky et~al.(2009)Krizhevsky, Hinton, et~al.]{krizhevsky2009cifar10}
Alex Krizhevsky, Geoffrey Hinton, et~al.
\newblock Learning multiple layers of features from tiny images.
\newblock 2009.

\bibitem[LeCun(1998)]{lecun1998mnist}
Yann LeCun.
\newblock The mnist database of handwritten digits.
\newblock 1998.

\bibitem[Lei et~al.(2013)Lei, Robins, and Wasserman]{lei2013distribution-free}
Jing Lei, James Robins, and Larry Wasserman.
\newblock Distribution-free prediction sets.
\newblock \emph{Journal of the American Statistical Association}, 108\penalty0 (501):\penalty0 278--287, 2013.

\bibitem[Lei et~al.(2018)Lei, G’Sell, Rinaldo, Tibshirani, and Wasserman]{lei2018distribution}
Jing Lei, Max G’Sell, Alessandro Rinaldo, Ryan~J Tibshirani, and Larry Wasserman.
\newblock Distribution-free predictive inference for regression.
\newblock \emph{Journal of the American Statistical Association}, 113\penalty0 (523):\penalty0 1094--1111, 2018.

\bibitem[Mashrur et~al.(2020)Mashrur, Luo, Zaidi, and Robles-Kelly]{mashrur2020machine}
Akib Mashrur, Wei Luo, Nayyar~A Zaidi, and Antonio Robles-Kelly.
\newblock Machine learning for financial risk management: a survey.
\newblock \emph{IEEE Access}, 8:\penalty0 203203--203223, 2020.

\bibitem[Melotti et~al.(2023)Melotti, Lu, Conde, Zhao, Asvadi, Goncalves, and Premebida]{Melotti2021ProbabilisticAF}
Gledson Melotti, Weihao Lu, Pedro Conde, Dezong Zhao, Alireza Asvadi, Nuno Goncalves, and Cristiano Premebida.
\newblock Probabilistic approach for road-users detection.
\newblock \emph{IEEE Transactions on Intelligent Transportation Systems}, 24\penalty0 (9):\penalty0 9253--9267, 2023.

\bibitem[Mironov(2017)]{mironov2017renyi}
Ilya Mironov.
\newblock R{\'e}nyi differential privacy.
\newblock In \emph{2017 IEEE 30th computer security foundations symposium (CSF)}, pages 263--275. IEEE, 2017.

\bibitem[Nash et~al.(1994)Nash, Sellers, Talbot, Cawthorn, and Ford]{nash1994abalone}
Warwick Nash, Tracy Sellers, Simon Talbot, Andrew Cawthorn, and Wes Ford.
\newblock Abalone.
\newblock UCI Machine Learning Repository, 1994.

\bibitem[Nowotarski and Weron(2018)]{Nowotarski2016RecentAI}
Jakub Nowotarski and Rafa{\l} Weron.
\newblock Recent advances in electricity price forecasting: A review of probabilistic forecasting.
\newblock \emph{Renewable and Sustainable Energy Reviews}, 81:\penalty0 1548--1568, 2018.

\bibitem[Oliveira et~al.(2024)Oliveira, Orenstein, Ramos, and Romano]{oliveira2024split}
Roberto~I Oliveira, Paulo Orenstein, Thiago Ramos, and Joao~Vitor Romano.
\newblock Split conformal prediction and non-exchangeable data.
\newblock \emph{Journal of Machine Learning Research}, 25\penalty0 (225):\penalty0 1--38, 2024.

\bibitem[Papadopoulos et~al.(2002)Papadopoulos, Proedrou, Vovk, and Gammerman]{papadopoulos2002inductive}
Harris Papadopoulos, Kostas Proedrou, Volodya Vovk, and Alex Gammerman.
\newblock Inductive confidence machines for regression.
\newblock In \emph{European conference on machine learning}, pages 345--356. Springer, 2002.

\bibitem[Piao et~al.(2019)Piao, Shi, Yan, Zhang, and Liu]{piao2019privacy}
Chunhui Piao, Yajuan Shi, Jiaqi Yan, Changyou Zhang, and Liping Liu.
\newblock Privacy-preserving governmental data publishing: A fog-computing-based differential privacy approach.
\newblock \emph{Future Generation Computer Systems}, 90:\penalty0 158--174, 2019.

\bibitem[Plassier et~al.(2023)Plassier, Makni, Rubashevskii, Moulines, and Panov]{plassier2023conformal}
Vincent Plassier, Mehdi Makni, Aleksandr Rubashevskii, Eric Moulines, and Maxim Panov.
\newblock Conformal prediction for federated uncertainty quantification under label shift.
\newblock In \emph{International Conference on Machine Learning}, pages 27907--27947. PMLR, 2023.

\bibitem[Rana(2013)]{rana2013physicochemical}
Prashant Rana.
\newblock Physicochemical properties of protein tertiary structure.
\newblock UCI Machine Learning Repository, 2013.

\bibitem[Redmond(2002)]{redmond2002communities}
Michael Redmond.
\newblock Communities and crime.
\newblock UCI Machine Learning Repository, 2002.

\bibitem[Sadinle et~al.(2019)Sadinle, Lei, and Wasserman]{sadinle2019least}
Mauricio Sadinle, Jing Lei, and Larry Wasserman.
\newblock Least ambiguous set-valued classifiers with bounded error levels.
\newblock \emph{Journal of the American Statistical Association}, 114\penalty0 (525):\penalty0 223--234, 2019.

\bibitem[Salam and El~Hibaoui(2018)]{salam2018power}
Abdulwahed Salam and Abdelaaziz El~Hibaoui.
\newblock Power consumption of tetouan city.
\newblock UCI Machine Learning Repository, 2018.

\bibitem[Shafer and Vovk(2008)]{shafer2008tutorial}
Glenn Shafer and Vladimir Vovk.
\newblock A tutorial on conformal prediction.
\newblock \emph{Journal of machine learning research}, 9, 2008.

\bibitem[Tibshirani et~al.(2019)Tibshirani, Foygel~Barber, Cand\`{e}s, and Ramdas]{tibshirani2019conformal}
Ryan~J Tibshirani, Rina Foygel~Barber, Emmanuel~J Cand\`{e}s, and Aaditya Ramdas.
\newblock Conformal prediction under covariate shift.
\newblock \emph{Advances in neural information processing systems}, 32, 2019.

\bibitem[Vovk et~al.(2005)Vovk, Gammerman, and Shafer]{vovk2005algorithmic}
Vladimir Vovk, Alexander Gammerman, and Glenn Shafer.
\newblock \emph{Algorithmic learning in a random world}.
\newblock Springer, 2005.

\bibitem[Wang and Tsai(2022)]{wang2022protection}
Yi-Ren Wang and Yun-Cheng Tsai.
\newblock The protection of data sharing for privacy in financial vision.
\newblock \emph{Applied Sciences}, 12\penalty0 (15):\penalty0 7408, 2022.

\bibitem[Yin et~al.(2024)Yin, Shi, Wang, and Blei]{yin2024conformal}
Mingzhang Yin, Claudia Shi, Yixin Wang, and David~M Blei.
\newblock Conformal sensitivity analysis for individual treatment effects.
\newblock \emph{Journal of the American Statistical Association}, 119\penalty0 (545):\penalty0 122--135, 2024.

\end{thebibliography}

\end{document}